\def\eqref#1{(\ref{#1})}
\def\1{\bm{1}}
\def\vn{{\bm{n}}}
\def\vs{{\bm{s}}}
\def\vx{{\bm{x}}}
\def\evx{{x}}
\def\mA{{\bm{A}}}
\def\mB{{\bm{B}}}
\def\mI{{\bm{I}}}
\def\mS{{\bm{S}}}
\def\mU{{\bm{U}}}
\def\mW{{\bm{W}}}
\def\mX{{\bm{X}}}
\def\mY{{\bm{Y}}}
\DeclareMathAlphabet{\mathsfit}{\encodingdefault}{\sfdefault}{m}{sl}
\SetMathAlphabet{\mathsfit}{bold}{\encodingdefault}{\sfdefault}{bx}{n}
\newcommand{\tens}[1]{\bm{\mathsfit{#1}}}
\def\tS{{\tens{S}}}
\def\tW{{\tens{W}}}
\def\tX{{\tens{X}}}
\def\gG{{\mathcal{G}}}
\def\sR{{\mathbb{R}}}
\def\sV{{\mathbb{V}}}
\newcommand{\R}{\mathbb{R}}
\newcommand{\normlone}{L^1}
\newcommand{\normltwo}{L^2}
\DeclareMathOperator*{\argmax}{arg\,max}
\DeclareMathOperator*{\argmin}{arg\,min}
\title{\mobius: Estimating Structural Vector Autoregression\\Assuming Sparse Input}
\author[1]{Panagiotis Misiakos}
\author[1]{Markus Püschel}
\affil[1]{%
    Computer Science Dept.\\
    ETH Zurich\\
    Zürich, Switzerland
}  
\begin{document}
\maketitle

\begin{abstract}
We introduce \mobius, a novel method for estimating a structural vector autoregression (SVAR) from time-series data under sparse input assumption. Unlike prior approaches using Gaussian noise, we model the input as independent Laplacian variables, enforcing sparsity and yielding a maximum likelihood estimator (MLE) based on least absolute error regression. 
We provide theoretical consistency guarantees for the MLE under mild assumptions. \mobius is efficient: it can leverage GPU acceleration to scale to thousands of nodes. On synthetic data with Laplacian or Bernoulli-uniform inputs, \mobius outperforms state-of-the-art methods in accuracy and runtime. When applied to S\&P 500 data, it clusters stocks by sectors and identifies significant structural shocks linked to major price movements, demonstrating the viability of our sparse input assumption.
\end{abstract}

\section{Introduction}
\label{sec:intro}
Time series arise in numerous applications where multi-dimensional observations are recorded at regular intervals, such as meteorology~\citep{yang2022heatUS}, finance~\citep{kleinberg2013finance, varlingam2023linkagesfinance}, and brain imaging~\citep{smith2011FMRI}. A fundamental challenge in analyzing time series is causal discovery, which seeks to uncover causal dependencies over time~\citep{assaad2022survey, hasan2023causalsurvey}.
If causal effects occur faster than the data’s temporal resolution, they appear instantaneous and can be modeled with a linear structural equation model (SEM)\citep{elementsCausalInference}. When the resolution is higher, they appear as lagged effects, typically captured by vector autoregression (VAR)\citep{kilian2013SVAR}. Regardless of the model, recovering true causal relationships requires additional assumptions, such as the absence of latent confounders, identifiability conditions, or access to interventions~\citep{dyalikedags}, which rarely hold in real-world settings. 
For instance, in financial markets, it is nearly impossible to observe all hidden confounders or directly intervene in stock prices.
Instead of identifying true causal effects, we focus on learning instantaneous and lagged dependencies through a structural vector autoregression (SVAR), which unifies a linear SEM and a VAR~\citep{hyvarinen2010varlingam}.

\paragraph{Structural vector autoregression} 
Originally introduced by~\citet{sims1980comparison}, SVAR has been widely applied in econometrics~\citep{lutkepohl2005new, kilian2013SVAR} and serves as a foundation for causal discovery in time-series data~\citep{pamfil2020dynotears}. SVAR models linear dependencies between variables, distinguishing between instantaneous effects (occurring within the same time step) and lagged effects (propagating over time). The model naturally associates time-series data with a directed acyclic graph (DAG), which encodes how each time step is generated from previous ones. These relationships collectively form the window graph, a DAG that uniquely determines the SVAR parameters. SVAR further assumes stationarity, meaning that the dependencies remain constant over time. 

\paragraph{Challenges and Limitations} Even when abstracting away the need for true causal effects, DAG learning from time-series data remains computationally challenging due to the complexity of temporal dependencies and the high dimensionality of real-world datasets. Theoretically, it generalizes DAG learning from static data, which is already an NP-hard problem~\citep{chickering2004nphard}.
Several methods have been proposed to estimate the weighted window graph from time-series data, including approaches specifically tailored for SVAR~\citep{hyvarinen2010varlingam}. However, many existing methods suffer from critical limitations. Some approaches, such as Granger causality-based methods, learn the summary graph that fails to incorporate time lags~\citep{bussmann2021NAVAR}, while others do not account for instantaneous dependencies~\citep{khanna2019eSRU}.
Most methods face computational challenges when applied to large DAGs, making them impractical for graphs with thousands of nodes~\citep{cheng2024cuts+}.
Structural shocks, i.e., the input variables of an SVAR~\citep{lanne2017MLE_estimation_SVAR} at each node, are often interpreted merely as noise variables in prior work~\citep{hyvarinen2010varlingam, pamfil2020dynotears}, limiting their interpretability and potential insights into the underlying causal mechanisms.
To address these challenges, we introduce a novel, efficient method that enforces sparsity in the input of the SVAR.

\paragraph{\mobius: Sparse Input SVAR}
\citet{hyvarinen2010varlingam} model SVAR under a non-Gaussian noise assumption for the inputs. We extend this by enforcing sparsity in the input, following \citet{misiakos2024fewrootcauses}, and model it as independent Laplacian random variables. The Laplace distribution naturally promotes sparsity~\citep{jing2015sparsematrixfactLaplace} due to its sharp peak at zero and heavy tails. Intuitively, this means a few significant independent events drive the observed data through the SVAR structure. This contrasts with prior work, which typically assumes zero-mean Gaussian input, either explicitly~\citep{lachapelle2019granDAG} or implicitly via mean square error-based optimization objectives~\citep{pamfil2020dynotears, sun2023ntsnotears, tank2021neuralGranger}. By incorporating this Laplacian input model, we derive a maximum likelihood estimator (MLE) based on least absolute error regression, leading to \mobius, a new method for efficient SVAR estimation from time-series data. This framework provides both theoretical and empirical advantages.

\paragraph{Contributions}
Our main contributions are:
\begin{itemize}
    \item We model sparse SVAR input as independent zero-mean Laplacian variables, yielding an MLE formulation for estimating SVAR parameters.
    \item We prove the consistency of this MLE under mild assumptions on the window graph weights.
    \item We introduce \mobius, a regularized MLE framework enabling fast, and accurate SVAR estimation from time-series data.
    \item In synthetic experiments with sparse SVAR input, generated via Laplacian or Bernoulli-uniform distribution as in~\citep{misiakos2024fewrootcauses}, \mobius can learn an associated DAG with up to several thousands of nodes and outperforms various state-of-the-art methods.
    \item On real-world financial data from the S\&P 500 index, we show that the sparse input assumption allows to cluster stocks by sector and identify structural shocks reflecting significant changes in the stock prices.
\end{itemize}

\section{SVAR with Sparse Input}
\label{sec:svar}

We introduce notation, the needed background on SVARs, the motivation for sparsity assumption in the input and its statistical modeling using the Laplace distribution.

\paragraph{Time-series data} A multi-dimensional data vector $\vx_t$, measured at time point $t \in {0,1,\dots,T-1} = [T]$, is written as $\vx_t = (x_{t,1}, x_{t,2}, \dots, x_{t,d}) \in \R^{1\times d}$. A time series consists of a sequence of such data vectors $\vx_0, \dots, \vx_{T-1}$ recorded at consecutive time points. We assume these vectors are stacked as rows in a matrix, representing the entire time series, denoted as $\mX \in \R^{T \times d}$.
When multiple realizations of $\mX$ are available, they are collected as slices of a tensor $\tX \in \R^{N \times T \times d}$. These can obtained by dividing a long time series into smaller segments of length $T$.

\paragraph{Example: stock market} We consider an example of time-series data from the stock market. We collect daily stock values $\vx_t$ for a particular stock index (e.g., S\&P $500$) for, say, 20 years. A time series for one year is denoted with the matrix $\mX$ and $20$ years yield the data tensor $\tX$. 

\paragraph{Model Demonstration}
We impose a graph-based model on the generation of time-series data and first illustrate it with a simple example. Suppose that the vector $\vx_t$ at time $t$ is generated from the previous time step’s data $\vx_{t-1}$ according to the equation:
\begin{equation}
\vx_t = \vx_{t-1}\mB + \vs_t,
\label{eq:VAR_1}
\end{equation}
where $\vs_t$ represents the input variables, commonly referred to as structural shocks~\citep{kilian2013SVAR}, though they have also been described as root causes\citep{misiakos2024fewrootcauses}. Given $\vs_t$, the data $\vx_t$ is fully determined by the matrix $\mB$ through~\eqref{eq:VAR_1}.
The model in~\eqref{eq:VAR_1} is an instance of vector autoregression (VAR)~\citep{kilian2013SVAR}. The $(i,j)$ entry of the matrix $\mB \in \mathbb{R}^{d \times d}$ quantifies the influence of $x_{t-1,i}$ on $x_{t,j}$. This corresponds to the adjacency matrix of a directed graph $\gG = \left(\sV, \mB\right)$, where $\sV$ is the set of nodes enumerated as $\sV = {1,2,...,d}$. The primary objective is to learn $\mB$ from time-series data $\{\vx_t\}_{t\in[T]}$.
The model in~\eqref{eq:VAR_1} is stationary, meaning that $\mB$ remains constant across all time steps. Additionally, it has a time lag of one, as each observation $\vx_t$ depends only on the previous time step $\vx_{t-1}$ and the newly introduced inputs $\vs_t$ at time $t$.

\paragraph{Example} In the stock market example, the stocks ${1, 2, \dots, 500}$ in the S\&P 500 market index would represent the nodes of a graph and $\mB$ would encode the influences between these stocks. The model then would imply that the value $x_{t,i}$ of stock $i$ on day $t$ is determined by the stock values $\vx_{t-1}$ from day $t-1$, combined with a structural shock $s_{t,i}$ representing an event occurring on day $t$.

\paragraph{Structural vector autoregression} An SVAR~\citep{lutkepohl2005new,pamfil2020dynotears} expands the VAR in~\eqref{eq:VAR_1} to the general form with time lag $k$. Namely, we assume there exist adjacency matrices $\mB_{0},\mB_{1},...,\mB_{k}\in\R^{d\times d}$ and  $\vs_t\in\R^{1\times d}$, 
such that $\vx_t = \bm{0}$ for $t<0$ and for $t \in [T]$
\footnote{We provide a stability condition for~\eqref{eq:svar_lag_k} in App.~\ref{appendix:subsec:stability}.} :
\begin{equation}
    \vx_t = \vx_{t}\mB_{0} + \vx_{t - 1}\mB_{1}+ ... + \vx_{t - k}\mB_{k} + \vs_t.
    \label{eq:svar_lag_k}
\end{equation}
The $(i,j)$ entry of $\mB_\tau$ represents the influence of $i$ to $j$ after $\tau$ time steps (i.e., a lag of $\tau$) and $\vs_t$ are the structural shocks.
$\mB_{0}$ represents the \textit{instantaneous} dependencies, while the $\mB_{1},...,\mB_{k}$ represent the \textit{lagged} dependencies. 
The SVAR is \textit{stationary}, since the $\mB_\tau$ do not depend on $t$. Following \citet{pamfil2020dynotears} we assume that $\mB_{0}$ corresponds to a DAG, ensuring that the recurrence~\eqref{eq:svar_lag_k} is solvable for $\vx_t$.

The instantaneous $\mB_{0}$ and lagged dependencies $\mB_{1},...,\mB_{k}$ are collected as block-rows in a matrix $\mW \in\R^{d(k+1)\times d}$ which forms the so-called window graph
depicted with an example in Fig.~\ref{fig:causes-data-window}. 
Note that the window graph is a DAG since the edges go only forward in time.
The problem we aim to solve is to infer the window graph $\mW$ from time-series data under the assumption that there are few significant structural shocks. To achieve this, our approach imposes a sparsity assumption on the input $\vs_t$.

\paragraph{Example}
In the previous stock market example, the matrix $\mB_0$ represents instantaneous influences within the same day, while the other matrices $\mB_\tau$ capture influences across different days. Since stock markets typically react almost instantaneously to new information, one would expect most dependencies to be reflected in $\mB_0$.

\begin{figure}[t]
\vspace{-5pt}
    \centering
    \includegraphics[width=0.95\linewidth]{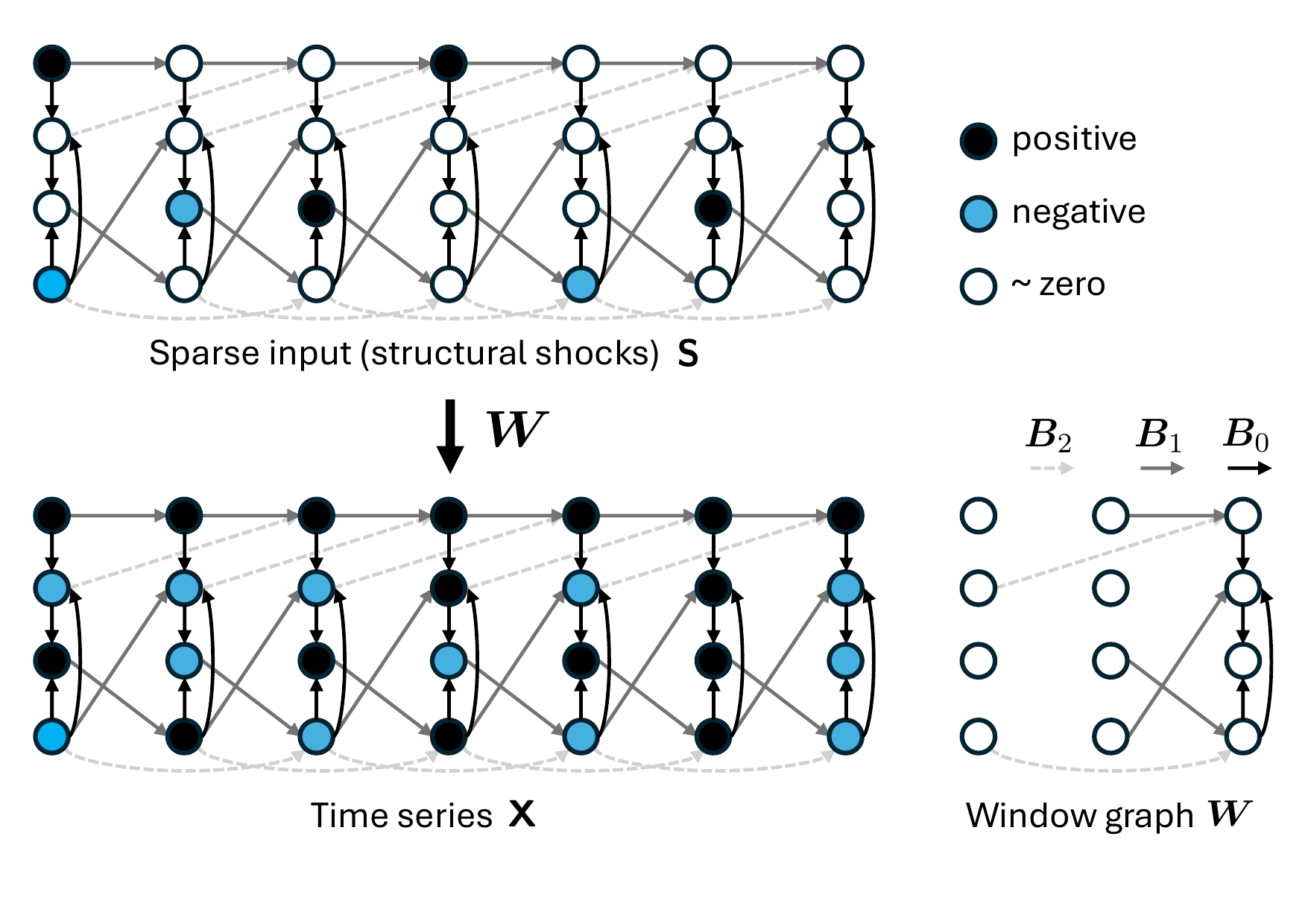}
    \vspace{-15pt}
    \caption{Visualizing an SVAR~\eqref{eq:SVAR} with sparse input $\tS$. Out of $28$ structural shocks in $\tS$ only seven are significant (positive or negative) and the rest are approximately zero. The window graph $\mW$, composed of $\mB_0, \mB_1, \mB_2$, generates the observed dense time series $\tX$ (bottom) via~\eqref{eq:SVAR}.}
    \label{fig:causes-data-window}
    \vspace{-5pt}
\end{figure}

\paragraph{Sparse input} We denote with $\vx_{t,\text{past}} = \left(\vx_t,...,\vx_{t-k}\right),$ $t\in[T]$, the data at previous time steps of $\vx_t$ with lag up to a chosen fixed $k$. Analogously, $\mX_{\text{past}}\in \sR^{T\times d(k+1)}$ contains as rows the vectors $\vx_{t,\text{past}},\,t\in[T]$ and $\tX_{\text{past}}\in\sR^{N\times T\times d}$ contains $N$ realizations of $\mX_{\text{past}}$.
With this notation, the SVAR~\eqref{eq:svar_lag_k} can be written in the following matrix format:
\begin{equation}
    \mX = \mX_{\text{past}}\mW + \mS \Leftrightarrow \tX = \tX_{\text{past}}\mW + \tS.
    \label{eq:SVAR}
\end{equation}
Intuitively, the non-zero values in $\tS$ represent unobserved events that propagate through space (according to $\mB_0$) and also through time $t$ (according to $\mB_1,...,\mB_k$) to generate $\mX$ via~\eqref{eq:SVAR}. 
In Fig.~\ref{fig:causes-data-window} we illustrate the data generation process~\eqref{eq:SVAR}. 
In the upper part, the significant structural shocks $\tS$ are denoted in color, whereas white nodes correspond to (approximately) zero values (noise). 

\paragraph{Example} In our stock market example, the structural shocks $\vs_t$ would represent significant events (big news) that trigger changes in the prices of the stocks at day $t$. Examples include unexpected quarterly results, administrative changes in the company, capital investment, lancing a new product, etc. 
It is intuitive that significant events happen rarely and affect few stocks every day, and thus $\tS$ is sparse. Later, we confirm the sparse input assumption in experiments with real-world financial time series.

\paragraph{Laplace Distribution}
In practical applications, input sparsity can only be approximately satisfied. Therefore, we consider a distribution for $\tS$ that encourages sparsity formation. A natural choice is the $\text{Laplace}(0, \beta)$ distribution, which is characterized by a sharp peak at $0$ and heavy tails~\citep{jing2015sparsematrixfactLaplace}.
\citet{tibshirani1996LASSOregression} introduced the classical LASSO regression by adopting the Laplace prior, leading to the well-known $\normlone$ regularizer that promotes sparsity. The Laplace prior has also been used in Bayesian linear regression\citep{castillo2015bayesiansparseregression}, compressive sensing\citep{babacan2009bayesianComprSensing}, sparse matrix factorization\citep{jing2015sparsematrixfactLaplace}, and sparse principal component analysis (PCA)\citep{guan2009sparsePCA}.
Based on this, we impose the following assumption on $\mS$ and derive its probability density function $f_S$:
\begin{equation}
    \mS_{t,j}\sim\text{Laplace}(0,\beta)\Leftrightarrow f_S(\mS_{t,j}|\beta) = \frac{1}{2\beta}e^{-\frac{\left|\mS_{t,j}\right|}{\beta}}.
    \label{eq:laplace_model}
\end{equation}
We denote the unknown ground truth $\beta$ parameter as $\beta^*$.

\section{Learning the SVAR}
\label{sec:method}

In this section, we establish the identifiability of our setting, derive the Laplacian MLE, prove its consistency, and formulate the proposed optimization framework, \mobius.

\paragraph{Identifiability} 
A fundamental question in causal discovery is whether the graph structure is identifiable from the data~\citep{park2020conditional}. 
Let $\mW^*$ be the ground-truth window graph, and $f_X(\tX | \mW, \beta)$ denote the probability density function of the data, parameterized by $(\mW, \beta)$.
Identifiability means that if $f_X(\tX|\mW, \beta) = f_X(\tX|\mW^*, \beta^*)$, then necessarily $\mW = \mW^*$. 
This ensures that the window graph $\mW^*$ is uniquely determined by the data distribution. 
Theorem~\ref{th:identifiability} establishes the identifiability of $\mW$ and the parameter $\beta$, which is a necessary condition for our consistency result.

\begin{theorem}
    Consider the time-series model~\eqref{eq:SVAR} with $\mS$ following a multivariate Laplace distribution~\eqref{eq:laplace_model} with $\beta^* > \frac{1}{NTd}$.
    Then the adjacency matrices $\mB_{0},\mB_{1},...,\mB_{k}\in\sR^{d\times d}$ and $\beta$ are identifiable from the time-series data $\tX$. 
    \label{th:identifiability}
\end{theorem}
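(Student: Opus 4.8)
The plan is to compute the data density $f_X(\tX\mid\mW,\beta)$ explicitly via the change of variables $\tX\mapsto\tS$, and then read off $\beta$ and $\mW$ from the resulting closed form. First I would observe that, since $\mB_0$ is the adjacency matrix of a DAG, it is nilpotent, so $\mI-\mB_0$ is unipotent with $\det(\mI-\mB_0)=1$; hence \eqref{eq:SVAR} defines a \emph{linear} bijection $\tS(\tX)=\tX-\tX_{\text{past}}\mW$ whose Jacobian is block-triangular in time with diagonal blocks $\mI-\mB_0$, and therefore has unit absolute determinant. By the change-of-variables formula and \eqref{eq:laplace_model},
\[
 f_X(\tX\mid\mW,\beta)=\left(\frac{1}{2\beta}\right)^{NTd}\exp\!\left(-\frac{1}{\beta}\,\lVert\tS(\tX)\rVert_1\right),
\]
where $\lVert\cdot\rVert_1$ is the entrywise $\ell_1$ norm. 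Write $\tS^*(\tX)=\tX-\tX_{\text{past}}\mW^*$ for the residual under the competing parameters.

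Second, assume $f_X(\tX\mid\mW,\beta)=f_X(\tX\mid\mW^*,\beta^*)$ for all $\tX$ (the densities are continuous, so equality a.e.\ is equality everywhere). Taking logarithms and rearranging gives
\[
 \frac{1}{\beta^*}\lVert\tS^*(\tX)\rVert_1-\frac{1}{\beta}\lVert\tS(\tX)\rVert_1 = NTd\,\log\frac{\beta}{\beta^*}=:C\qquad\text{for all }\tX .
\]
Since $\tS$ and $\tS^*$ are \emph{linear} in $\tX$ (the boundary condition $\vx_t=\bm{0}$ for $t<0$ removes any affine term), evaluating at $\tX=\bm{0}$, where both residuals vanish, forces $C=0$. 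Hence $\beta=\beta^*$, and the identity collapses to $\lVert\tS^*(\tX)\rVert_1=\lVert\tS(\tX)\rVert_1$ for all $\tX$. (The hypothesis $\beta^*>1/(NTd)$ serves as a mild nondegeneracy condition on the shock scale.)

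Third, it remains to upgrade the norm identity to $\mW=\mW^*$. Writing $\tS=L\tX$ and $\tS^*=L^*\tX$ for the invertible linear maps above, the substitution $\tY=L\tX$ turns the identity into $\lVert M\tY\rVert_1=\lVert\tY\rVert_1$ for all $\tY$, with $M=L^*L^{-1}$. Thus $M$ is a linear $\ell_1$-isometry; since it must permute the extreme points $\{\pm\ve_c\}$ of the cross-polytope, $M$ is a \emph{signed permutation} matrix. To pin $M$ down I would exploit the causal/acyclic structure. Restricting to the first time slice $t=0$, where $\vx_0=\vs_0(\mI-\mB_0)^{-1}$ and the corresponding block of $M$ closes on itself, one computes that this block equals $(\mI-\mB_0)^{-1}(\mI-\mB_0^*)$ and is itself a signed permutation. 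A LiNGAM-type argument then forces $\mB_0=\mB_0^*$: a nontrivial relabeling or sign flip is incompatible with both $\mI-\mB_0$ and $\mI-\mB_0^*$ having unit diagonal and acyclic (topologically triangular) structure. Once $\mB_0=\mB_0^*$, the residual difference $\tS^*(\tX)-\tS(\tX)=\tX_{\text{past}}(\mW-\mW^*)$ has vanishing instantaneous block, so in the time ordering $M$ is lower-triangular with unit diagonal; the only such signed permutation is the identity, whence $\tX_{\text{past}}(\mW-\mW^*)=\bm{0}$ for all $\tX$ and, as $\tX_{\text{past}}$ spans its ambient space, $\mB_\tau=\mB_\tau^*$ for every $\tau\ge1$, i.e.\ $\mW=\mW^*$.

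I expect the last step to be the crux. The derivation of $\beta=\beta^*$ and the reduction to an $\ell_1$-isometry are routine, but ruling out the signed-permutation ambiguity is precisely the LiNGAM identifiability phenomenon: one must combine acyclicity of $\mB_0$ with the fixed unit diagonal of $\mI-\mB_0$ to exclude any permutation or sign flip of the instantaneous coordinates. I would carry this out by a topological peeling induction, locating a coordinate that the permutation must fix because it is a source (or sink) in both DAGs, removing it, and recursing; this is the step most in need of care.
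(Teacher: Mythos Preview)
Your proposal is correct but follows a genuinely different route from the paper's proof. The paper unrolls the SVAR into a linear SEM on $dT$ nodes and invokes LiNGAM identifiability \citep{shimizu2006lingam} as a black box to conclude $\mW=\mW^*$ (valid because the Laplace shocks are independent and non-Gaussian); only afterwards does it pin down $\beta$, by arguing that $\beta\mapsto f_X(\tX\mid\mW^*,\beta)$ is monotone on $\beta>1/(NTd)$, which is precisely where the hypothesis on $\beta^*$ enters. You invert the order: evaluating the log-density identity at $\tX=\bm{0}$ forces $\beta=\beta^*$ immediately and without using that hypothesis at all, and you then exploit the specific Laplace form to reduce the $\mW$ question to a linear $\ell_1$-isometry, recovering the signed-permutation ambiguity and eliminating it via the acyclic/unit-diagonal structure. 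What your approach buys is a self-contained argument that does not lean on the ICA/Darmois--Skitovich machinery behind LiNGAM (you use the cross-polytope geometry of the Laplace density instead) and a strictly cleaner treatment of $\beta$; what the paper's route buys is brevity, since the crux you correctly identify --- the topological peeling that rules out nontrivial signed permutations of $(\mI-\mB_0)^{-1}(\mI-\mB_0^*)$ --- is exactly the content of the LiNGAM uniqueness argument, which the paper simply cites rather than reproduces.
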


\begin{proof}[Proof sketch]
    We unroll $\mW$ over time into a DAG and rewrite~\eqref{eq:SVAR} as a linear SEM, as explained in~\citep{misiakos2024icassp}. The identifiability then follows from LiNGAM~\citep{shimizu2006lingam}, since $\tS$ follows a Laplacian distribution. The window graph is identified by extracting $\mB_0, \mB_1, \dots, \mB_k$ from the unrolled DAG. The parameter $\beta$ is identified using the monotonicity of the probability density function. A full proof is provided in App.~\ref{appendix:subsec:identifiability}.
\end{proof}

\paragraph{Laplacian MLE} 
The MLE is a fundamental statistical method for estimating model parameters by maximizing the likelihood function $f_X(\tX|\mW, \beta)$ given the observed data $\tX$. Under the Laplacian noise model~\eqref{eq:laplace_model}, the probability density function of $\mX$ is given by (see App.~\ref{appendix:subsec:MLE_computation} for details):
\begin{equation}
    f_X(\tX|\mW,\beta) =
    \frac{\left|\text{det}\left(\mI - \mB_0\right)\right|^{NT}}{(2\beta)^{NdT}}
    e^{-\normii{\tX - \tX_{\text{past}}\mW}/\beta}.
\end{equation}
The MLE seeks to find the optimal parameters by maximizing the likelihood function. Equivalently, we maximize the log-likelihood function $\mathcal{L}\left(\tX|\mW,\beta\right)= \log f_X\left(\tX|\mW,\beta\right)$:
\begin{align}
    \mathcal{L}\left(\tX|\mW,\beta\right) &= NT\log \left|\text{det}\left(\mI - \mB_0\right)\right|  
    - NTd \log(2\beta) \notag  \\&-\frac{1}{\beta} \normii{\tX - \tX_{\text{past}}\mW}.
    \label{eq:loglikelihoodMLE}
\end{align}
Thus, the MLE estimate $\widehat{\mW}$ is given by:
\begin{equation}
    \widehat{\mW} = \argmax_{\mW\in\ml{W}} \mathcal{L}\left(\tX|\mW,\beta\right).
    \label{eq:MLE}  
\end{equation}
A desirable property of the MLE is that $\widehat{\mW} = \mW^*$. Under the assumption of identifiability, this property holds for the population log-likelihood~\citep{newey1994MLEconsistency}, defined as:
\begin{equation}
    \logpop{\mW,\beta} = \expvp{\mW^*,\beta^*}{\loglike{\tX}{\mW,\beta}},
\end{equation}
where $\logpop{\mW,\beta}$ represents the expected value of the log-likelihood function $\mathcal{L}\left(\tX|\mW,\beta\right)$ computed under the ground truth probability density $f_X(\tX|\mW^*, \beta^*)$.  
Intuitively, it corresponds to the log-likelihood if we had access to infinitely many samples.  
The following lemma formalizes this property, with a proof provided in App.~\ref{appendix:subsec:MLE_consistency_background}.
\begin{lemma}
    Assume that the ground truth parameters  $(\mW^*,\beta^*)$ are identifiable from the data distribution $f_X\left(\mX|\mW^*,\beta^*\right)$.  
    Then, the population likelihood $\logpop{\mW,\beta}$ has a unique maximum at $(\mW^*,\beta^*)$.
    \label{lemma:uniqueMLEmaximizer}
\end{lemma}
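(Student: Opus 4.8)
The plan is to prove the lemma through the classical information inequality (Gibbs' inequality): the Kullback--Leibler divergence between two probability densities is nonnegative and vanishes precisely when the two densities agree almost everywhere. The guiding observation is that the gap between the population log-likelihood evaluated at the ground truth and at any other parameter is exactly such a divergence, so maximizing $\logpop{\cdot,\cdot}$ is equivalent to minimizing a KL divergence that is minimized only at the truth.

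Concretely, I would proceed in four steps. First, using the definition $\logpop{\mW,\beta}=\expvp{\mW^*,\beta^*}{\loglike{\tX}{\mW,\beta}}$ and linearity of expectation, I would rewrite the likelihood gap, for any admissible $(\mW,\beta)$, as
\begin{align}
\logpop{\mW^*,\beta^*}-\logpop{\mW,\beta}
&=\expvp{\mW^*,\beta^*}{\log\frac{f_X(\tX\,|\,\mW^*,\beta^*)}{f_X(\tX\,|\,\mW,\beta)}} \notag\\
&=\KL\!\left(f_X(\cdot\,|\,\mW^*,\beta^*)\,\big\|\,f_X(\cdot\,|\,\mW,\beta)\right).
\end{align}
Second, since $-\log$ is strictly convex, Jensen's inequality yields $\KL\ge 0$, hence $\logpop{\mW^*,\beta^*}\ge\logpop{\mW,\beta}$ for every $(\mW,\beta)$, so the ground truth is a global maximizer. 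Third, for uniqueness I would invoke the strict-equality case of Jensen's inequality: the divergence equals zero only if the integrand is almost surely constant, which (because the true density integrates to one) forces $f_X(\tX\,|\,\mW,\beta)=f_X(\tX\,|\,\mW^*,\beta^*)$ for $f_X(\cdot\,|\,\mW^*,\beta^*)$-almost every $\tX$. Fourth, since the true density places positive mass everywhere (the Laplace law has full support) and the model densities are continuous in $\tX$, almost-everywhere equality upgrades to equality everywhere, and the identifiability hypothesis then gives $(\mW,\beta)=(\mW^*,\beta^*)$.

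The step I expect to be most delicate is the passage from the abstract equality case of Jensen's inequality to the parameter equality supplied by identifiability. Two technical points must be checked: (i) integrability, so that the expectations in the display above are finite and the subtraction is legitimate --- this holds because the log-likelihood~\eqref{eq:loglikelihoodMLE} consists of the constant determinant and $\log(2\beta)$ terms plus a multiple of $\normii{\tX-\tX_{\text{past}}\mW}$, an affine functional of $\tX$ whose expectation is finite since the true Laplace model has finite first moments; and (ii) matching the measure-theoretic ``almost everywhere'' conclusion to the pointwise hypothesis of Theorem~\ref{th:identifiability}, which is handled by the full support and continuity of the densities noted above. Once these are in place, the argument closes immediately.
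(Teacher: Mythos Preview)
Your proposal is correct and follows essentially the same route as the paper: both rewrite the gap $\logpop{\mW^*,\beta^*}-\logpop{\mW,\beta}$ as (the KL form of) an expectation of $-\log$ of the density ratio and apply Jensen's inequality, invoking identifiability to make the inequality strict. Your write-up is in fact more careful than the paper's, which applies strict Jensen directly without discussing integrability or the a.e.-to-pointwise upgrade; the paper simply asserts that the ratio is non-constant and concludes.
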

Lemma~\ref{lemma:uniqueMLEmaximizer} implies that with infinite data, the log-likelihood has a unique global maximizer at the ground truth $\mW^*$.  
However, since we only have a finite dataset, we require a stronger result for the empirical log-likelihood $\mathcal{L}\left(\tX|\mW,\beta\right)$.

\paragraph{Consistency of MLE} We prove the consistency of the MLE, which states that as the amount of data increases, $\widehat{\mW}$ converges in probability to $\mW^*$. Formally, we show the following result.
\begin{theorem}
    The maximum log-likelihood estimator~\eqref{eq:MLE} satisfies the conditions of Theorem 2.5 of \citet{newey1994MLEconsistency} and thus is consistent under the following assumptions:
    \begin{itemize}
        \item The space of window graphs is $\ml{W}\subseteq [-1,1]^{d(k+1)\times d}$ and $\mB_0$ acyclic.
        \item $\beta\in [a,b]$ is bounded, with a lower bound $a > 1/NTd$.
    \end{itemize}
    \label{th:consistency}
\end{theorem}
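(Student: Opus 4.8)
The plan is to verify, one by one, the four hypotheses of Theorem~2.5 of \citet{newey1994MLEconsistency}, which guarantees consistency of a maximum-likelihood estimator: (i) the population log-likelihood has a unique maximizer at the truth (identification); (ii) the parameter space $\Theta = \ml{W}\times[a,b]$ is compact; (iii) the per-realization log-likelihood is continuous in $(\mW,\beta)$ for almost every realization; and (iv) a dominance condition $\E\big[\sup_{(\mW,\beta)}|q(\mX,\mW,\beta)|\big]<\infty$, where $q$ is the log-likelihood contribution of a single realization $\mX$ and the asymptotics are taken in the number of realizations $N\to\infty$. Once all four hold, Theorem~2.5 yields $\widehat{\mW}\to\mW^*$ in probability.

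Identification is already in hand: Theorem~\ref{th:identifiability} gives identifiability of $(\mW^*,\beta^*)$, and Lemma~\ref{lemma:uniqueMLEmaximizer} converts this into the statement that $\logpop{\mW,\beta}$ is uniquely maximized at $(\mW^*,\beta^*)$, which is exactly hypothesis (i). For compactness, the box $[-1,1]^{d(k+1)\times d}$ is compact and $[a,b]$ is compact with $a>1/NTd>0$ keeping $\beta$ bounded away from $0$; it remains to check that the acyclicity constraint on the block $\mB_0$ cuts out a closed subset. This holds because the set of weighted matrices with acyclic support is the finite union, over all topological orderings $\pi$ of the $d$ nodes, of the coordinate subspaces on which the entries violating $\pi$ are forced to zero; each such subspace is closed, and a finite union of closed sets is closed. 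Intersecting with the compact box gives a compact $\ml{W}$, so $\Theta$ is compact.

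For continuity I would first record the elementary fact that, on the acyclic region, $\mB_0$ is permutation-similar to a strictly triangular matrix, so $\mI-\mB_0$ is unitriangular and $|\det(\mI-\mB_0)|=1$. Hence the determinant term in~\eqref{eq:loglikelihoodMLE} is constant and the per-realization objective reduces to $q(\mX,\mW,\beta) = -Td\log(2\beta) - \tfrac{1}{\beta}\normii{\mX-\mX_{\text{past}}\mW}$, i.e.\ least-absolute-error regression with a $\beta$-dependent scaling. Both terms are continuous in $(\mW,\beta)$ on $\Theta$ for every realization: $\log(2\beta)$ is continuous for $\beta\ge a>0$, the norm $\normii{\cdot}$ is continuous, and it is composed with the affine map $\mW\mapsto \mX-\mX_{\text{past}}\mW$, giving hypothesis (iii) for all, not merely almost all, $\mX$.

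The main work is the dominance condition (iv), and this is where I expect the real care to be needed. By the triangle inequality, $|q(\mX,\mW,\beta)|\le Td\,\max_{\beta\in[a,b]}|\log(2\beta)| + \tfrac{1}{a}\big(\normii{\mX}+\sup_{\mW\in\ml{W}}\normii{\mX_{\text{past}}\mW}\big)$, and since the entries of $\mW$ lie in $[-1,1]$ one gets $\sup_{\mW}\normii{\mX_{\text{past}}\mW}\le d\,\normii{\mX_{\text{past}}}$. Thus $\sup_\Theta|q|$ is dominated by the data-only envelope $d(\mX)=\text{const}+\tfrac{1}{a}\big(\normii{\mX}+d\,\normii{\mX_{\text{past}}}\big)$, and it remains to show $\E\,\normii{\mX}<\infty$ and $\E\,\normii{\mX_{\text{past}}}<\infty$. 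Solving the SVAR recurrence~\eqref{eq:svar_lag_k} --- using $(\mI-\mB_0^*)^{-1}=\sum_{m\ge0}(\mB_0^*)^m$, a finite sum by nilpotency, and unrolling over the finitely many time steps $t\in[T]$ (the stability condition of App.~\ref{appendix:subsec:stability} controlling this unrolling) --- expresses every entry of $\mX$ as a fixed linear combination of the shock entries $\mS_{t',j}$. Each shock is $\text{Laplace}(0,\beta^*)$ with $\E|\mS_{t',j}|=\beta^*<\infty$, so the envelope is integrable. With (i)--(iv) established, Theorem~2.5 of \citet{newey1994MLEconsistency} applies and gives $\widehat{\mW}\to\mW^*$ in probability, as claimed.
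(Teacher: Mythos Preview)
Your proposal is correct and follows essentially the same route as the paper: verify the four Newey--McFadden hypotheses, obtaining identification from Theorem~\ref{th:identifiability} and Lemma~\ref{lemma:uniqueMLEmaximizer}, compactness of $\Theta$, continuity of the log-likelihood (using $|\det(\mI-\mB_0)|=1$ on the acyclic set), and the dominance bound by unrolling the SVAR so that $\normii{\mX}$ is a fixed linear combination of Laplace shocks with finite first moment. The one noteworthy deviation is your closedness argument for the acyclic constraint: you observe that the acyclic matrices are the finite union, over the $d!$ topological orderings, of coordinate subspaces, whereas the paper uses the NOTEARS characterization $\{\mB_0: h(\mB_0)=0\}=h^{-1}(\{0\})$ with $h$ continuous. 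Both are valid; yours is more elementary and avoids importing an extra result, while the paper's is shorter to state once $h$ is already in play.
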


\begin{proof}[Proof sketch]     
    The proof requires a compact search space for $\mW$, which is satisfied by the given bounds on $\mW$ and $\beta$. Additionally, the set of acyclic matrices is closed, as it can be expressed as the pre-image $h^{-1}(\{0\})$, where $h$ is a continuous function characterizing acyclicity~\citep{zheng2018notears}. Identifiability of $\mW$ and $\beta$ is ensured by Theorem~\ref{th:identifiability}. Finally, the log-likelihood is continuous, and it can be shown that $\sup_{\mW \in \ml{W}} \left|\loglike{\mX}{\mW}\right|$ has finite expectation. Under these requirements, Theorem 2.5 of \citet{newey1994MLEconsistency} then utilizes the uniform law of large numbers to show that $\widehat{\mW}$ converges in probability to $\mW^*$. A full proof is provided in App.~\ref{appendix:subsec:MLE_consistency_proof}.
\end{proof}

\paragraph{Our method \mobius} 
Theorem~\ref{th:consistency} implies that $\widehat{\mW}$ in~\eqref{eq:MLE} converges in probability to $\mW^*$ as $N \to \infty$. Since the parameter $\beta$ is fixed but unknown, we estimate it by maximizing the log-likelihood function~\eqref{eq:loglikelihoodMLE}. Following \citet{ng2020GOLEM}, we compute an estimate $\widehat{\beta}$ by solving:
\begin{equation}
    \frac{\partial \mathcal{L}}{\partial \beta} = 0 \Leftrightarrow  \widehat{\beta} = \frac{1}{NTd}\normii{\tX - \tX_{\text{past}}\mW}.
\end{equation}
This estimate is consistent in expectation. Indeed, it is true that $\expv{\normii{\tX - \tX_{\text{past}}\mW}} = \expv{\normii{\tS}} = NTd\beta^*$.  
Thus, the log-likelihood maximization problem for approximating $\mW$ reduces to (see App.~\ref{appendix:subsec:optimization_derivation} for details):
\begin{align}
        \widehat{\mW} &= \argmax_{\mW\in\ml{W}} \mathcal{L}\left(\tX|\mW,\widehat{\beta}\right)\label{eq:reduced_MLE}\\
    &= \argmin_{\mW\in\ml{W}} \log\normii{\tX - \tX_{\text{past}}\mW} -\frac{1}{d}\log \left|\text{det}\left(\mI - \mB_0\right)\right|. \notag
\end{align}
However, directly minimizing~\eqref{eq:reduced_MLE} over the space of DAGs is computationally inefficient. This would require enforcing a hard DAG constraint to restrict $\mW \in \ml{W}$, as in~\citep{zheng2018notears}, where it is implemented via the augmented Lagrangian method. Such an approach demands careful fine-tuning and can lead to numerical instabilities, as demonstrated by~\citet{ng2020GOLEM}. To overcome these challenges, following~\citet{ng2020GOLEM}, we relax the hard acyclicity constraint and introduce a soft regularizer. This approach maintains strong performance while improving efficiency, as demonstrated in our experiments. The final optimization problem for \mobius is formulated as:
\begin{align}
    \widetilde{\mW} &=\argmin_{\mW\in\sR^{d(k+1) \times d}}  \log\normii{\tX - \tX_{\text{past}}\mW} \label{eq:cont_opt} \\
    &-\frac{1}{d}\log \left|\text{det}\left(\mI - \mB_0\right)\right| 
    + \lambda_1 \|\mW\|_1  + \lambda_2\cdot h\left(\mB_0\right).\notag
\end{align}
The first term in~\eqref{eq:cont_opt} promotes sparsity in the structural shocks, while the remaining terms encourage sparsity in the window graph $\mW$ and enforce acyclicity in $\mB_0$, respectively. The acyclicity regularizer $h\left(\mB_0\right) = e^{\mA \odot \mA} - d$, introduced by~\citet{zheng2018notears}, ensures that $\mB_0$ satisfies the DAG constraint. Notably,~\eqref{eq:cont_opt} is well-suited for GPU acceleration using tensor operations, making it highly efficient in practice.
In our implementation, we represent $\mW$ as the parameter matrix of a (PyTorch) linear layer with $(k+1)d$ inputs and $d$ outputs. The precomputed $\tX_{\text{past}}$ serves as input, and the linear layer’s output is subtracted from the observed data $\tX$. The objective in~\eqref{eq:cont_opt} is then computed and optimized using the Adam optimizer~\citep{kingma2014adam}. More implementation details can be found in App.~\ref{appendix:sec:spinsvar_implementation}.

Since the proposed objective function is non-convex, it may have multiple local optima, and there is no guarantee of convergence to the global maximum. However, in practice, our method performs well and often even recovers the edges of $\mW^*$ without error. This phenomenon, also observed in GOLEM~\citep{ng2020GOLEM}, motivates further theoretical investigation. 

Once $\widehat{\mW}$ is obtained via~\eqref{eq:cont_opt}, we approximate the input $\widehat{\tS}$:
\begin{equation}
    \widehat{\tS} = {\tX} - {\tX}_{\text{past}}\widehat{\mW}.
    \label{eq:root_causes_estimation}
\end{equation}
In recovering $\tS$ from $\widehat{\tS}$, we are particularly interested in identifying significant structural shocks. To this end, we apply thresholding to filter out insignificant values in $\widehat{\tS}$. In our experiments, this threshold is selected based on the synthetic data generation process.

\section{Related Work}
\label{sec:related_work}
\paragraph{Time-series causal discovery} 
Our work falls within the category of continuous optimization methods but differs in its assumption of sparsity in the input of the SVAR. Closely related approaches include functional causal model-based methods such as \varlingam~\citep{hyvarinen2010varlingam}, which estimates an SVAR, as well as TiMINO~\citep{peters2013timino} and NBCB~\citep{assaad2021NBCB}, which recover only the summary graph that disregards time delays~\citep{causal2023temporaloverview}. In contrast, our method learns the full window graph.
Other continuous optimization methods include DYNOTEARS~\citep{pamfil2020dynotears}, NTS-NOTEARS~\citep{sun2023ntsnotears} for non-linear data, and iDYNO~\citep{gao2022idyno} for interventional data. These methods optimize the mean square error loss and do not impose sparsity on the SVAR input. In our experiments, we compare against these methods, as well as others that learn the window graph from observational time-series data, selecting both methodologically relevant approaches and representative alternatives.

Different from our approach, constraint-based methods infer edges using conditional independence tests. Examples include PCMCI~\citep{runge2019PCMCI}, tsFCI~\citep{entner2010tsFCI}, PCMCI+\citep{runge2020pcmci+}, LPCMCI\citep{gerhardus2020lpcmci}, PC-GCE~\citep{assaad2022PC-GCE}, and SVAR-FCI~\citep{malinsky2018SVAR-FCI}. Methods based on Granger causality typically recover only the summary graph. Notable examples include neural Granger causality~\citep{tank2021neuralGranger}, eSRU~\citep{khanna2019eSRU}, GVAR~\citep{marcinkevivcs2020GVAR}, and convergent cross mapping~\citep{sugihara2012CCM}. Another line of work leveraging neural networks includes TCDF~\citep{nauta2019TCDF}, SCGL~\citep{xu2019SCGL}, neural graphical modeling~\citep{bellot2021neuralgraphmodelling}, and amortized learning~\citep{lowe2022amortized}.

\paragraph{Maximum Likelihood Estimator}
By modeling sparsity with a Laplacian distribution, we derive an MLE objective based on least absolute error loss, unlike prior causal discovery methods~\citep{ng2020GOLEM, pamfil2020dynotears, nauta2019TCDF}, which use mean-square loss suited for Gaussian noise. \citet{peters2014identifiability} provide consistency guarantees of the MLE for a linear SEM with equivariant Gaussian errors and GranDAG~\citep{lachapelle2019granDAG} applies it to nonlinear additive noise models. 
However, these methods neither support time-series data nor enforce input sparsity. 
For SVAR estimation, \citet{hyvarinen2010varlingam} propose a generic MLE for non-Gaussian noise but do not integrate it explicitly in the methodology. 
Other MLE approaches for SVAR~\citep{lanne2017MLE_estimation_SVAR, fiorentini2023pseudoMLE_SVAR, maekawa2023pseudo_log_likelihood_nonGauss} remain generic and are not specific for Laplacian or sparse inputs.

\paragraph{Least Absolute Error and Sparsity}
The least absolute error (LAE) loss arises as an MLE when assuming that the SVAR input follows a Laplacian distribution~\cite{chai2019GeneralGaussianDistRegression, li2004fastMLE_LAD}, enforcing sparsity in the model. LAE has been widely used as a regression objective across various fields, including dynamical systems~\citep{jiang2023RLAD_dynamical_systems, he2024LAD_sparse_dynamical_systems}, due to its robustness against outliers compared to mean square error (MSE) loss~\citep{pollard1991asymptoticsLAD, bassett1978asymptotic_theoryLAD, kumar2015regression_model_LAD, narula1999minimumAbsErrorRegression}.
Despite this, the only method that employs LAE regression to enforce sparsity in the input of a linear SEM is SparseRC, proposed by~\citet{misiakos2024fewrootcauses}.
\citet{misiakos2024icassp} extended SparseRC to time-series graph learning by unrolling the window graph into a DAG, requiring the estimation of $(dT)^2$ parameters—rendering it computationally infeasible for our experiments.
Our method advances over SparseRC by formulating a Laplacian MLE to enforce sparse input, providing both consistency guarantees and improved computational efficiency in practice.

\section{Experiments}
\label{sec:experiments}

We compare \mobius to prior state-of-the-art work on learning the window graph $\mW$ from time-series data. Our experiments in this section cover synthetic and real data. Additional experiments are in Appendix~\ref{app:sec:more_experiments}.

\paragraph{Baselines} We compare against functional causal model methods \varlingam, \dlingam~\citep{hyvarinen2010varlingam}, and the GPU-accelerated \clingam~\citep{akinwande2024acceleratedlingam}, continuous optimization methods DYNOTEARS~\citep{pamfil2020dynotears} and SparseRC~\citep{misiakos2024fewrootcauses}, non-linear approaches NTS-NOTEARS~\citep{sun2023ntsnotears} and TCDF~\citep{nauta2019TCDF} and constraint-based methods tsFCI~\citep{entner2010tsFCI} and PCMCI~\citep{runge2019PCMCI}.
Among these, LiNGAM-based methods assume non-Gaussian SVAR input, which yields the most competitive performance but at the cost of higher computational complexity. SparseRC enforces input sparsity but times out; thus, we modify its setup to a smaller unrolled DAG (details in App.~\ref{appendix:exp:sparserc}). The other baselines do not enforce input sparsity. We compare the optimization objective and computational complexity of the baselines and \mobius in App.~\ref{appendix:subsec:comparison_baselines}. For the implementations we use public repositories (App.~\ref{appendix:exp:code_resources}), with hyperparameters tuned via grid search (App.~\ref{appendix:exp:hyperparameter}).


\paragraph{Metrics} We evaluate the unweighted approximation of $\mW$ using the structural Hamming distance (SHD), which counts the edge removals, insertions, and reversals needed to match the ground truth. The structural intervention distance (SID)~\citep{peters201SID} is omitted as it times-out for DAGs with thousands of nodes.  
Additional results in App.~\ref{appendix:exp:additional_metrics} include area under ROC curve (AUROC), F1 score, and normalized MSE (NMSE) for the weighted approximation of $\mW$. We also assess the detection of significant input values $\tS$ using SHD and NMSE for $\widehat{\tS}$.  
For all metrics, we report the mean and standard deviation (shown as shade) in Fig.~\ref{fig:synthetic_plots} over five experiment repetitions. In the real-world stock market dataset, where the ground truth is unknown, evaluation is purely empirical.

\begin{figure*}[t]
    \centering
    \vspace{-3pt}
    \begin{subfigure}{0.20\linewidth}
        \centering
        \includegraphics[width=\linewidth]{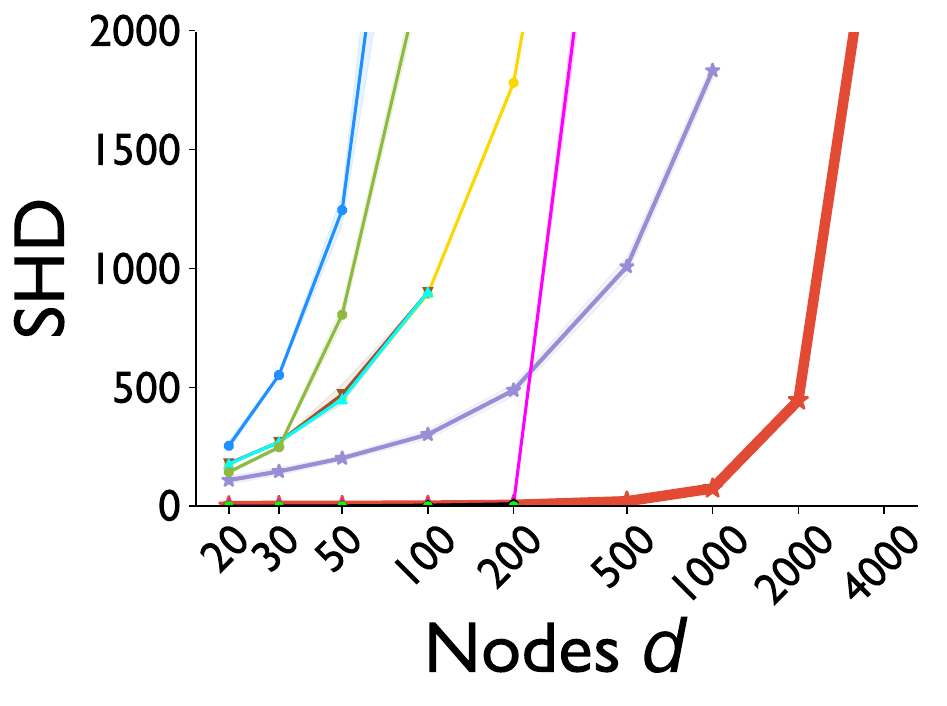}
        
        \includegraphics[width=\linewidth]{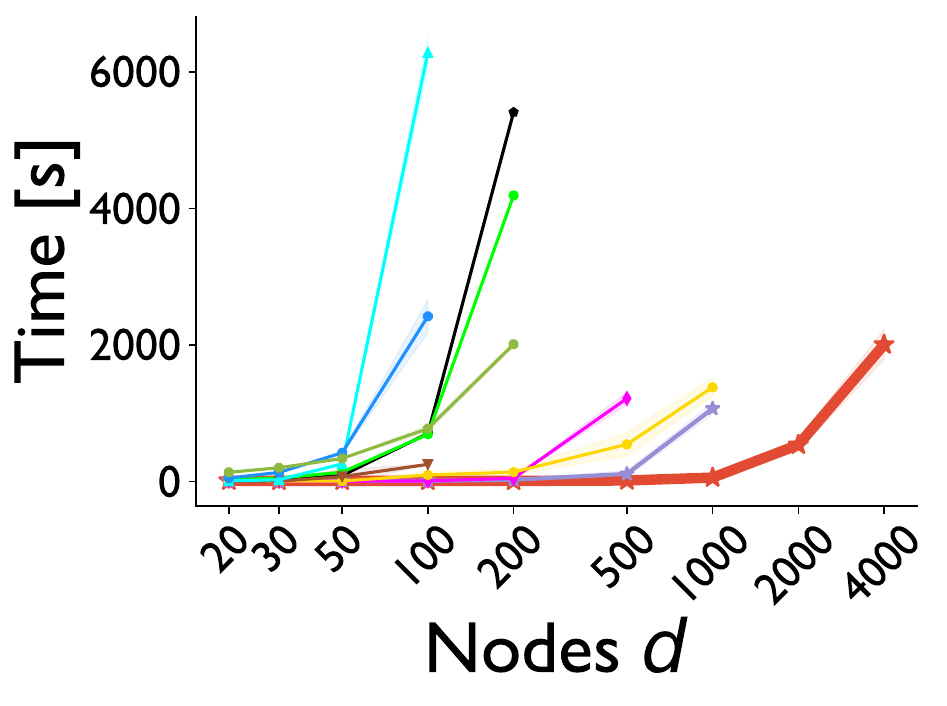}
        \caption{$N=10$, Laplace}
        \label{fig:synthetic_plots:samples10_laplace}
    \end{subfigure}
    \hfill
    \begin{subfigure}{0.20\linewidth}
        \centering
        \includegraphics[width=\linewidth]{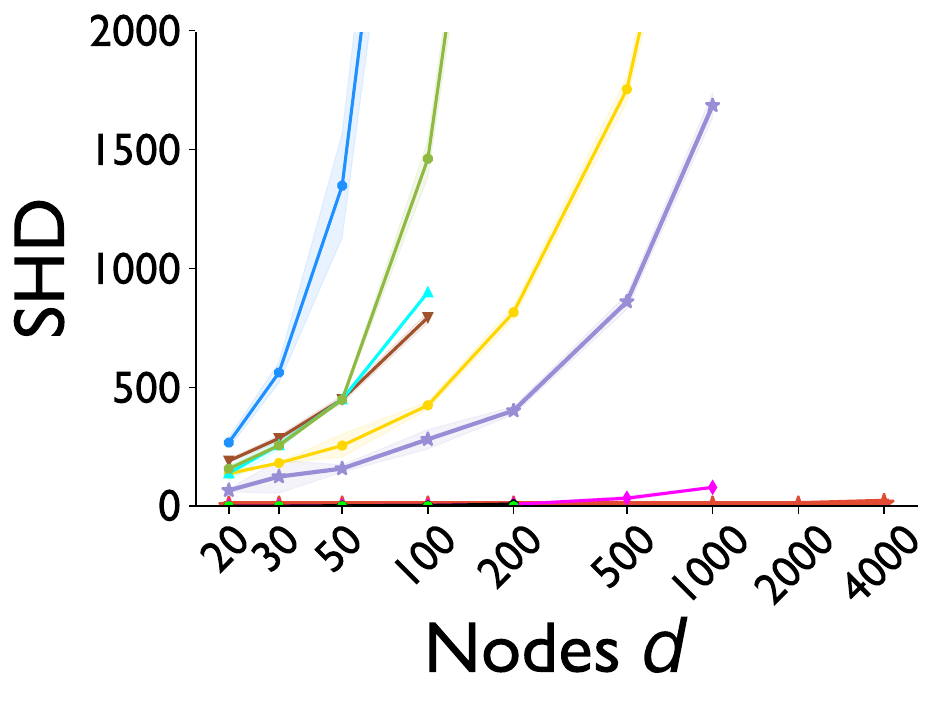}
        
        \includegraphics[width=\linewidth]{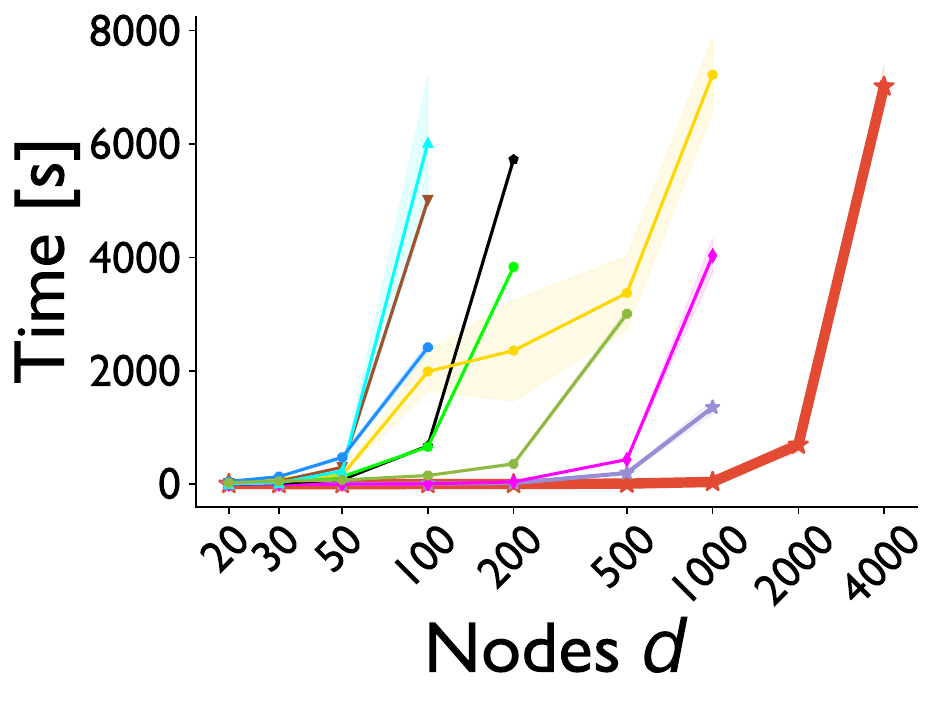}
        \caption{$N=10$, Bernoulli}
        \label{fig:synthetic_plots:samples10_bernoulli}
    \end{subfigure}
    \hfill
    \begin{subfigure}{0.20\linewidth}
        \centering
        \includegraphics[width=\linewidth]{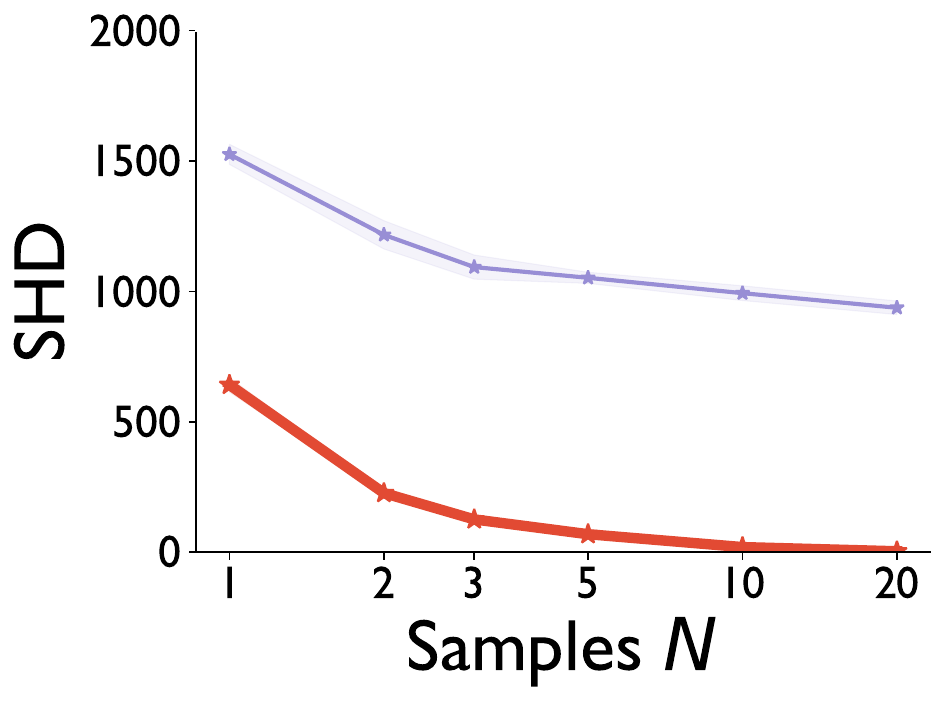}
        
        \includegraphics[width=\linewidth]{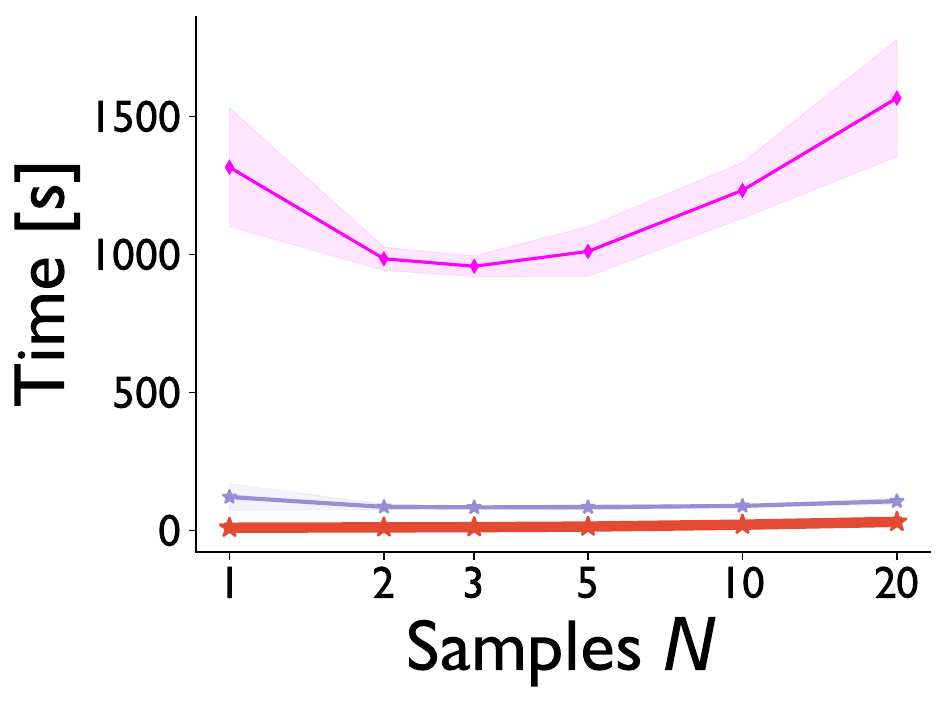}
        \caption{$d=500$, Laplace}
        \label{fig:synthetic_plots:nodes500_laplace}
    \end{subfigure}
    \hfill
    \begin{subfigure}{0.20\linewidth}
        \centering
        \includegraphics[width=\linewidth]{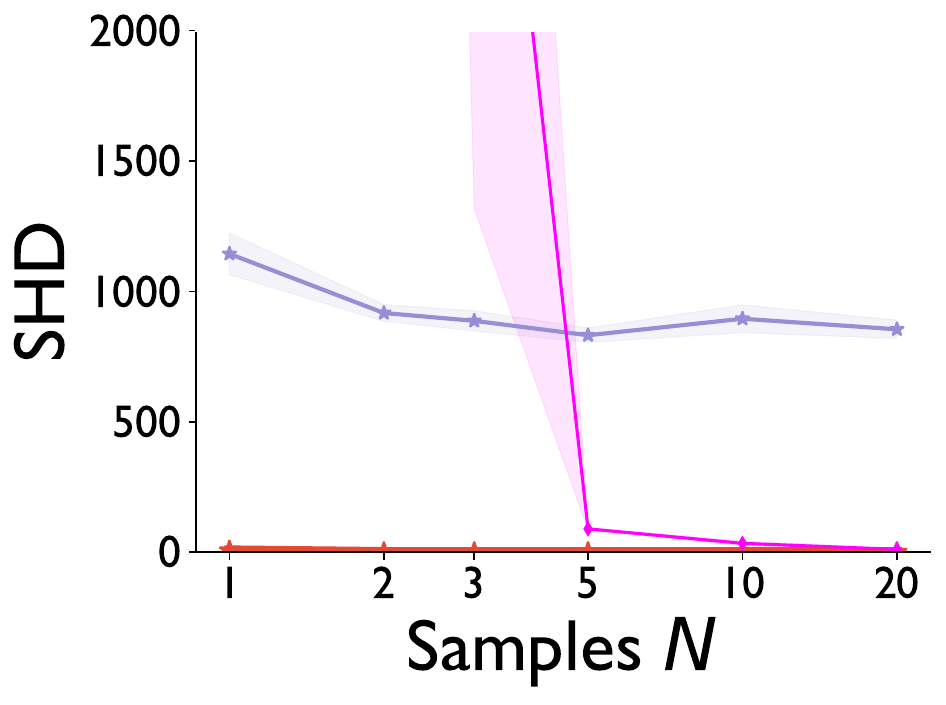}
        
        \includegraphics[width=\linewidth]{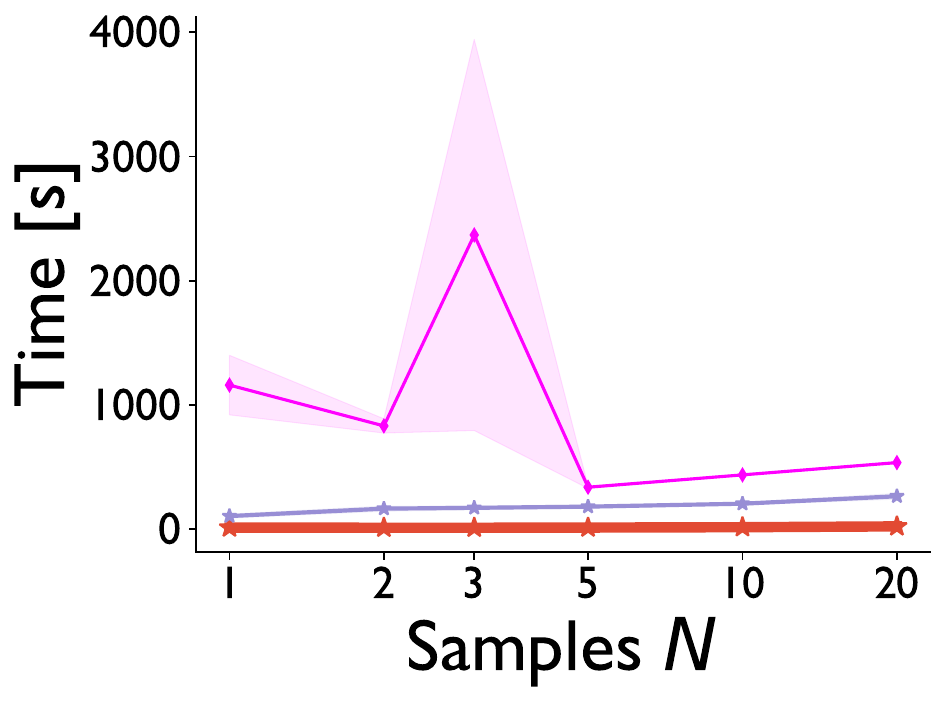}
        \caption{$d=500$, Bernoulli}
        \label{fig:synthetic_plots:nodes500_bernoulli}
    \end{subfigure}
    \hfill
    \begin{subfigure}{0.18\linewidth}
        \hspace{30pt}
        \includegraphics[trim={7.4cm 0 0 0}, clip, width=\linewidth]{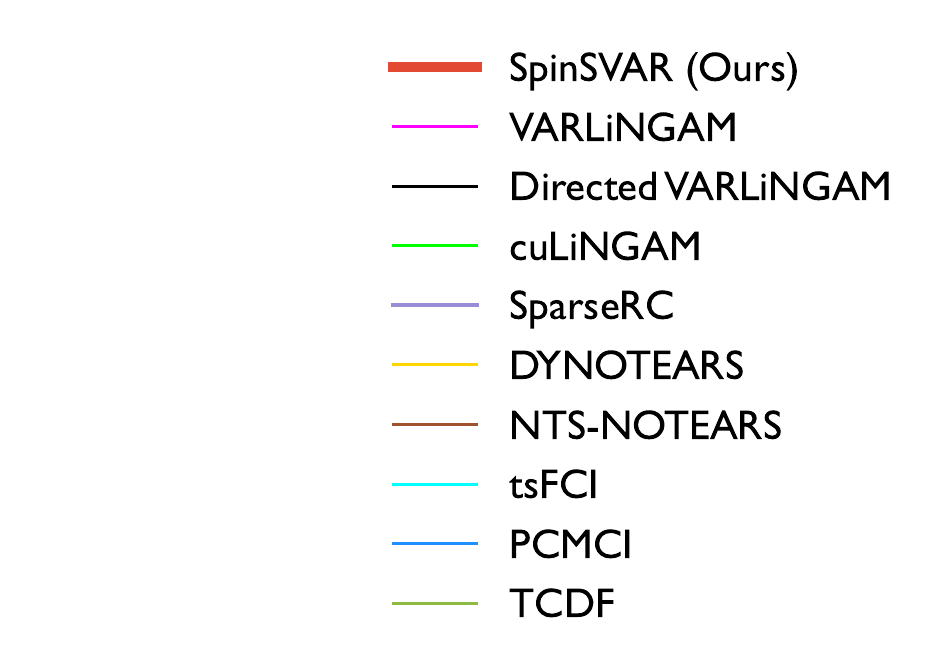}
        \vspace{30pt}
    \end{subfigure}
        \vspace{-15pt}
    \caption{Synthetic experiments. First row SHD (lower is better), second row runtime. (a), (b) consider $N = 10$ samples of time-series with $T = 1000$ and varying number $d$ of nodes for both input distributions. (c), (d) consider $d = 500$ nodes and varying number of samples $N$ of time-series of length $T = 1000$. Any non-reported point implies a time-out (execution time $> 10.000\text{s}\approx 2\text{:}45$h).}
    \label{fig:synthetic_plots}
    \vspace{-10pt}
\end{figure*}

\subsection{Synthetic experiments} 
\label{subsec:synthetic}

\paragraph{Data Generation} 
We generate data using the SVAR model~\eqref{eq:SVAR}, following settings similar to~\citep{pamfil2020dynotears} for the SVAR window graph $\mW$ and to~\citep{misiakos2024fewrootcauses} for the sparse SVAR input $\tS$. First, we set the number of nodes $d$, the length $T$ of the time series, the number of realizations $N$, and the maximum lag $k$ of the SVAR~\eqref{eq:svar_lag_k}. For the window graph $\mW$, we generate directed random Erdös-Renyi graphs for $\mB_0, \mB_1, \dots, \mB_k$, where $\mB_0$ is a DAG with an average degree of $5$, and $\mB_1, \dots, \mB_k$ have an average degree of $2$. We consider a default time lag of $k=2$ and include an additional version with $k=5$ in App.~\ref{appendix:subsec:large_time_lag}. The edges of $\mW$ are assigned uniform random weights from $[-0.5, -0.1] \cup [0.1, 0.5]$. The upper bound of $0.5$ ensures that~\eqref{eq:SVAR} is stable, and the generated data $\tX$ remain bounded in most cases (we discard $\tX$ if its entries become excessively 
large; see App.~\ref{appendix:exp:stability} for details). 

To impose sparsity in $\tS$, we consider two scenarios, using a threshold of $0.1$ to distinguish significant values from approximately zero values in $\tS$. First, we use the Laplacian distribution~\eqref{eq:laplace_model} with $\beta = \frac{1}{3}$, where in expectation only $5\%$ of values are significant (magnitudes greater than $0.1$; see App.~\ref{appendix:subsec:Laplace_properties}). Second, we use a Bernoulli distribution to control the percentage of significant entries in $\tS$~\citep{kalisch2007highDimDAGs, misiakos2024fewrootcauses}: each entry is non-zero with probability $p = 5\%$ (assigned uniform weights from $[-1, -0.1] \cup [0.1, 1]$) or zero otherwise. To create approximate sparsity, we add zero-mean Gaussian noise with a standard deviation of $0.01$ to $\tS$. We refer to this distribution as Bernoulli-uniform or simply Bernoulli.

\paragraph{Results} 
Fig.~\ref{fig:synthetic_plots} presents the results of our synthetic experiments for both sparsity scenarios of $\tS$ (Laplace and Bernoulli). Figs.~\ref{fig:synthetic_plots:samples10_laplace},~\ref{fig:synthetic_plots:samples10_bernoulli} correspond to a fixed number of samples, $N=10$, with the number of nodes $d$ ranging from 20 (180 edges) to 4000 (36,000 edges). In Figs.~\ref{fig:synthetic_plots:nodes500_laplace},~\ref{fig:synthetic_plots:nodes500_bernoulli}, we fix $d=500$ and vary the samples $N$ from 1 to 20. In all cases, the time-series length is $T=1000$. Baselines that are omitted either perform worse or time out.

In Figs.~\ref{fig:synthetic_plots:samples10_laplace},\ref{fig:synthetic_plots:samples10_bernoulli}, \mobius achieves the best performance, recovering $\mW$ nearly perfectly for Bernoulli inputs and handling up to $2000$ nodes for Laplacian inputs while maintaining the best runtime. Its computational complexity is superior to SparseRC, \varlingam, and its variants, and comparable to DYNOTEARS (see App.~\ref{appendix:subsec:comparison_baselines} for details). This efficiency stems from leveraging the sparse input assumption, enabling faster convergence with fewer iterations.
Baseline methods such as PCMCI, tsFCI, TCDF, NTS-NOTEARS, and DYNOTEARS perform poorly even on small graphs. The latter three rely on MSE loss, which is better suited for Gaussian inputs. SparseRC, which enforces sparsity via an LAE loss, exhibits slight improvements but struggles with larger graphs, timing out beyond $1000$ nodes.
The strongest baseline methods are \varlingam and its variants. \varlingam scales better but performs slightly worse, running up to $1000$ nodes before timing out. \dlingam and \clingam yield strong results but time out beyond $200$ nodes. For large graphs with Bernoulli input (Fig.~\ref{fig:synthetic_plots:samples10_bernoulli}), \varlingam remains competitive but is approximately $100$ times slower for $d=1000$.

For varying $N$ (Figs.~\ref{fig:synthetic_plots:nodes500_laplace},~\ref{fig:synthetic_plots:nodes500_bernoulli}), \mobius consistently excels in the Bernoulli case and improves as $N$ increases in the Laplace case. SparseRC performs poorly in both setups. \varlingam struggles with Laplacian input and requires more samples than \mobius in the Bernoulli setup.
Additional results for varying $d$ at fixed $N=1$ 
in App.~\ref{appendix:exp:additional_metrics} confirm these trends.  

\paragraph{Larger graphs} In Table~\ref{tab:sample_complexity} we evaluate \varlingam and \mobius on graphs with up to $d=4000$ nodes, varying the number of samples. These were the only methods that maintained reasonable performance without timing out at $d=1000$. \varlingam struggles with increasing graph sizes, timing out beyond $d=1000$, and requiring significantly more samples for reasonable SHD. In contrast, \mobius achieves strong results with fewer samples, particularly in the Bernoulli case. For Laplacian input, it requires slightly more samples to match that performance. Remarkably, \mobius can nearly perfectly recover a window graph with $3 \times 4000$ nodes (including time lags) and $16 \times 1000$ time points in $6759$s for Bernoulli input. 

\paragraph{Time lag \boldmath$k$} In App.~\ref{appendix:subsec:more_time_lags}, we present additional experiments on the sensitivity of the time lag $k$, showing that \mobius performance remains unaffected as long as it parametrizes a large enough time lag. In real-world datasets, where the true value of $k$ is unknown, we choose a large enough $k$ such that $\mB_k$ is approximately zero, making it highly unlikely that meaningful dependencies exist at even higher lags.

\begin{table}[t]
    \centering
    \caption{SHD report for large DAGs ($T = 1000$).}
    \vspace{-5pt}
    \resizebox{\linewidth}{!}{%
    \begin{tabular}{@{}llllll@{}}
    \toprule
      \mobius \hfill $N=$ & $1$ & $2$ & $4$ & $8$ & $16$\\
    \midrule
        $d=1000$, $\tS\sim$ Laplace & $8.3k$ & $1k$ & $371$ & $112$ & $\boldsymbol{27}$ \\
    $d=1000$, $\tS\sim$ Bernoulli & $2$ & $\boldsymbol{0}$ & $\boldsymbol{0}$ & $\boldsymbol{0}$ & $\boldsymbol{0}$ \\
    $d=2000$, $\tS\sim$ Laplace & $18k$ & $17k$ & $2.1k$ & $645$ & $183$ \\
    $d=2000$, $\tS\sim$ Bernoulli  & $12$ & $\boldsymbol{0}$ & $\boldsymbol{0}$ & $\boldsymbol{0}$ & $\boldsymbol{0}$  \\
    $d=4000$, $\tS\sim$ Laplace & $36k$ & $36k$ & $33k$ & $4.5k$ & $1.2k$ \\
    $d=4000$, $\tS\sim$ Bernoulli & $164$ & $27$ & $15$ & $\boldsymbol{7}$ & $\boldsymbol{9}$  \\
    \midrule
    \varlingam  \hfill $N=$& $1$ & $2$ & $4$ & $8$ & $16$\\
    \midrule
    $d=1000$, $\tS\sim$ Laplace  & $-$ & $-$ & $-$ & $-$ & $-$ \\
    $d=1000$, $\tS\sim$ Bernoulli  & $-$ & $-$ & $-$ & $115$ & $29$ \\
    \bottomrule
    \end{tabular}
    }
    \vspace{-10pt}
    \label{tab:sample_complexity}
\end{table}

\subsection{Application: S\&P 500 stock data}

\paragraph{Dataset} 
We consider stock values from the Standard and Poor's (S\&P) 500 market index. We gather data from March 1st, $2019$, to March 1st, $2024$, focusing only on stocks present in the index throughout this period, leaving $d=410$ stocks as nodes. We collect daily closing values for each stock, resulting in $1259$ time points per stock. The data values are computed as normalized log-returns~\citep{pamfil2020dynotears}, defined for stock $i$ at day $t$ as $x_{t,i} = \log(y_{t+1,i}/y_{t,i})$, where $y_{t,i}$ is the closing value. We partition the time series into shorter intervals of $50$ days length to obtain time-series data $\tX$ of shape $25 \times 50 \times 410$. Using these data, we learn a window graph $\widehat{\mW}$ that captures temporary relations between stocks and the underlying input $\widehat{\tS}$ that generates the data.

\paragraph{Learning Stock Relations} 
We execute all baselines with hyperparameters set according to a simulated experiment shown in App.~\ref{appendix:exp:simulated}. Fig.~\ref{fig:stocks_lag0} shows the \mobius estimate for $\widehat{\mB}_0$, representing instantaneous relations between stocks. A similar figure is discovered by SparseRC, but other baselines did not yield reasonable results with our chosen hyperparameters or those from the published papers (see App.~\ref{appendix:exp:real}). Below, we analyze this result and argue that the sparse input assumption yields interpretable results for financial data.

For better visualization, we focus on the $45$ highest-weighted stocks in the S\&P 500 index. In the execution of \mobius, we set a maximum time lag of $k=2$, but the method discovered that only $\mB_0$ was significant. This aligns with the efficient market hypothesis~\citep{fama1970efficient}, which states that stock prices fully reflect all available information, making past data redundant. Fig.~\ref{fig:stocks_lag0} can be interpreted well: the edges of $\widehat{\mB}_0$ roughly cluster stocks according to their economic sectors. A few outliers arise due to major IT companies being spread across multiple sectors. For example: (i) MSFT influences GOOG and AMZN, (ii) META, AAPL, and MSFT influence AMZN, and (iii) AMZN influences GOOG and MSFT. Notably, the weights of $\widehat{\mB}_0$ are positive, indicating that these stocks positively influence each other: when one increases or decreases, the others do so as well.

\begin{figure}[t]
    \begin{subfigure}{\linewidth}
        \centering
        \includegraphics[width=1.1\linewidth]{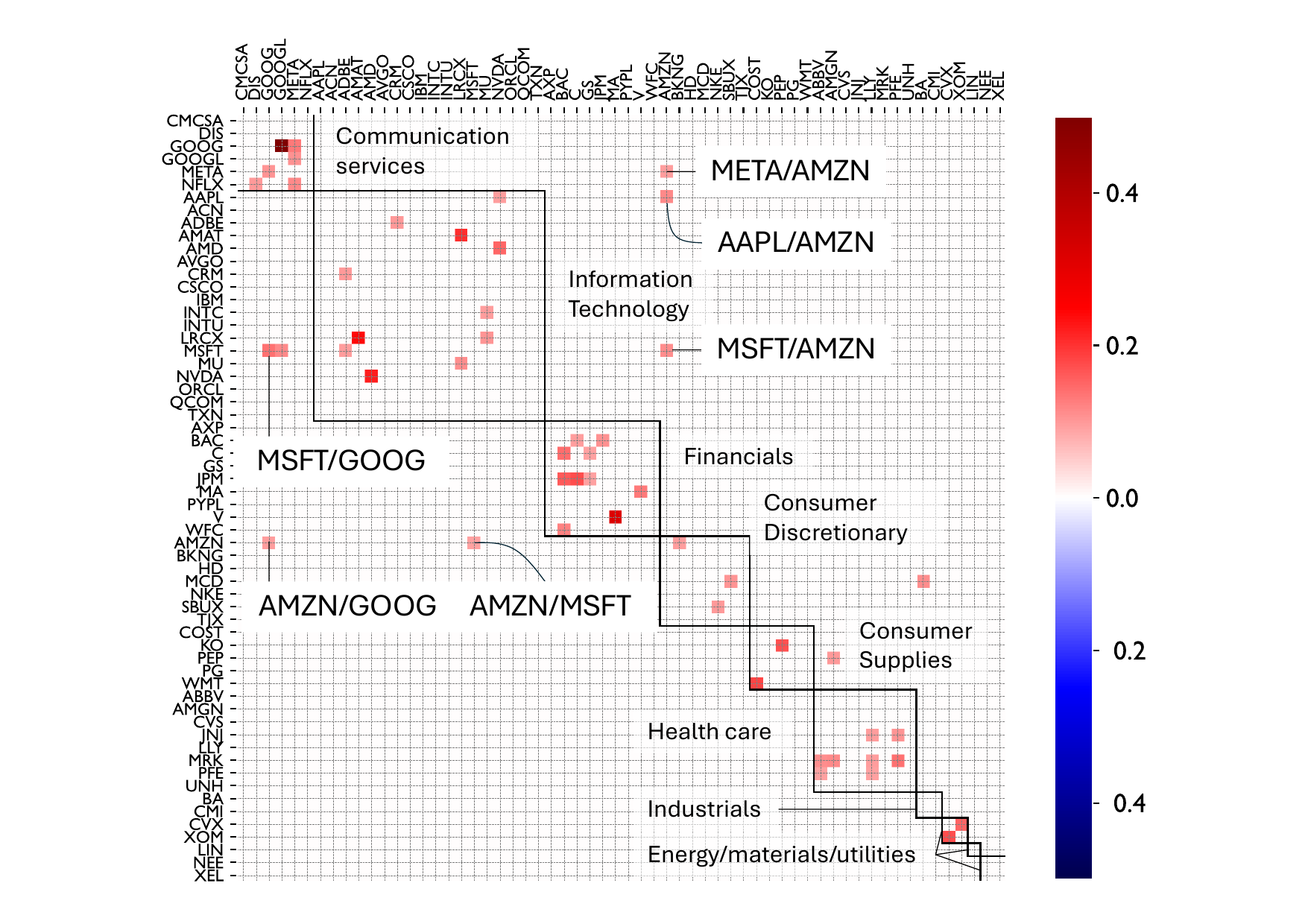}
        \vspace{-23pt}
        \caption{\mobius estimate for $\widehat{\mB}_0$}
        \label{fig:stocks_lag0}
    \end{subfigure}

    \begin{subfigure}{\linewidth}
        \centering
        \includegraphics[width=\linewidth]{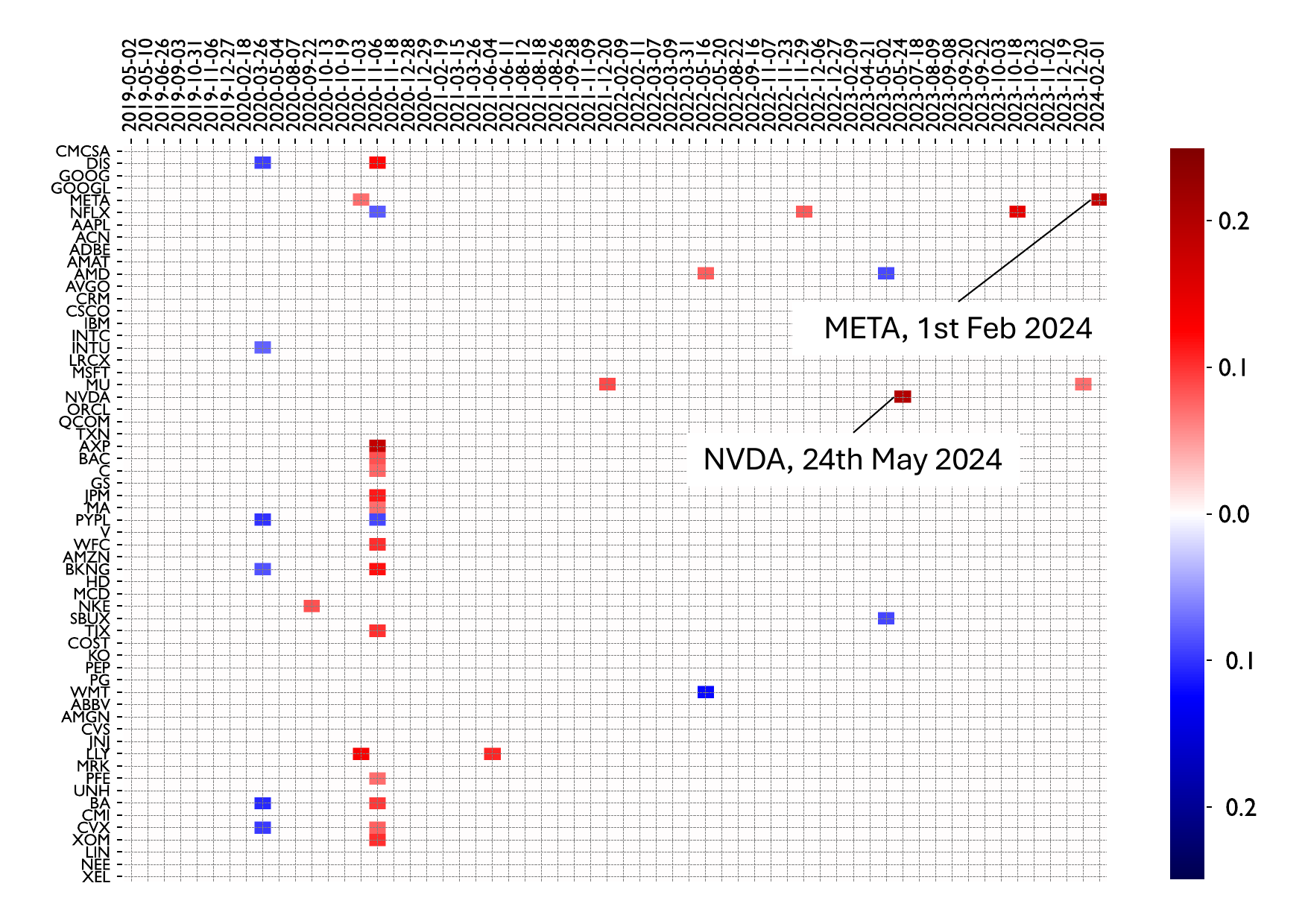}
        \vspace{-22pt}
        \caption{\mobius estimate for $\widehat{\mS}$}
        \label{fig:stocks_rootcauses}
    \end{subfigure}
    \caption{Real experiment on the S\&P 500 stock market index. (a) Instantaneous relations $\widehat{\mB}_0$ between the $45$ highest weighted stocks within S\&P 500, grouped by sectors (squares), and (b) the discovered structural shocks $\widehat{\mS}$ for $60$ days. In (a) the direction of influence is from row to column.}
    \vspace{-10pt}
    \label{fig:real_stocks}
\end{figure}

\paragraph{Learning the input} 
From the window graph approximation $\widehat{\mW}$, we can estimate the input $\widehat{\tS}$ using~\eqref{eq:root_causes_estimation}.  
Fig.~\ref{fig:stocks_rootcauses} presents this estimation for the same 45 stocks across 60 randomly chosen dates.  
As expected, significant input values (structural shocks) correspond to substantial changes in stock prices.  
To investigate this further, we evaluated all input values based on their alignment with stock price changes.  
We say that the input $s_{t,i}$ \textit{aligns} with the change in data if  $s_{t,i} \left( x_{t+1,i} - (1 + s_{t,i}/2)x_{t,i} \right) > 0$.
For example, if $s_{t,i} = 0.1$ aligns with the data change,  
then $x_{t+1,i}$ is at least $1.05$ times $x_{t,i}$.  
Considering the most significant $\approx 1\%$ of the $NTd = 512\text{,}500$ input entries of $\widehat{\tS}$ results in a threshold of $0.07$ and amounts to $4\text{,}656$ significant structural shocks, out of which $99.5\%$ align with stock value changes.  
Thus, whenever a structural shock occurs at day $t$,  
the stock price at day $t + 1$\footnote{The structural shock effect happens on the next day as the data we consider are the log returns of stock prices.}  
will increase if the value is positive (red) or decrease if the value is negative.  

\paragraph{News and dividends} 
We conjecture that structural shocks primarily capture significant unexpected events. For instance, META had a positive structural shock of $+0.18$ on February 1, 2024 (Fig.~\ref{fig:stocks_rootcauses}) and the same day it announced that it would pay dividends for the first time~\citep{paul2024facebook}. Similarly, NVDA experienced a $+0.20$ structural shock on May 24, 2023, coinciding with an upward sales forecast revision due to rising AI demand \citep{mehta2023nvidia}.
In contrast, significant, but expected, stock value changes like dividends deducted on the ex-dividend date are unlikely to generate structural shocks. Our dataset contains 3,796 dividend payments, yet only 36 coincided with a negative structural shock, supporting this conjecture.





\section{Limitations} 
\mobius inherits limitations of structure learning based on SVAR, which implies a linear and stationary model. The directed edges found are not necessarily true causal relations; establishing those would require further assumptions. We implicitly assume no undersampling: the measurement frequency is at least as high as the causal effects frequency. This may affect the stock market experiment where we used daily measurements, but stock market effects happen within split seconds. 
In addition, we assume that there are no missing values in the data and the measurements on each node are taken with the same frequency.
Also, while we can learn DAGs with up to thousands of nodes, very large graphs beyond that are still out of reach.
Finally, our work is designed specifically for sparse SVAR input. 
In App.~\ref{appendix:exp:simulated},\ref{appendix:exp:dream} we include experiments on a simulated financial dataset and the Dream3 gene expression dataset. While our method performs competitively it is not best, potentially because the sparse input assumption (or even linearity) is violated.




\section{Conclusion}\label{sec:conclusion}

We proposed \mobius, a novel method for estimating SVARs from time-series data under the assumption of sparse input. By modeling the input as independent Laplacian variables, \mobius is formulated as a maximum likelihood estimator based on least absolute error regression. Our method is supported by theoretical consistency guarantees and demonstrates superior performance over the state-of-the-art in experiments with synthetic and real-world financial datasets. The results highlight the utility of the sparse input assumption in uncovering interpretable structures and identifying significant events in real-world time-series data. This work opens avenues for future research in leveraging sparse input SVARs for causal discovery in time series.

\newpage

\onecolumn

\title{\mobius: Estimating Structural Vector Autoregression\\Assuming Sparse Input\\
(Supplementary material)}
\maketitle

\appendix

\section*{Ethics}
\mobius inherits the broader impact of other DAG learning methods from time series. From an ethical viewpoint, the methodology is generic and poses no specific potential risk.

\section*{Reproducibility} 
We acknowledge the importance of reproducibility and here we explain the actions that we took towards a more effortless reproduction of our results. 

\paragraph{Code} We provide our code written in Python $3.9$ as supplementary material and will make it available on github upon acceptance. 
In the README.md file, we explain the Python environment installation, how the code can be executed, and provide a Jupyter notebook demonstrating a synthetic experiment.
More importantly, our code not only provides an implementation of our method but rather the whole experimental pipeline, showing how the data are generated and how the baselines are applied.

\paragraph{Data} The sparse input SVAR data generation can be executed using our code or reproduced according to the parameters explained in the experimental section of the main text and the details in Appendix~\ref{appendix:exp:stability}. 
For the simulated financial and the S\&P 500 data we provide in Appendix~\ref{appendix:exp:data_resources} the sources to download them.

\paragraph{Methods} We have explained in great detail in the main text the optimization problem solved by \mobius and the adapted version of SparseRC that we use for fair comparison, also explained in Appendix~\ref{appendix:exp:sparserc}. 
For the execution of all baselines, we use publicly available repositories listed in~\ref{appendix:exp:code_resources} with hyperparameters set as shown in~\ref{appendix:exp:hyperparameter}.
Competitor methods can also be executed using the provided code.

\section{Mathematical Proofs and computations}
\label{sec:appendix:proofs}

In this section we provide all the proofs of technical results used in the manuscript.

\subsection{SVAR stability}
\label{appendix:subsec:stability}

Whenever a measurement can be taken in a system, stability in the measured data holds by definition. 
For example, temperature measurements or stock price markets are never unbounded. 
To ensure that the same happens for synthetic data, one needs to guarantee the stability of the data generation process. 
A few prior works mention stability~\citep{gong2015subsampledNGEM, khanna2019eSRU, bellot2021neuralgraphmodelling, malinsky2018SVAR-FCI}, and here we want to acknowledge its importance.

Equation~\eqref{eq:SVAR} can be viewed as a discrete-time multi-input multi-output (MIMO) system~\citep{skogestad2005multivariablecontrol}, in which the input is the structural shocks $\mS$ and the output is the time-series data $\mX$.
As the time-series length $T$ in~\eqref{eq:svar_lag_k} increases, the values of $\mX$ can get arbitrarily large. 
We desire to find a range of weights for the matrices $\mB_0,\mB_1,...,\mB_k$ that guarantees that our time-series data are bounded. 
In particular, we require a condition for the bounded-input bounded-output (BIBO) stability of this system. 
This has been already considered by~\citet{lutkepohl2005new} (linear case, for non-linear refer to~\citep{saikkonen2001stability}). 
The proposed condition requires the roots of the reverse characteristic polynomial to have a modulus less than $1$. 
Here, we prove a practical and intuitive condition for stability as a derivation of the~\citep{lutkepohl2005new} result. 

\paragraph{Transitive closure} To begin, we introduce the definition of the weighted transitive closure of the unrolled DAG~\eqref{eq:unrolledDAG}.
\begin{equation}
    \widetilde{\mX} =\widetilde{\mX}\mA + \widetilde{\mS} \Leftrightarrow \widetilde{\mX} = \widetilde{\mS}\left(\mI - \mA\right)^{-1} = \widetilde{\mS}\left(\mI + \transclos{\mA}\right),
    \label{eq:transitive_closure}
\end{equation}
On the right hand~\eqref{eq:transitive_closure} $\transclos{\mA} = \mA + ... + \mA^{dT - 1}$ is the weighted transitive closure~\citep{bastiJournalpaper} of the unrolled DAG $\mA$. 

\paragraph{Stability of model~\eqref{eq:SVAR}} 
We will now prove Theorem~\ref{appendix:th:stabilitySEM} that we are interested in. 
This provides a sufficient condition under which the model~\eqref{eq:SVAR} is BIBO stable. 
BIBO stability here means that if the input $\mS$ is bounded, then so are the output measurements $\mX$.

\begin{theorem} The model~\eqref{eq:SVAR} is BIBO stable if for some (sub-multiplicative) matrix norm $\|\cdot\|$:
\begin{equation*}
     \norm{\mW}< 1
\end{equation*}
\label{appendix:th:stabilitySEM}
\end{theorem}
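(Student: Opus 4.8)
The plan is to prove stability through the unrolled‑DAG representation \eqref{eq:transitive_closure}, turning the boundedness of $\mX$ into a uniform (in $T$) bound on the transitive closure $\transclos{\mA}$. Writing the whole time series as the single linear SEM $\widetilde{\mX} = \widetilde{\mX}\mA + \widetilde{\mS}$, where $\mA$ is the unrolled adjacency whose block‑columns are copies of $\mW$, I would first record that $\mB_0$ acyclic makes $\mA$ the adjacency of a DAG, hence nilpotent with $\mA^{dT}=0$. This is exactly what guarantees $(\mI-\mA)^{-1}=\mI+\transclos{\mA}$ in \eqref{eq:transitive_closure} and lets me write $\widetilde{\mX}=\widetilde{\mS}(\mI+\transclos{\mA})$. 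BIBO stability is then equivalent to bounding $\norm{\mI+\transclos{\mA}}$ independently of the horizon $T$.

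The core step is the structural identity $\norm{\mA}=\norm{\mW}$ for a conveniently chosen sub‑multiplicative norm. Because the SVAR is stationary with lag $k$, the matrix $\mA$ is block‑banded Toeplitz: each interior block‑column of $\mA$ stacks exactly the blocks $\mB_0,\dots,\mB_k$ that constitute $\mW$, while boundary block‑columns contain only a subset of them. Choosing the induced $1$‑norm (maximum absolute column sum), every interior scalar column of $\mA$ has the same absolute sum as the corresponding column of $\mW$, and the truncated boundary columns can only be smaller. Hence $\norm{\mA}=\norm{\mW}<1$ with a value that does not grow with $T$. This is precisely where avoiding the naive factor $k+1$ (which would otherwise arise from overlapping lag windows) matters, and where the freedom ``for some sub-multiplicative norm'' in the statement is used to pick this particular norm.

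With $\norm{\mA}=\norm{\mW}<1$ in hand, submultiplicativity gives the geometric bound $\norm{\transclos{\mA}}\le\sum_{j\ge1}\norm{\mA}^{j}=\norm{\mW}/(1-\norm{\mW})$, uniformly in $T$. Therefore $\norm{\widetilde{\mX}}\le\norm{\widetilde{\mS}}\,(1+\norm{\transclos{\mA}})=\norm{\widetilde{\mS}}/(1-\norm{\mW})$, so a bounded input $\mS$ forces a bounded output $\mX$ with a horizon‑independent constant, which is the definition of BIBO stability. Translating back to the per‑time‑step description \eqref{eq:SVAR} yields the intuitive estimate $\sup_t\norm{\vx_t}\le(1-\norm{\mW})^{-1}\sup_t\norm{\vs_t}$.

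The main obstacle I anticipate is the $T$‑uniformity in the second paragraph: one must verify carefully that the block‑Toeplitz structure of $\mA$ makes the chosen operator norm of the $dT\times dT$ matrix collapse exactly to the small quantity $\norm{\mW}$ for every $T$, with the truncated boundary block‑columns never exceeding the interior ones, and that the nilpotency coming from the acyclicity of $\mB_0$ is what keeps the Neumann series $\mI+\transclos{\mA}$ finite for each fixed $T$ while the geometric majorant stays bounded as $T\to\infty$. Everything else — submultiplicativity, the geometric series, and passing from $\widetilde{\mX}$ back to $\vx_t$ — is routine.
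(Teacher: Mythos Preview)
Your proposal follows the same route as the paper: pass to the unrolled adjacency $\mA$, use the structural identity $\norm{\mA}=\norm{\mW}$ coming from the block-Toeplitz shape, and bound $\norm{\mI+\transclos{\mA}}$ by the geometric series $\sum_{j\ge 0}\norm{\mW}^{j}=(1-\norm{\mW})^{-1}$ uniformly in $T$. Your explicit column-sum justification of $\norm{\mA}=\norm{\mW}$ under the induced $1$-norm is in fact more detailed than the paper's one-line assertion of that identity, but the overall argument is the same.
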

\begin{proof}
    If $\norm{\mW} = λ< 1$ then from the structure of $\mA$ also $\norm{\mA} = \norm{\mW} = λ < 1$.
    Therefore:
    \begin{equation*}
        \norm{ \mI + \transclos{\mA}} = \norm{\mI + \mA + ... + \mA^{dT- 1}} \leq \sum_{t=0}^{dT-1}\norm{\mA}^t\leq \sum_{t=0}^{dT- 1} λ^t \leq \sum_{t=0}^{\infty} λ^t = \frac{1}{1-λ}=M
    \end{equation*}
    Thus 
    \begin{align*}
        \lim_{T\to \infty}\norm{\mX} &= \lim_{T\to \infty}\norm{\left(\mI + \transclos{\mA}\right)\mS}\\
        &\leq  \lim_{T\to\infty}\norm{\mI + \transclos{\mA}}\norm{\mS } \\
        &\leq M \norm{\mS }
    \end{align*}
    This implies that $\norm{\mX}$ is bounded for all $T$ and the model~\eqref{eq:SVAR} is BIBO stable. 
\end{proof}

\paragraph{Example} Consider the induced $L^{\infty}-$norm as $\normi{\mA} = \max_{j} \sum_{i=1}^d |a_{ij}|$.
The induced $L^{\infty}-$norm is sub-multiplicative and thus Theorem~\ref{appendix:th:stabilitySEM} can be utilized. 
In fact it can be proved that any induced vector norm is sub-multiplicative (Theorem 5.6.2 in~\citep{horn2012matrixanalysis}). 
Then, condition $\normi{\mW} < 1$ translates to all outcoming weights (rows of the window graph matrix) having the sum of absolute values less than $1$. 

For the sake of completeness, we provide a proof of the submultiplicativity property of the $L^{\infty}-$norm in Lemma~\ref{appendix:lemma:submultiplicative}. 

\begin{lemma}
    The induced $L^{\infty}-$norm is submultiplicative.
    \label{appendix:lemma:submultiplicative}
\end{lemma}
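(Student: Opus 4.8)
The plan is to prove the inequality $\normi{\mA\mB} \le \normi{\mA}\,\normi{\mB}$ directly from the definition $\normi{\mA} = \max_{j} \sum_{i=1}^d |a_{ij}|$, rather than invoking the general fact (Theorem 5.6.2 in~\citep{horn2012matrixanalysis}) that every induced norm is submultiplicative. The only tools required are the triangle inequality for absolute values and a reordering of finitely many sums, so the argument is completely self-contained.

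First I would fix an arbitrary column index $k$ and bound the $k$-th column sum of the product. Writing the entries as $(\mA\mB)_{ik} = \sum_{l} a_{il} b_{lk}$, the triangle inequality gives $\sum_i |(\mA\mB)_{ik}| \le \sum_i \sum_l |a_{il}|\,|b_{lk}|$, and interchanging the order of these finite sums yields $\sum_l |b_{lk}| \bigl(\sum_i |a_{il}|\bigr)$. Next I would observe that the inner factor $\sum_i |a_{il}|$ is the $l$-th column sum of $\mA$, hence at most $\normi{\mA}$, which pulls out as a constant to give $\sum_i |(\mA\mB)_{ik}| \le \normi{\mA}\sum_l |b_{lk}|$. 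The remaining factor $\sum_l |b_{lk}|$ is the $k$-th column sum of $\mB$ and is therefore bounded by $\normi{\mB}$. Thus $\sum_i |(\mA\mB)_{ik}| \le \normi{\mA}\,\normi{\mB}$ holds for every $k$, and taking the maximum over $k$ gives $\normi{\mA\mB} \le \normi{\mA}\,\normi{\mB}$, as desired.

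There is essentially no hard step here, since the statement is elementary; the only point needing care is to apply the triangle inequality \emph{before} interchanging the sums (so that the absolute value remains inside) and to keep track of which index ranges over columns, so that both bounds produce column-sum quantities matching the definition of $\normi{\cdot}$ rather than accidental row sums. As an equally short alternative I could first verify that $\normi{\cdot}$ coincides with the operator norm induced by the corresponding vector norm and then conclude from the standard chain $\norm{\mA\mB\vx} \le \norm{\mA}\,\norm{\mB\vx} \le \norm{\mA}\,\norm{\mB}\,\norm{\vx}$; I would nonetheless keep the direct computation as the primary argument, as it requires no auxiliary characterization lemma.
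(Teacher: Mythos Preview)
Your proof is correct and follows essentially the same route as the paper's: a direct elementary computation using the triangle inequality, an interchange of the two finite sums, and then bounding the inner column sum by the norm before bounding the outer one. The only cosmetic difference is that the paper's displayed proof actually works with row sums (fixing a row index $i$ and summing over $j$), whereas you work with column sums (fixing a column index $k$ and summing over $i$); this is just the transpose of the same argument and reflects an inconsistency already present in the paper between the definition given in the surrounding text and the proof of the lemma.
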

\begin{proof}
    Consider any two square matrices $\mA,\mB \in \R^{d\times d}$. We need to show that $\norm{\mA\mB}\leq \norm{\mA}\norm{\mB}$. Indeed, 
    \begin{align*}
        \norm{\mA\mB} &= \max_{i} \sum_{j=1}^d\left|\sum_{k=1}^da_{ik}b_{kj}\right| \\
        & \leq \max_{i} \sum_{j=1}^d\sum_{k=1}^d\left|a_{ik}b_{kj}\right| \\
        & = \max_{i} \sum_{k=1}^d\sum_{j=1}^d\left|a_{ik}\right|\left|b_{kj}\right| \\
    \end{align*}
    \begin{align*}
        & = \max_{i} \sum_{k=1}^d\left|a_{ik}\right|\left(\sum_{j=1}^d\left|b_{kj}\right| \right)\\
        & \leq \max_{i} \sum_{k=1}^d\left|a_{ik}\right|\left(\max_{k}\sum_{j=1}^d\left|b_{kj}\right| \right)\\
        & = \max_{i} \sum_{k=1}^d\left|a_{ik}\right|\norm{\mB}\\
        & \leq \norm{\mA}\norm{\mB}\\
    \end{align*}
\end{proof}

\paragraph{Example} The $L^{\infty}-$norm is particularly interesting for our scenario as the condition of Theorem~\ref{appendix:th:stabilitySEM} provides an intuitive interpretation for the weights.
Consider our stock market example. 
Then the condition in~\ref{appendix:th:stabilitySEM} means that for every stock that affects a set of other stocks, each with some factor $<1$, the total sum should be less than $1$. 
Of course, this is only a sufficient condition for the data to be bounded, but we believe that it is meaningful to consider that the influences between stocks are of this form in reality.
To understand better why the condition in~\ref{appendix:th:stabilitySEM} provides bounded data, we can think about it in the following way.
When the $L^{\infty}-$norm is bounded, the total effect of a stock is divided into individual fractions that affect other stocks and doesn't get iteratively increased (which could be the case with sum $L^{\infty}-$norm $>1$). Bounding the sum of outcoming weights to $1$ has also been considered in~\citep{bastiJournalpaper, misiakos2024fewrootcauses} in the scenario of pollution propagation in a river network. 

We further include another submultiplicative property, that we later use on our proofs.

\begin{lemma}
    The $L^1-$norm, defined as sum of absolut values of the entries of a matrix is submultiplicative.
    \label{appendix:lemma:submultiplicativeL1}
\end{lemma}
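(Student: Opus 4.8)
The plan is to mirror the index computation already carried out for the induced $L^{\infty}$-norm in Lemma~\ref{appendix:lemma:submultiplicative}, since the entry-wise $L^1$-norm behaves even more simply under multiplication. Fix two matrices $\mA, \mB$ of compatible dimensions and write $\normii{\mA} = \sum_{i,k} |a_{ik}|$ for the sum of absolute values of the entries. The goal is to establish $\normii{\mA\mB} \leq \normii{\mA}\,\normii{\mB}$, so I would proceed purely by direct estimation of the defining double sum.

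First I would expand the $(i,j)$ entry of the product as $(\mA\mB)_{ij} = \sum_k a_{ik} b_{kj}$ and apply the triangle inequality entrywise, giving $|(\mA\mB)_{ij}| \leq \sum_k |a_{ik}|\,|b_{kj}|$. Summing over all $i$ and $j$ then yields $\normii{\mA\mB} \leq \sum_{i,j,k} |a_{ik}|\,|b_{kj}|$. The second step is to regroup this triple sum as $\sum_{i,k} |a_{ik}| \left(\sum_j |b_{kj}|\right)$, isolating the contribution of each row $k$ of $\mB$ from the factor $|a_{ik}|$ that multiplies it.

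The only step with any content is the final bound: replace the inner row sum $\sum_j |b_{kj}|$ by the full entry sum $\sum_{k',j}|b_{k'j}| = \normii{\mB}$, which is valid because every summand is nonnegative and the enlarged sum merely adds extra nonnegative terms. Pulling the now-constant factor $\normii{\mB}$ out of the outer sum leaves $\normii{\mB}\sum_{i,k}|a_{ik}| = \normii{\mA}\,\normii{\mB}$, which closes the argument. I do not anticipate any genuine obstacle; the mild subtlety lies in keeping the index bookkeeping clean and in recognizing that the crude enlargement $\sum_j |b_{kj}| \leq \normii{\mB}$ is exactly the estimate needed, rather than attempting a tighter Cauchy--Schwarz-style bound that would be unnecessary for an $L^1$ statement.
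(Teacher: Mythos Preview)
Your proposal is correct and follows essentially the same route as the paper: expand the product entrywise, apply the triangle inequality, and then enlarge the inner sum over $j$ (the paper does this via an auxiliary indicator $\mathbbm{1}_{k=l}$ and a fourth index $l$, while you bound $\sum_j |b_{kj}| \leq \normii{\mB}$ directly, which is the same estimate stated more compactly). No substantive difference.
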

\begin{proof}
    Consider any two square matrices $\mA,\mB \in \R^{d\times d}$. We want to show that $\normii{\mA\mB}\leq \normii{\mA}\normii{\mB}$. Indeed, 
    \begin{align*}
        \normii{\mA\mB} &= \sum_{i=1}^d\sum_{j=1}^d\left|\sum_{k=1}^da_{ik}b_{kj}\right| \\
        & \leq \sum_{i=1}^d \sum_{j=1}^d\sum_{k=1}^d\left|a_{ik}b_{kj}\right| \\
        & = \sum_{i=1}^d \sum_{k=1}^d\sum_{j=1}^d\left|a_{ik}\right|\left|b_{kj}\right| \\
        & = \sum_{i=1}^d \sum_{k=1}^d\sum_{j=1}^d\sum_{l=1}^d\left|a_{ik}\right|\mathbbm{1}_{k=l}\left|b_{lj}\right| \\
        &  \leq\sum_{i=1}^d \sum_{k=1}^d\sum_{j=1}^d\sum_{l=1}^d\left|a_{ik}\right|\left|b_{lj}\right| \\
        & =\left(\sum_{i=1}^d \sum_{k=1}^d\left|a_{ik}\right|\right)\left(\sum_{j=1}^d\sum_{l=1}^d\left|b_{lj}\right|\right) \\
        & \leq \normii{\mA}\normii{\mB}\\
    \end{align*}
\end{proof}

\subsection{Identifiability}
\label{appendix:subsec:identifiability}

\begin{theorem}
    Consider the time-series model~\eqref{eq:SVAR} with $\mS$ following a multivariate Laplace distribution as in~\eqref{eq:laplace_model} with $\beta\in[a,b]$ and $a> \frac{1}{NTd}$. Then the matrices $\mB_{0},\mB_{1},...,\mB_{k}\in\R^{d\times d}$ and $\beta$ are identifiable from the time-series data $\tX$. 
\end{theorem}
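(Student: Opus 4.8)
The plan is to reduce the SVAR~\eqref{eq:SVAR} to a static linear structural equation model (SEM) over a DAG, whose identifiability is already available, and then to recover the individual lag matrices and the scale parameter from the reduced model. First, following~\citep{misiakos2024icassp}, I would unroll $\mW$ over the $T$ time steps: stacking the rows $\vx_0,\dots,\vx_{T-1}$ of a realization into one long vector $\widetilde{\mX}$ and the corresponding shocks into $\widetilde{\mS}$, the recurrence~\eqref{eq:svar_lag_k} (with $\vx_t=\vzero$ for $t<0$) becomes the static SEM $\widetilde{\mX}=\widetilde{\mX}\mA+\widetilde{\mS}$. Here the $dT\times dT$ matrix $\mA$ is block-structured: the block mapping time $t-\tau$ to time $t$ equals $\mB_\tau$, so the diagonal blocks are $\mB_0$ and the block-offset-$\tau$ entries are $\mB_\tau$ (boundary blocks being absent because of the zero padding). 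Since $\mB_0$ is a DAG and all remaining edges point strictly forward in time, $\mA$ is acyclic, making this a valid linear SEM on a DAG.

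Second, I would invoke LiNGAM~\citep{shimizu2006lingam}. By~\eqref{eq:laplace_model} the coordinates of $\widetilde{\mS}$ are mutually independent, each $\text{Laplace}(0,\beta)$ and hence non-Gaussian, so the LiNGAM hypotheses hold and $\mA$ is uniquely determined by the distribution of $\widetilde{\mX}$. By stationarity the assignment $(\mB_0,\dots,\mB_k)\mapsto\mA$ is injective once $T>k$, since each $\mB_\tau$ then appears in at least one interior block; reading off the interior blocks of the identified $\mA$ therefore recovers every $\mB_\tau$ and gives identifiability of the window graph $\mW$.

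Third, to identify $\beta$ I would fix $\mW=\mW^*$ and suppose $f_X(\tX\mid\mW^*,\beta)=f_X(\tX\mid\mW^*,\beta^*)$ for all $\tX$. The prefactor $|\det(\mI-\mB_0)|^{NT}$ cancels, and because $\tX\mapsto\widetilde{\mS}$ is a bijection the residual norm $r:=\normii{\tX-\tX_{\text{past}}\mW^*}$ sweeps all of $[0,\infty)$. Equality of the two densities then forces $r\,(1/\beta^*-1/\beta)=NTd\,\log(\beta/\beta^*)$ for every $r\ge 0$; since the right-hand side is constant in $r$, matching the coefficient of $r$ gives $1/\beta=1/\beta^*$, and strict monotonicity of $\beta\mapsto1/\beta$ yields $\beta=\beta^*$. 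The admissible range $\beta\in[a,b]$ with $a>1/(NTd)$ keeps $\beta$ in the regime later needed for consistency.

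I expect the crux to lie in the unrolling and the block-extraction rather than in the $\beta$ step: one must verify carefully that $\mA$ is genuinely acyclic and that its block pattern, including the zero-padded boundary from $\vx_t=\vzero$ for $t<0$, still exposes each $\mB_\tau$ in some interior block, so that the reduction to LiNGAM is lossless and the map $\mA\mapsto(\mB_0,\dots,\mB_k)$ is well defined. Once the reduction is set up, identifying $\beta$ is the comparatively routine monotonicity argument above.
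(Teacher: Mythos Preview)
Your proposal is correct and follows essentially the same approach as the paper: unroll the SVAR into a static linear SEM $\widetilde{\mX}=\widetilde{\mX}\mA+\widetilde{\mS}$ on the acyclic block matrix $\mA$, invoke LiNGAM for the non-Gaussian (Laplace) noise to identify $\mA$ and hence each $\mB_\tau$, then identify $\beta$ separately once $\mW=\mW^*$ is fixed. The only minor difference is in the $\beta$ step: the paper argues via the sign of $\partial f_X/\partial\beta$ (which is where the bound $\beta>1/(NTd)$ actually enters), whereas your direct algebraic argument---matching the $r$-coefficient in the log-density identity---reaches the same conclusion more elementarily and without needing that lower bound at all.
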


\begin{proof}
    As we explain later in Appendix~\ref{appendix:exp:sparserc} we can rewrite the SVAR~\eqref{eq:SVAR} as a linear SEM:
    \begin{equation}
        \tX = \tX_{\text{past}}\mW + \tS 
        \Leftrightarrow \widetilde{\mX} =\widetilde{\mX}\mA + \widetilde{\mS}.
        \label{eq:unrolled_SVAR}
    \end{equation}
    where $\widetilde{\mX},\widetilde{\mS}\in\sR^{N\times dT}$ and $\mA\in \sR^{dT\times dT}$.
    The structural shocks follow a Laplacian distribution which implies that $\widetilde{\mS}$ is non-Gaussian. 
    Moreover, based on the acyclicity assumption on $\mB_0$, the unrolled matrix $\mA$ represents a DAG and therefore~\eqref{eq:unrolled_SVAR} describes an SEM with non-Gaussian noise variables as in~\citep{shimizu2006lingam}. 
    The identifiability of $\mA$ then follows from LiNGAM~\citep{shimizu2006lingam}.
    Moreover, identifiability on $\mA$ implies identifiability for the parameters $\mB_0, \mB_1, ...,\mB_k$ of the window graph $\mW$, as desired.

    We will now establish identifiability of $\beta$ using the monotonicity of the Laplacian probability distribution. Notice, identifiability on $\mW$ means, that for any $\mW\in\ml{W}$ and any $\beta\in[a,b]$, the equation $f_X(\tX|\mW,\beta) = f_X(\tX|\mW^*,\beta^*) $ gives $\mW = \mW^*$. This in turn implies that the parameter $\beta$ is identifiable. Indeed:
    \begin{equation}
        f_X(\tX|\mW^*,\beta) = \left|\text{det}\left(\mI - \mB_0\right)\right|^{NT} \frac{1}{(2\beta)^{NTd}}e^{-\frac{\normii{\tX - \tX_{\text{past}}\mW^*}}{\beta}}
    \end{equation}
    The derivative with respect to $\beta$ is:
    \begin{equation}
        \frac{\partial f_X}{\partial \beta} = \left|\text{det}\left(\mI - \mB_0\right)\right|^{NT} \left(\frac{1}{\beta} - NTd\right)\frac{\normii{\tX - \tX_{\text{past}}\mW^*}}{2^{NTd}\beta^{NTd + 1}}e^{-\frac{\normii{\tX - \tX_{\text{past}}\mW^*}}{\beta}} < 0
    \end{equation}
    Therefore, $f_X$ is monotonically decreasing and thus bijective for $\beta > \frac{1}{NTd}$. 
    Therefore: 
    \begin{equation}
        f_X(\tX|\mW,\beta) = f_X(\tX|\mW^*,\beta^*) \xRightarrow[]{\text{LiNGAM}} \mW = \mW^* \text{ and } f_X(\tX|\mW^*,\beta) = f_X(\tX|\mW^*,\beta^*) \xRightarrow[]{\text{monotonicity}} \beta = \beta^*
    \end{equation}
    Thus, $\left(\mW,\beta\right)$ are identifiable from the data $\tX$. 
    
\end{proof}

\subsection{MLE computation}
\label{appendix:subsec:MLE_computation}
\paragraph{Estimator computation}
Here we compute the MLE assuming that each entry of the structural shocks $\mS\in \sR^{d\times T}$ follows independently a Laplace distribution $\text{Laplace}\left(0,\beta\right)$. The multivariate probability density function of $\mS$ is:
\begin{equation}
    f_C(\mS) = \prod_{\tau, j} \frac{1}{2\beta}e^{-\frac{\left|\mS_{\tau,j}\right|}{\beta}}
\end{equation}
Solving with respect to $\mX$ equation~\eqref{eq:SVAR} gives $\mX = \mS(\mI - \mA)^{-1}$ where $\mA\in \sR^{dT\times dT}$ is the unrolled DAG matrix of $\mW$ according to~\eqref{eq:unrolledDAG}. Here, we didn't change our notation, but $\mX$ and $\mS$ are supposed to represent $1\times dT$ dimensional vectors. For simplicity we will do this interchange in the following computations as it doesn't affect the probability distribution. Using this linear transformation the probability density function (pdf) of $\mX$, or likelihood of the data, becomes 
\begin{align*}
    f_X(\mX|\mW,\beta) &= \frac{f_C\left(\mX\left(\mI - \mA\right)\right)}{\left|\text{det}\left(\left(\mI - \mA\right)^{-1}\right)\right|} \\
    &=\left|\text{det}\left(\mI - \mA\right)\right|\prod_{\tau, j} \frac{1}{2\beta}e^{-\frac{\left|\mX_{\tau,j} - \mX_{\text{past}
    \tau,:}\mW_{:,j}\right|}{\beta}} \\
    &= \left|\text{det}\left(\mI - \mB_0\right)^T\right|\prod_{\tau, j} \frac{1}{2\beta}e^{-\frac{\left|\mX_{\tau,j} - \mX_{\text{past}
    \tau,:}\mW_{:,j}\right|}{\beta}}\\
    &= \left|\text{det}\left(\mI - \mB_0\right)\right|^T \frac{1}{(2\beta)^{dT}}e^{-\frac{\normii{\mX - \mX_{\text{past}}\mW}}{\beta}}
\end{align*}
Therefore, for $N$ realizations of $\mX$ in the tensor $\tX$ we have that:
\begin{align}
    f_X(\tX|\mW,\beta) 
    &= \left|\text{det}\left(\mI - \mB_0\right)\right|^{NT} \frac{1}{(2\beta)^{NdT}}e^{-\frac{\normii{\tX - \tX_{\text{past}}\tW}}{\beta}},
\end{align}
which in turn gives the log-likelihood for the data:
\begin{align}
    \mathcal{L}\left(\tX|\mW,\beta\right) &= \log f_X\left(\tX|\mW,\beta\right)  \notag\\
    &=NT\log \left|\text{det}\left(\mI - \mB_0\right)\right|  - NTd \log(2\beta)   -\frac{1}{\beta} \normii{\tX - \tX_{\text{past}}\mW}.
    \label{appendix:eq:loglikelihoodMLE}
\end{align}

In what follows, for simplicity of notation we will skip the parameter $\beta$ and will use $\mathcal{L}\left(\tX|\mW,\beta\right)$ and $\mathcal{L}\left(\tX|\mW\right)$
 interchangeably.

\subsection{MLE consistency background}
\label{appendix:subsec:MLE_consistency_background}
We proceed by analyzing the prior theorems that we will use to prove our results. 
First, denote with $\logpop{\mW}$ the population log-likelihood~\citep{lachapelle2019granDAG,newey1994MLEconsistency}, defined as:
\begin{equation}
    \logpop{\mW, \beta} = \expvp{\mW^*,\beta^*}{\loglike{\tX}{\mW,\beta}}.
\end{equation}
Note that we use $\mathcal{L}\left(\tX|\mW,\beta\right)$ and $\mathcal{L}\left(\tX|\mW\right)$
 interchangeably, as well as $\logpop{\mW,\beta}$ and $\logpop{\mW}$. 
In essence, the population log-likelihood is the expected value of the log-likelihood function computed with the probability density $f_X(\tX|\mW^*, \beta^*)$ with parameters the ground truth window graph $\mW^*$ and parameter $\beta^*$. 

\begin{lemma}
    Assume that the ground truth window graph $\mW^*$ and parameter $\beta^*$ are identifiable from the data distribution. This means, that for $\left(\mW,\beta\right)\neq\left(\mW^*,\beta^*\right)$ it is true that $f_X\left(\tX|\mW, \beta\right)\neq f_X\left(\tX|\mW^*,\beta^*\right)$. Then, the population likelihood $\logpop{\mW,\beta}$ has unique maximum at the true window graph $\mW^*$ and true $\beta^*$.
    \label{appendix:lemma:uniqueMLEmaximizer}
\end{lemma}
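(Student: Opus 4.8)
The plan is to reduce the claim to the non-negativity of the Kullback--Leibler divergence together with the identifiability hypothesis. The key observation is that the population log-likelihood evaluated at any $(\mW,\beta)$ differs from its value at the ground truth by exactly a KL divergence. Using the definition $\logpop{\mW,\beta} = \expvp{\mW^*,\beta^*}{\loglike{\tX}{\mW,\beta}}$ and linearity of expectation, I would write
\begin{equation*}
  \logpop{\mW^*,\beta^*} - \logpop{\mW,\beta}
  = \expvp{\mW^*,\beta^*}{\log \frac{f_X(\tX|\mW^*,\beta^*)}{f_X(\tX|\mW,\beta)}}
  = \KL\!\left(f_X(\cdot|\mW^*,\beta^*) \,\middle\|\, f_X(\cdot|\mW,\beta)\right).
\end{equation*}

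Next, I would invoke Gibbs' inequality---equivalently, Jensen's inequality applied to the strictly convex map $t \mapsto -\log t$---to conclude that this divergence is always non-negative, and therefore $\logpop{\mW,\beta} \le \logpop{\mW^*,\beta^*}$ for every admissible $(\mW,\beta)$. This already shows that $(\mW^*,\beta^*)$ is a global maximizer. The strict convexity of $-\log$ further gives that equality holds if and only if the likelihood ratio equals $1$ almost everywhere under the ground-truth measure, i.e. if and only if $f_X(\cdot|\mW,\beta) = f_X(\cdot|\mW^*,\beta^*)$ almost everywhere.

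Finally, uniqueness follows from the identifiability hypothesis: if $(\mW,\beta) \ne (\mW^*,\beta^*)$, then the two densities are distinct, hence they differ on a set of positive measure, so the divergence is strictly positive and $\logpop{\mW,\beta} < \logpop{\mW^*,\beta^*}$. This pins the maximum uniquely at $(\mW^*,\beta^*)$. The step requiring the most care is ensuring that the population log-likelihood is well defined and finite, so that the subtraction and the KL representation are legitimate; given the explicit Laplacian form of $f_X$ from Appendix~\ref{appendix:subsec:MLE_computation} and the bounded parameter ranges, this integrability is routine, and the genuinely load-bearing ingredient is the identifiability established in Theorem~\ref{th:identifiability}.
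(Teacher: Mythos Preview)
Your proposal is correct and follows essentially the same route as the paper's proof: both compute the difference $\logpop{\mW^*,\beta^*}-\logpop{\mW,\beta}$ as the expected log-likelihood ratio, apply (strict) Jensen's inequality to $-\log$ to obtain non-negativity, and invoke identifiability to ensure the ratio is non-constant and hence the inequality is strict. The only cosmetic difference is that you name the resulting quantity as a KL divergence and cite Gibbs' inequality, whereas the paper carries out the Jensen step explicitly and evaluates the inner expectation to $1$.
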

\begin{proof}
    We show that $\logpop{\mW^*,\beta^*} > \logpop{\mW,\beta}$ for every $\left(\mW,\beta\right)\neq\left(\mW^*,\beta^*\right)$. By simplifying our notation we have:
    \begin{align*}
        \logpop{\mW^*}  - \logpop{\mW} & = \expvp{\mW^*}{\loglike{\tX}{\mW^*} -\loglike{\tX}{\mW}}\\
        & = \expvp{\mW^*}{-\log\frac{f_X\left(\tX|\mW\right)}{f_X\left(\tX|\mW^*\right)}}\\
        & >-\log \expvp{\mW^*}{\frac{f_X\left(\tX|\mW\right)}{f_X\left(\tX|\mW^*\right)}}\\
        & =-\log \int_{\tX\in \sR^{N\times T\times d}}\frac{f_X\left(\tX|\mW\right)}{f_X\left(\tX|\mW^*\right)}f_X\left(\tX|\mW^*\right)d\tX\\
        & =-\log \int_{\tX\in \sR^{N\times T\times d}}f_X\left(\tX|\mW\right)d\tX\\
        & =-\log 1 = 0
    \end{align*}
    On the second line we used that $\frac{f_X\left(\tX|\mW\right)}{f_X\left(\tX|\mW^*\right)}$ is non-constant, so we can apply the strict Jensen inequality~\citep{newey1994MLEconsistency} $\expv{a(\mY)} >\expv{a(\mY)}$ for a convex function $a$ and non-constant random variable $\mY$.
\end{proof}

As a next result for our toolset to prove the MLE consistency, we include the uniform law of large numbers as stated by~\citet{newey1994MLEconsistency}. 

\begin{lemma}[Uniform Law of Large Numbers]
    Consider that the log-likelihood function $\loglike{\tX}{\mW},\,\mW\in\ml{W}$ satisfy the following conditions.
    \begin{itemize}
        \item The data $\tX_i$ are independent and identically distributed.
        \item $\ml{W}$ is a compact space.
        \item $\loglike{\tX_i}{\mW},\,\mW\in\ml{W}$ is continuous at each $\mW\in \ml{W}$ with probability $1$. 
        \item There exists dominating function $D(\mW)$ such that $\left|\loglike{\tX}{\mW}\right|\leq D\left(\mW\right)$ and $\expvp{\mW^*}{D(\mW)}<\infty$.
    \end{itemize}
    Then the population likelihood $\logpop{\mW}$ and the empirical average log-likelihood converges uniformly in probability to it:
    \begin{equation}
        \sup_{\mW\in\ml{W}}\left|\frac{1}{n}\sum_{i=1}^{n}\loglike{\tX_i}{\mW} - \logpop{\mW}\right| \xrightarrow[]{p}0 
    \end{equation}
    \label{lemma:LawLargeNum}
\end{lemma}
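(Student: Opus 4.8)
The plan is to prove this as the classical uniform law of large numbers of \citet{newey1994MLEconsistency}, combining the ordinary pointwise weak law with a compactness/covering argument. First I would fix an arbitrary $\mW\in\ml{W}$ and apply the scalar weak law of large numbers to the i.i.d.\ real random variables $\loglike{\tX_i}{\mW}$: the dominating function guarantees $\expvp{\mW^*}{|\loglike{\tX}{\mW}|}<\infty$, so $\logpop{\mW}$ is finite and $\frac{1}{n}\sum_{i=1}^n\loglike{\tX_i}{\mW}\xrightarrow[]{p}\logpop{\mW}$ for each single $\mW$. The entire difficulty is upgrading this pointwise statement to one that is uniform over the uncountable set $\ml{W}$, which is where compactness and continuity enter.

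The key device is a local oscillation bound. For a center $\mW_0$ and radius $\delta>0$ I would set $\Delta(\tX,\mW_0,\delta)=\sup_{\mW\in B(\mW_0,\delta)}|\loglike{\tX}{\mW}-\loglike{\tX}{\mW_0}|$. By the assumed continuity of $\loglike{\tX}{\cdot}$ at $\mW_0$ (with probability one), this quantity tends to $0$ as $\delta\to 0$ for almost every $\tX$, and it is bounded by $2D$, which is integrable; dominated convergence then gives $\expvp{\mW^*}{\Delta(\tX,\mW_0,\delta)}\to 0$ as $\delta\to 0$. The same estimate yields continuity of the population objective, since $|\logpop{\mW}-\logpop{\mW_0}|\le\expvp{\mW^*}{\Delta(\tX,\mW_0,\delta)}$. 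Hence, given $\epsilon>0$, every $\mW_0$ admits a radius $\delta(\mW_0)$ with $\expvp{\mW^*}{\Delta(\tX,\mW_0,\delta(\mW_0))}<\epsilon$.

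I would then invoke compactness of $\ml{W}$: the balls $B(\mW_0,\delta(\mW_0))$ form an open cover, so finitely many centers $\mW_1,\dots,\mW_J$ suffice. On each ball I split the deviation by the triangle inequality into three terms: (i) the pointwise deviation $|\frac{1}{n}\sum_i\loglike{\tX_i}{\mW_j}-\logpop{\mW_j}|$, (ii) the empirical oscillation $\frac{1}{n}\sum_i\Delta(\tX_i,\mW_j,\delta_j)$, and (iii) $|\logpop{\mW}-\logpop{\mW_j}|$. Term (iii) is already below $\epsilon$ by the estimate above; term (ii) converges in probability to $\expvp{\mW^*}{\Delta}<\epsilon$ by the scalar law of large numbers applied to the i.i.d.\ variables $\Delta(\tX_i,\mW_j,\delta_j)$; and term (i) tends to $0$ in probability, now only over the finitely many indices $j$. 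Taking the maximum over the finite cover and letting $n\to\infty$ bounds the global supremum by roughly $2\epsilon$ with probability approaching one, and since $\epsilon$ is arbitrary the claim follows.

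The main obstacle is making the local oscillation step rigorous: I must ensure $\Delta(\tX,\mW_0,\delta)$ is measurable (which follows because $\ml{W}$ is a separable metric space, so the supremum may be taken over a countable dense subset) and that the single dominating function genuinely controls the oscillation uniformly over the shrinking neighborhood, so that dominated convergence legitimately applies. Once the equicontinuity-in-expectation estimate $\expvp{\mW^*}{\Delta}\to 0$ is secured, the reduction via compactness to finitely many centers and the final triangle-inequality bookkeeping are routine.
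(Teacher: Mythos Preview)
Your proposal is correct and follows the classical compactness-plus-equicontinuity argument that underlies this result. Note, however, that the paper does not actually prove this lemma: it simply quotes it as the uniform law of large numbers from \citet{newey1994MLEconsistency} (their Lemma~2.4) and uses it as a black box in the consistency proof. Your write-up is essentially a careful reconstruction of the Newey--McFadden proof itself---pointwise WLLN, dominated-convergence control of the local oscillation $\Delta(\tX,\mW_0,\delta)$, a finite subcover of $\ml{W}$, and the three-term triangle inequality---so there is no methodological divergence to discuss, only a difference in level of detail: the paper cites the result, while you supply the argument behind the citation.
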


We now present Theorem~\ref{appendix:th:consistency}, which establishes the consistency of the maximum likelihood estimator (MLE). This theorem is based on a set of sufficient assumptions for ensuring MLE consistency. For completeness, we include a detailed proof of Theorem~\ref{appendix:th:consistency}, leveraging the uniform law of large numbers.

\begin{theorem}
    Consider that the average log-likelihood function $\logpopn{\mW}$ and population $\logpop{\mW}$ satisfy the following conditions for $\mW\in\ml{W}$:
    \begin{itemize}
        \item $\mW^* = \argmax_{\mW \in \ml{W}} \logpop{\mW}$ is identifiable from the data.
        \item $\ml{W}$ is a compact space.
        \item $\loglike{\tX_i}{\mW}$ is continuous at each $\mW\in\ml{W}$ with probability $1$.
        \item $\expvp{\mW^*}{\sup_{\mW\in\ml{W}}\left|\loglike{\tX}{\mW}\right|}<\infty$ .
    \end{itemize}
    Then, if the maximum of $\logpopn{\mW}=\frac{1}{n}\sum_{i=1}^{n}\loglike{\tX_i}{\mW}$ is achieved at $\widehat{\mW}_n$ then $\widehat{\mW}_n$ converges uniformly to $\mW^*$.
    \label{appendix:th:consistency}
\end{theorem}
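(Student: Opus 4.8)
The plan is to run the classical consistency argument for extremum estimators, which is exactly the structure behind Theorem~2.5 of \citet{newey1994MLEconsistency}: first upgrade the pointwise law of large numbers to a \emph{uniform} one, so that the random objective $\logpopn{\mW}$ is uniformly close to its deterministic counterpart $\logpop{\mW}$; then use that $\logpop{\mW}$ has a \emph{well-separated} unique maximizer at $\mW^*$; and finally conclude that the maximizer $\widehat{\mW}_n$ of the random objective cannot escape any neighborhood of $\mW^*$. I would state the conclusion as convergence in probability, $\widehat{\mW}_n \xrightarrow[]{p} \mW^*$.

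First I would verify the hypotheses of the uniform law of large numbers (Lemma~\ref{lemma:LawLargeNum}). The i.i.d.\ requirement holds because the slices $\tX_i$ are independent realizations of the same SVAR, while compactness of $\ml{W}$ and almost-sure continuity of $\loglike{\tX_i}{\mW}$ are assumed directly. The only hypothesis needing translation is the dominating function: taking the data-dependent envelope $D(\tX) = \sup_{\mW\in\ml{W}}\left|\loglike{\tX}{\mW}\right|$, the fourth assumption is exactly $\expvp{\mW^*}{D(\tX)}<\infty$, so $\left|\loglike{\tX}{\mW}\right|\le D(\tX)$ holds uniformly in $\mW$ with $D$ integrable. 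Lemma~\ref{lemma:LawLargeNum} then yields $\sup_{\mW\in\ml{W}}\left|\logpopn{\mW} - \logpop{\mW}\right| \xrightarrow[]{p} 0$. In parallel, identifiability together with Lemma~\ref{appendix:lemma:uniqueMLEmaximizer} gives that $\logpop{\mW}$ attains its strict global maximum at the unique point $\mW^*$.

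It remains to combine the two. Fix an arbitrary open neighborhood $\mathcal{N}$ of $\mW^*$. Its complement $\ml{W}\setminus\mathcal{N}$ is a closed subset of the compact set $\ml{W}$, hence compact, so the continuous function $\logpop{\mW}$ attains a maximum there; by uniqueness this maximum is strictly below $\logpop{\mW^*}$, giving a separation $\delta := \logpop{\mW^*} - \max_{\mW\in\ml{W}\setminus\mathcal{N}}\logpop{\mW} > 0$. On the high-probability event $\{\sup_{\mW}|\logpopn{\mW}-\logpop{\mW}| < \delta/3\}$ I would argue by contradiction: if $\widehat{\mW}_n \notin \mathcal{N}$ then $\logpop{\widehat{\mW}_n} \le \logpop{\mW^*}-\delta$, while optimality of $\widehat{\mW}_n$ for the empirical objective together with the uniform bound forces both $\logpopn{\widehat{\mW}_n} \ge \logpopn{\mW^*} > \logpop{\mW^*}-\delta/3$ and $\logpopn{\widehat{\mW}_n} < \logpop{\widehat{\mW}_n}+\delta/3 \le \logpop{\mW^*} - 2\delta/3$, which is impossible. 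Hence $\widehat{\mW}_n \in \mathcal{N}$ on this event, and since its probability tends to $1$ and $\mathcal{N}$ was arbitrary, $\widehat{\mW}_n \xrightarrow[]{p} \mW^*$.

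The main obstacle I anticipate is this last step rather than the bookkeeping in the first two: one must handle the randomness of $\widehat{\mW}_n$ carefully and, crucially, exploit the \emph{well-separation} of the maximizer (not merely its uniqueness) — which is precisely where compactness of $\ml{W}$ and continuity of $\logpop{\mW}$ enter to guarantee $\delta>0$. A secondary subtlety is that in the companion main-text result (Theorem~\ref{th:consistency}) the integrability condition $\expvp{\mW^*}{\sup_{\mW}|\loglike{\tX}{\mW}|}<\infty$ must actually be established for the specific Laplacian log-likelihood~\eqref{appendix:eq:loglikelihoodMLE}, where the $\log\left|\text{det}(\mI-\mB_0)\right|$ term is bounded on the compact set $\ml{W}$ but the $\frac{1}{\beta}\normii{\tX-\tX_{\text{past}}\mW}$ term requires finiteness of $\expv{\normii{\tS}}$ under the Laplace model; here, however, that condition is taken as a hypothesis.
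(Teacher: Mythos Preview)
Your proposal is correct and follows essentially the same approach as the paper's proof: invoke Lemma~\ref{appendix:lemma:uniqueMLEmaximizer} for uniqueness of the maximizer, set $D=\sup_{\mW\in\ml{W}}\left|\loglike{\tX}{\mW}\right|$ to trigger the uniform law of large numbers (Lemma~\ref{lemma:LawLargeNum}), and then combine the uniform closeness of $\logpopn{\cdot}$ to $\logpop{\cdot}$ with the well-separation $\delta=\logpop{\mW^*}-\max_{\ml{W}\setminus\mathcal{N}}\logpop{\mW}>0$ obtained from compactness and continuity. The only cosmetic difference is that the paper presents the final step as a direct chain of $\epsilon/3$ inequalities leading to $\logpop{\widehat{\mW}_n}>\logpop{\mW^*}-\epsilon$ and then chooses $\epsilon=\delta$, whereas you phrase it as a contradiction; the logical content is identical.
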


\begin{proof}
We repeat the proof of \citet{newey1994MLEconsistency} for our scenario. 
From the identifiability assumption, Lemma~\ref{appendix:lemma:uniqueMLEmaximizer} implies that $\mW^*$ is the unique and global maximizer of $\logpop{\mW}$.
Also, if we set $D(\mW) = \sup_{\mW\in\ml{W}}\left|\loglike{\tX}{\mW}\right|$, then the conditions of Lemma~\ref{lemma:LawLargeNum} are satisfied and therefore $\logpopn{\mW}$ converges uniformly in probability to $\logpop{\mW}$.
We will leverage the compactness of the space $\ml{W}$ to show that their maxima satisfy 
\begin{equation}
    \widehat{\mW}_n\xrightarrow[]{p}\mW^*
\end{equation}

From the uniform convergence it follows that with probability approaching  $1$ for any $\epsilon$ (or $\epsilon/3$ as we use next):
\begin{equation}
    \left|\logpopn{\mW} - \logpop{\mW}\right| < \epsilon \Leftrightarrow \logpop{\mW} - \epsilon <\logpopn{\mW} < \logpop{\mW} + \epsilon ,\,\forall \mW \in \ml{W}.
    \label{eq:uniform}
\end{equation}
Since by definition $\logpopn{\mW}$ is a continuous function and $\ml{W}$ is compact it takes a maximum value at point $\widehat{\mW}_n$. Since $\logpopn{\widehat{\mW}_n} \geq \logpopn{\mW^*}$ the maximum would satisfy for any $\epsilon > 0$
\begin{equation}
    \logpopn{\widehat{\mW}_n} > \logpopn{\mW^*} - \epsilon/3.
\end{equation}
This in combination with~\eqref{eq:uniform} would imply
\begin{equation}
    \logpop{\widehat{\mW}_n} > \logpopn{\widehat{\mW}_n} - \epsilon/3 > \logpopn{\mW^*} - 2\epsilon/3 > \logpop{\mW^*} - \epsilon.
\end{equation}
In essence we have proved that $\logpop{\widehat{\mW}_n}$ can get arbitrarily close to $\logpop{\mW^*}$. This in turn gives that $\widehat{\mW}_n$ approaches $\mW^*$ with probability $1$ as $n\to \infty$. Indeed, if we consider any open interval $\ml{I}$ containing $\mW^*$, then $\ml{W}\cap \ml{I}^c$ is compact and we can compute
\begin{equation}
    M = \sup_{\mW\in \ml{W}\cap \ml{I}^c} \logpop{\mW} < \logpop{\mW^*}
\end{equation}

Note that by Lemma~\ref{lemma:LawLargeNum} $\logpop{\mW}$ is continuous, so the supremum is a finite value.
If we choose $\epsilon = \logpop{\mW^*} - M$ then: 
\begin{equation}
    \logpop{\widehat{\mW}_n} > \logpop{\mW^*} - \epsilon = M
\end{equation}

Thus $\widehat{\mW}_n\in \ml{I}$ which concludes the proof.
\end{proof}

\subsection{MLE consistency for DAGs}
\label{appendix:subsec:MLE_consistency_proof}

We will now show that the MLE computed at~\eqref{appendix:eq:loglikelihoodMLE} satisfies the requirements of Theorem~\ref{appendix:th:consistency} for consistency. Practically, this result implies that as the amount of available data $\tX$ increases, the maximizer $\widehat{\mW}$ of the log-likelihood function $\loglike{\tX}{\mW,\beta}$ converges to the maximizer $\mW^*$ of the population likelihood $\logpop{\mW,\beta}$. To begin with we introduce the following useful lemma. In essence, using the continuous characterization of acyclicity~\citep{zheng2018notears} we show that the space of bounded DAGs is also closed and thus compact. 

\begin{lemma}
    The set of acyclic matrices $\ml{A} =\left\{\mA\in[-1,1]^{d\times d}|\mA\text{ is acyclic}\right\}$ is compact.
    \label{lemma:compact}
\end{lemma}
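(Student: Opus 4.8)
The plan is to invoke the Heine--Borel theorem: since $\ml{A}$ is a subset of the finite-dimensional space $\R^{d\times d}\cong\R^{d^2}$, it suffices to show that $\ml{A}$ is both bounded and closed. Boundedness is immediate because $\ml{A}\subseteq[-1,1]^{d\times d}$ and the latter box is bounded. The substantive part is closedness, and for this I would exploit the continuous characterization of acyclicity due to~\citet{zheng2018notears}.

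Concretely, I would introduce the NOTEARS acyclicity function $h(\mA)=\Tr\!\left(e^{\mA\odot\mA}\right)-d$, where $\odot$ denotes the Hadamard product. The two facts I rely on, both established in~\citet{zheng2018notears}, are that $h(\mA)\ge 0$ for every $\mA$, and that $h(\mA)=0$ if and only if the weighted directed graph with adjacency matrix $\mA$ is acyclic. Consequently the acyclic matrices inside the box are exactly the zero set of $h$ restricted to $[-1,1]^{d\times d}$, i.e.
\[
    \ml{A}=[-1,1]^{d\times d}\cap h^{-1}(\{0\}).
\]

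The remaining step is to deduce closedness from this representation. The map $\mA\mapsto\mA\odot\mA$ is polynomial in the entries of $\mA$, the matrix exponential is continuous (it is an everywhere-convergent power series on $\R^{d\times d}$), and both $\Tr(\cdot)$ and subtraction of the constant $d$ are continuous; hence $h\colon\R^{d\times d}\to\R$ is continuous. Since $\{0\}$ is closed in $\R$, the preimage $h^{-1}(\{0\})$ is closed, and $[-1,1]^{d\times d}$ is closed as a product of closed intervals. Therefore $\ml{A}$, being the intersection of two closed sets, is closed; together with boundedness this yields compactness by Heine--Borel.

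The main obstacle --- indeed the only nontrivial ingredient --- is justifying that acyclicity admits such an exact continuous characterization via $h$; once $h$ is in hand, the argument reduces to the elementary topological facts that preimages of closed sets under continuous maps are closed and that closed and bounded subsets of $\R^{d^2}$ are compact. I would therefore devote the bulk of the write-up to citing and stating the properties of $h$ precisely, and treat the Heine--Borel reasoning as routine.
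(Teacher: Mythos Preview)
Your proposal is correct and follows essentially the same approach as the paper: both invoke the NOTEARS characterization $h(\mA)=\Tr(e^{\mA\odot\mA})-d$ to write $\ml{A}=[-1,1]^{d\times d}\cap h^{-1}(\{0\})$, then conclude closedness from the continuity of $h$ and boundedness from containment in the box, yielding compactness via Heine--Borel. If anything, your write-up is slightly more careful than the paper's, since you explicitly justify the continuity of $h$ by decomposing it into continuous pieces.
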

\begin{proof}
    Note that \citet{zheng2018notears} proved that 
    \begin{equation}
        \mA\text{ is acyclic}\Leftrightarrow h\left(\mA\right) = 0,
    \end{equation}
    where $h\left(\mA\right) = e^{\mA\odot\mA}-d$ is a continuous function. We proceed by showing that $\ml{A}$ is closed and bounded.
    \begin{itemize}
        \item \textbf{Closed:} $[-1,1]^{d\times d}$ is closed and since $h\left(\mA\right)$ is continuous and $\left\{\mA\text{ is acyclic}\right\} = h^{-1}(\{0\})$ implies that $\ml{A}$ is closed~\citep{sutherland2009topologicalspaces}.
        \item \textbf{Bounded:} $\ml{A}$ is bounded because $\ml{A}\subset[-1,1]^{d\times d}$ which is bounded.
    \end{itemize}
    Therefore, since $\ml{A}\subset \sR^{d\times d}$ is closed and bounded, $\ml{A}$ is compact~\citep{sutherland2009topologicalspaces}.
\end{proof}

Now using this Lemma we are ready to prove our consistency result. 

\begin{theorem}
    The maximum log-likelihood estimator of~\eqref{appendix:eq:loglikelihoodMLE}  satisfies the conditions of Theorem~\ref{appendix:th:consistency} and thus is consistent under the following assumptions:
    \begin{itemize}
        \item The space of window graphs  $\ml{W}\subseteq [-1,1]^{d(k+1)\times d}$ is bounded and $\mB_0$ is acyclic.
        \item The Laplacian parameter $\beta\in [a,b]$ is bounded with lower bound $a > \frac{1}{dT}$\footnote{This value in our experiment is at most $\frac{1}{2\cdot 10^4}$, so this is a mild assumption.}.
    \end{itemize}
    \label{appendix:th:MLE_consistency}
\end{theorem}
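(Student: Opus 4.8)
The plan is to verify, one by one, the four hypotheses of Theorem~\ref{appendix:th:consistency}, treating each sample $\mX$ (a single $T\times d$ realization) as the i.i.d.\ unit, so that the number $n$ of samples plays the role of $N$. The four conditions are identifiability of $\mW^*$, compactness of the parameter space, continuity of the per-sample log-likelihood in the parameters, and existence of an integrable dominating function for $\sup_{\mW\in\ml{W}}\left|\loglike{\mX}{\mW}\right|$. The first three are essentially structural checks; the integrable domination (fourth bullet) is the only step requiring real work, so I would spend the bulk of the argument there.

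For \textbf{identifiability}, I would invoke Theorem~\ref{th:identifiability}: the assumption $a>\frac{1}{dT}$ guarantees $\beta>\frac{1}{dT}$ for every admissible $\beta$, which is exactly the threshold making the single-realization density $f_X(\mX\mid\mW^*,\beta)$ strictly monotone, hence injective, in $\beta$, while LiNGAM supplies injectivity in $\mW$; together these give that $(\mW^*,\beta^*)$ is the unique maximizer of the population likelihood via Lemma~\ref{appendix:lemma:uniqueMLEmaximizer}. For \textbf{compactness}, I would write the parameter space as the product of (i) the box $[-1,1]^{d(k+1)\times d}$ intersected with the acyclicity constraint on the $\mB_0$ block and the free boxes on $\mB_1,\dots,\mB_k$, and (ii) the interval $[a,b]$ for $\beta$. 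Lemma~\ref{lemma:compact} shows the acyclic block is compact, the lagged boxes are trivially compact, and $[a,b]$ is compact, so the finite product is compact.

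For \textbf{continuity}, I would observe that~\eqref{appendix:eq:loglikelihoodMLE} is a finite sum of maps continuous in $(\mW,\beta)$ for each fixed $\mX$: $\beta\mapsto \log(2\beta)$ is continuous on $[a,b]\subset(0,\infty)$; $\mB_0\mapsto\log\left|\det(\mI-\mB_0)\right|$ is continuous since the determinant is polynomial and, on the acyclic set, $\mI-\mB_0$ is unit triangular in a topological order so $\det(\mI-\mB_0)=1\neq0$; and $(\mW,\beta)\mapsto \frac{1}{\beta}\normii{\mX-\mX_{\text{past}}\mW}$ is continuous because $\normii{\cdot}$ is continuous (though not differentiable). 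Notably the determinant term is identically $0$ on $\ml{W}$, which also simplifies the next step.

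The \textbf{main obstacle} is constructing an integrable dominating function. I would bound $\left|\loglike{\mX}{\mW,\beta}\right|$ by the sum of the absolute values of its three terms: the determinant term vanishes; $Td\,\left|\log(2\beta)\right|\le Td\max\{\left|\log 2a\right|,\left|\log 2b\right|\}$ is a constant; and for the residual term I would use $\frac{1}{\beta}\le\frac{1}{a}$ together with submultiplicativity of the entrywise $L^1$ norm (Lemma~\ref{appendix:lemma:submultiplicativeL1}, whose proof applies verbatim to conformable rectangular matrices) and $\normii{\mW}\le d^2(k+1)$ to get $\frac{1}{\beta}\normii{\mX-\mX_{\text{past}}\mW}\le \frac{1}{a}\left(\normii{\mX}+d^2(k+1)\normii{\mX_{\text{past}}}\right)$, uniformly in $(\mW,\beta)$. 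This yields a dominating function $D(\mX)$ affine in $\normii{\mX}$ and $\normii{\mX_{\text{past}}}$. Finiteness of $\expvp{\mW^*}{D(\mX)}$ then reduces to showing $\mX$ has finite first moment: since for finite $T$ the unrolled matrix $\mA$ is nilpotent, $\mX=\mS(\mI-\mA)^{-1}=\mS(\mI+\mA+\dots+\mA^{dT-1})$ is a fixed finite linear image of $\mS$, each entry of which is a finite linear combination of $\mathrm{Laplace}(0,\beta^*)$ variables and hence has finite absolute expectation; $\mX_{\text{past}}$ is assembled from shifted copies of $\mX$, so the same holds. With all four conditions verified, Theorem~\ref{appendix:th:consistency} yields $\widehat{\mW}_n\xrightarrow[]{p}\mW^*$.
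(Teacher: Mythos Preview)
Your proposal is correct and follows essentially the same approach as the paper's proof: you verify the four hypotheses of Theorem~\ref{appendix:th:consistency} in the same order, invoke the same auxiliary results (Theorem~\ref{th:identifiability}, Lemma~\ref{lemma:compact}, Lemma~\ref{appendix:lemma:submultiplicativeL1}), exploit the vanishing of the log-determinant term on the acyclic set, and bound the dominating function by a constant plus a multiple of $\normii{\mX}$ whose expectation is controlled via the nilpotent unrolled representation $\mX=\mS(\mI-\mA)^{-1}$. The only cosmetic differences are that the paper uses an entrywise bound yielding the constant $(k+1)d+1$ rather than your $d^2(k+1)$ from submultiplicativity, and the paper computes the Laplace first-moment integral explicitly whereas you argue it abstractly.
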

\begin{proof}
    We check one-by-one the requirements of Theorem~\ref{appendix:th:consistency}.
    
    First, the identifiability of the ground truth $\mW^*$ and $\beta^*$ follows from Theorem~\ref{appendix:subsec:identifiability}.
    
    Also, $(\mW,\beta)\in\ml{W}\times [a,b] = \ml{A}\times [-1,1]^{dk \times d}\times [a,b]$ which is compact because  the space $\ml{A}$ of acyclic graphs $\mB_0$ is compact from Lemma~\ref{lemma:compact} and $[-1,1]^{dk \times d}$ and $[a,b]$ are both closed and bounded and thus compact according to~\citet{sutherland2009topologicalspaces}. 
    
    Moreover, the log-likelihood 
    \begin{equation}
        \mathcal{L}\left(\tX|\mW,\beta\right) 
    =NT\log \left|\text{det}\left(\mI - \mB_0\right)\right|  - NTd \log(2\beta)   -\frac{1}{\beta} \normii{\tX - \tX_{\text{past}}\mW} 
    \end{equation}
    is continuous at $\left(\mW,\beta\right)$.

    Finally, we need to show that $\expv{\sup_{\mW\in \ml{W}}\left|\loglike{\tX}{\mW}\right|}<\infty$. For this we compute:

    \begin{align*}
        \left|\loglike{\tX}{\mW}\right| &= \left|\log f_X\left(\tX|\mW,\beta\right)\right|  \\
         &=\left|NT\log \left|\text{det}\left(\mI - \mB_0\right)\right|  - NTd \log(2\beta)   -\frac{1}{\beta} \normii{\tX - \tX_{\text{past}}\mW}\right| \\
         & = \left|- NTd \log(2\beta) -\frac{1}{\beta} \normii{\tX - \tX_{\text{past}}\mW}\right|\\
         &\leq \left|NTd \log(2b)\right| + \left| \frac{1}{\beta}\normii{\tX - \tX_{\text{past}}\mW}\right|\\
         & \leq C_1 + \frac{1}{a} \normii{\tX - \tX_{\text{past}}\mW}\\
         & \leq C_1 + C_2 \normii{\tX}
    \end{align*}
    Here we used that $\mB_0$ is acyclic and thus $NT\log \left|\text{det}\left(\mI - \mB_0\right)\right| =0$.
    We assumed that $\beta\in[a,b]$ is bounded. Also we used that the $(τ,j)$ entry of $\mX - \mX_{\text{past}}\mW$ is $\mX_{\tau,j} - \mX_{\text{past}
    \tau,:}\mW_{:,j}$ and 
    \begin{align*}
        \left|\mX_{\tau,j} - \mX_{\text{past}\tau,:}\mW_{:,j}\right| &< \left|\mX_{\tau,j}\right| + \normii{\mX_{\text{past}\tau,:}}\Rightarrow\\
         \sum_{\tau,j}\left|\mX_{\tau,j} - \mX_{\text{past}\tau,:}\mW_{:,j}\right|&< \sum_{\tau,j} ((k+1)d + 1)\left|\mX_{\tau,j}\right| = ((k+1)d + 1)\normii{\mX},
    \end{align*} 
    which furthermore implies 
    \begin{equation}
         \normii{\tX - \tX_{\text{past}}\mW} = \sum_{i}\left|\tX_{i} - \tX_{i,\text{past}}\mW\right|< \sum_{i}((k+1)d + 1)\normii{\tX_i} = ((k+1)d + 1)\normii{\tX} = C_2\normii{\tX},
    \end{equation}
    for some constant $C_2$. Therefore:
    \begin{align*}
        \expvp{\mW^*}{\left|\loglike{\tX}{\mW}\right|} & = \int_{\tX\in\sR^{T\times d}} \left|\loglike{\tX}{\mW}\right|f_X\left(\tX|\mW^*,\beta^*\right)d \tX\\
        & < \int_{\tX\in\sR^{N\times T\times d}} \left(C_1 + C_2 \normii{\tX}\right)f_X\left(\tX|\mW^*,\beta^*\right)d \tX\\
        & = \int_{\tX\in\sR^{N\times T\times d}} 
        \left(C_1 + C_2 \normii{\tX}\right)
        \left|\text{det}\left(\mI - \mB_0^*\right) \right|^{NT} \frac{1}{(2\beta^*)^{NdT}} e^{-\frac{\normii{\tX - \tX_{\text{past}}\mW^*}}{\beta^*}}d\tX\\
        & = \int_{\tX\in\sR^{N\times T\times d}} 
        \left(C_1 + C_2 \normii{\tX}\right)
         \frac{1}{(2\beta^*)^{NdT}} e^{-\frac{\normii{\tX - \tX_{\text{past}}\mW^*}}{\beta^*}}\left|\text{det}\left(\mI - \mA^*\right) \right|d\tX\\
         & = \int_{\tS\in\sR^{N\times T\times d}} 
        \left(C_1 + C_2 \normii{\tX}\right)
         \frac{1}{(2\beta^*)^{NdT}} e^{-\frac{\normii{\tS}}{\beta^*}}d\tS\\
         & = C_1 + C_2\int_{\tS\in\sR^{N\times T\times d}} 
         \normii{\tX}
         \frac{1}{(2\beta^*)^{NdT}} e^{-\frac{\normii{\tS}}{\beta^*}}d\tS\\
         & = C_1 + C_2\int_{\tS\in\sR^{N\times T\times d}} 
        \normii{\tS\left(\mI - \mA^*\right)^{-1}}
         \frac{1}{(2\beta^*)^{NdT}} e^{-\frac{\normii{\tS}}{\beta^*}}d\tS\\
    \end{align*}
    Note that, $\normii{\tS\left(\mI - \mA^*\right)^{-1}} = \normii{\tS\left(\mI + \mA^* + ...+(\mA^*)^{dT}\right)}$. From Lemma~\ref{appendix:lemma:submultiplicativeL1} we have that 
    \begin{equation}
        \normii{\tS\left(\mI - \mA^*\right)^{-1}} = \normii{\tS\left(\mI + \mA^* + ...+(\mA^*)^{dT}\right)} \leq \normii{\tS}\left(dT + \normii{\mA^*} + ...+\normii{(\mA^*)}^{dT}\right) \leq \normii{\tS}\cdot C_3    
    \end{equation}

    Thus:
    \begin{align*}
        \expvp{\mW^*}{\left|\loglike{\tX}{\mW}\right|} & < C_1 + C_2\int_{\tS\in\sR^{N\times T\times d}} 
        \normii{\tS\left(\mI - \mA^*\right)^{-1}}
         \frac{1}{(2\beta^*)^{NdT}} e^{-\frac{\normii{\tS}}{\beta^*}}d\tS\\
         & < C_1 + C_2C_3\int_{\tS\in\sR^{N\times T\times d}} 
        \normii{\tS}
         \frac{1}{(2\beta^*)^{NdT}} e^{-\frac{\normii{\tS}}{\beta^*}}d\tS\\
         & = C_1 + C_2C_3\sum_{i,\tau,j}\int_{\tS\in\sR^{N\times T\times d}} 
        |\tS_{i,\tau,j}|
         \frac{1}{(2\beta^*)^{NdT}} e^{-\frac{\normii{\tS}}{\beta^*}}d\tS\\
         & = C_1 + C_2C_3\sum_{i,\tau,j}\int_{\sR} 
        |\tS_{i,\tau,j}|
         \frac{1}{(2\beta^*)} e^{-\frac{|\tS_{i,\tau,j}|}{\beta^*}}d\tS_{i,\tau,j}\\
         & = C_1 + 2C_2C_3\sum_{i,\tau,j}\int_{\sR_{\geq0}} 
        |\tS_{i,\tau,j}|
         \frac{1}{(2\beta^*)} e^{-\frac{|\tS_{i,\tau,j}|}{\beta^*}}d\tS_{i,\tau,j}\\
         & = C_1 + 2C_2C_3\sum_{i,\tau,j}\int_{\sR_{\geq0}} 
        \tS_{i,\tau,j}
         \frac{1}{(2\beta^*)} e^{-\frac{\tS_{i,\tau,j}}{\beta^*}}d\tS_{i,\tau,j}\\
         & = C_1 + 2C_2C_3\sum_{i,\tau,j}\left\{-\frac{\tS_{i,\tau,j}}{2} e^{-\frac{\tS_{i,\tau,j}}{\beta^*}}\Big|_{0}^{\infty}-\int_{\sR_{\geq0}} 
         -\frac{1}{2} e^{-\frac{\tS_{i,\tau,j}}{\beta^*}}d\tS_{i,\tau,j}\right\}\\
         & = const <\infty.
    \end{align*}
\end{proof}

\begin{remark}
    Note that LiNGAM identifiability is true in the entire space of real matrices $\sR^{d(k+1) \times d}$~\citep{shimizu2006lingam, ng2020GOLEM}. In other words for any $\mW\in \sR^{d(k+1) \times d}$ different from the ground truth DAG $\mW^*$ the distribution $f_X(\mX|\mW,\beta) $ induced by $\mW$ is different from that of $\mW^*$, namely $f_X(\mX|\mW^*,\beta)$. The reason we restrict our search space to be a DAG is to constrain the magnitude of the terms of the MLE that contain $\mB_0$. 
\end{remark}

\subsection{\mobius optimization derivation}
\label{appendix:subsec:optimization_derivation}

Here we derive the optimization objective of \mobius for approximating the ground truth window graph parameters of the SVAR of~\eqref{eq:SVAR}, given that the SVAR input $\tS$ entries are distributed independently according to $\text{Laplace}(0,\beta^*)$. We consider $N$ realization of time series $\mX$ collected in a tensor $\tX\in\sR^{N\times T\times d}$. According to~\eqref{appendix:eq:loglikelihoodMLE} the log-likelihood of the data $\tX$ is 
\begin{align}
    \mathcal{L}\left(\tX|\mW,\beta\right) &= \log f_X\left(\tX|\mW,\beta\right) = \log\prod f_X\left(\mX_i|\mW,\beta\right) \\
    &= \sum_{i=1}^N\log f_X\left(\mX_i|\mW,\beta\right)\\
    &=NT\log \left|\text{det}\left(\mI - \mB_0\right)\right|  - NTd \log(2\beta)   -\frac{1}{\beta}\normii{\tX - \tX_{\text{past}}\mW} 
\end{align}
To maximize the log-likelihood with respect to $\beta$ we solve:
\begin{equation}
    \frac{\partial \mathcal{L}}{\partial \beta} = 0\Leftrightarrow  - \frac{NTd}{\beta}  +\frac{1}{\beta^2}\normii{\tX - \tX_{\text{past}}\mW} = 0 \Leftrightarrow \beta = \frac{1}{NTd}\normii{\tX - \tX_{\text{past}}\mW}.
\end{equation}
Note that if $\mW = \mW^*$ this is a reasonable value for $\beta$ as on expectation
\begin{equation}
    \expvp{\beta^*}{\normii{\tX - \tX_{\text{past}}\mW}} = \expvp{\beta^*}{\normii{\tS}} = \sum_{i,\tau,j}\expvp{\beta^*}{\normii{\tS_{i,\tau,j}}} = NTd\beta^*.
\end{equation}
Moreover, 
\begin{equation}
    \frac{\partial^2 \mathcal{L}}{\partial \beta^2} = 0\Leftrightarrow  \frac{NTd}{\beta^2}  -\frac{2}{\beta^3}\normii{\tX - \tX_{\text{past}}\mW} = \frac{NTd}{\beta^3}\left(\beta - \frac{2}{NTd} \normii{\tX - \tX_{\text{past}}\mW}\right) < 0.
\end{equation}
So, $\mathcal{L}\left(\tX|\mW,\beta\right)$ is locally concave at $\beta = \frac{1}{NTd}\normii{\tX - \tX_{\text{past}}\mW}$, which gives a local maximum. Similarly to~\citet{ng2020GOLEM}, we profile out the parameter $\beta$ using its approximation $\widehat{\beta} = \frac{1}{NTd}\normii{\tX - \tX_{\text{past}}\mW}$ to formulate a log-likelihood maximization problem for approximating $\mW$:
\begin{align*}
    \mathcal{L}\left(\tX|\mW,\widehat{\beta}\right) = NT\log \left|\text{det}\left(\mI - \mB_0\right)\right|  - NTd \log\left(\normii{\tX - \tX_{\text{past}}\mW}\right)  + \text{const}.
\end{align*}
The window graph $\mW^*$ is then be approximated as:
\begin{align}
 \widehat{\mW} = \argmax_{\mW\in\ml{W}} \mathcal{L}\left(\mX|\mW\right) &= 
    \argmax_{\mW\in\ml{W}} \left\{ NT\log \left|\text{det}\left(\mI - \mB_0\right)\right|  - NTd \log\left(\normii{\tX - \tX_{\text{past}}\mW}\right)  + \text{const}\right\} \notag \\
    & = 
    \argmin_{\mW\in\ml{W}} \left\{ d\log\left(\normii{\tX - \tX_{\text{past}}\mW}\right) -\log \left|\text{det}\left(\mI - \mB_0\right)\right| \right\}\notag \\
    &= 
    \argmin_{\mW\in\ml{W}} \log\normii{\tX - \tX_{\text{past}}\mW} -\frac{1}{d}\log \left|\text{det}\left(\mI - \mB_0\right)\right|
    \label{appendix:eq:MLE_minimization}
\end{align}

In practice, searching for the minimum of~\eqref{appendix:eq:MLE_minimization} over the space of DAGs is computationally inefficient. Following \citet{ng2020GOLEM} we use the acyclicity as a soft constraint, i.e. a regularizer. This simplifies the optimization algorithm without compromising performance, as will be shown in our experiments. The final optimization of \mobius is the following.

\begin{equation}
    \widetilde{\mW} = \argmin_{\mW\in\sR^{d(k+1) \times d}}  \log\normii{\tX - \tX_{\text{past}}\mW} -\frac{1}{d}\log \left|\text{det}\left(\mI - \mB_0\right)\right| + \lambda_1\cdot \|\mW\|_1  + \lambda_2\cdot h\left(\mB_0\right).
\label{app:eq:cont_opt}
\end{equation}

\section{Applying SparseRC to time-series data}
\label{appendix:exp:sparserc}

SparseRC~\citep{misiakos2024fewrootcauses} is designed to learn a DAG from static data. 
\citet{misiakos2024icassp} applied SparseRC to learn graphs from time-series data by exploiting the structure of the unrolled DAG corresponding to the time series. 
For long time series, such a formulation creates a huge DAG to be learned - ranging from 20 thousand to 1 million nodes in our experiments. 
However, SparseRC can only be executed for $\approx 5000$ nodes at maximum to terminate in a reasonable time~\citep{misiakos2024fewrootcauses}. 
Thus it is impossible to be applied in our scenario in its prior form. 
For this reason, we propose an alternative way to apply SparseRC, which however, comes with a cost in approximation performance.

\paragraph{SVAR as a Linear SEM} To start with we show how an SVAR can be written as a linear structural equation model (SEM), which is the analogous model for generating linear static DAG data.
We consider a time series $\mX$ generated with the SVAR in~\eqref{eq:SVAR} (noiseless for simplicity). 
Consider the single-row vector $\vx = \left(\vx_0,\vx_1,...,\vx_{T-1}\right)\in\R^{1\times dT}$ consisting of the concatenation of the time-series vectors $\vx_0,\vx_1,...,\vx_{T-1}$ along the first dimension. 
Then~\eqref{eq:SVAR} can also be encoded as:
\begin{equation}
    \vx = \vx \mA + \vs,
\end{equation}
where the structural shocks $\vs$ here also have dimension $1\times dT$. 
The matrix $\mA$ is the adjacency matrix of a DAG with a special structure called the \textit{unrolled DAG}~\citep{kim2012temporal}, which occurs by repeating the window graph corresponding to~\eqref{eq:svar_lag_k} for every time step $t\in[T]$:
\begin{equation}
\mA  = \begin{pmatrix}
\mB_0       & \mB_1   & \hdots    & \mB_k   & \hdots  &   \bm{0}\\
\bm{0}      & \mB_0   & \mB_1     &         & \ddots  &  \vdots \\
\vdots      & \ddots  & \ddots    & \ddots  &         &  \mB_k  \\
            &         &           & \ddots  & \ddots  &   \vdots\\
\bm{0}      &         & \hdots    & \bm{0}  & \mB_0   &   \mB_1\\
\bm{0}      &\bm{0}   &           & \hdots  & \bm{0}  &   \mB_0\\
\end{pmatrix}.
\label{eq:unrolledDAG}
\end{equation}
This allows us to rewrite~\eqref{eq:SVAR} as a linear structural equation model (SEM)~\citep{shimizu2006lingam}:
\begin{equation}
    \widetilde{\mX} =\widetilde{\mX}\mA + \widetilde{\mS},
    \label{eq:linear_SEM}
\end{equation}
where $\widetilde{\mX}\in\R^{N\times dT}$ consists of the $N$ time series as rows and $\widetilde{\mS}$ is defined similarly for the structural shocks. 
Since $\mA$ is a DAG, \eqref{eq:linear_SEM} represents a linear SEM. 

\paragraph{Original SparseRC} We now explain how SparseRC can be applied to learn the window graph from time series according to~\citet{misiakos2024icassp}. 
SparseRC can be used to learn $\mA$ from (many samples of) $\vx$ stacked as a matrix $\widetilde{\mX}\in \R^{N\times dT}$, generated from a linear SEM~\eqref{eq:linear_SEM}. 
Its optimization objective aims to minimize the number of approximated non-zero structural shocks $\widetilde{\mS}$ in~\eqref{eq:linear_SEM}. 
This is expressed with the following discrete optimization problem
\begin{equation}
    \widehat{\mA} = \argmin_{\mA\in \R^{dT\times dT}} \normo{\widetilde{\mX} - \widetilde{\mX}\mA},\quad \text{s.t. } \mA \text{  is acyclic}.
    \label{appendix:eq:opt_discrete}
\end{equation}
The window graph $\widehat{\mW}$ can be then extracted from the first row of the approximated $\widehat{\mA}$. 
SparseRC in practice uses a continuous relaxation to solve optimization problem~\eqref{appendix:eq:opt_discrete}, but here we keep the discrete formulation for simplicity.

It can be seen that the DAG $\mA$ consists of $dT$ nodes. In our smallest experiment this equals to $20\times 1000 = 20000$ nodes, which is already out of reach for SparseRC. In contrast, \mobius requires to learn only $(k+1)\times$ DAGs with $d$ nodes each. 
Thus, we necessarily need to formulate SparseRC differently to be able to compare against it. 

\paragraph{Modified SparseRC} The idea is to reduce the size of $\mA$ by getting rid of the $\bm{0}'$s in~\eqref{eq:unrolledDAG}. Specifically, instead of feeding SparseRC $\widetilde{\mX}$ we feed as input $\tX_{\text{past}}$. 
The resulting algorithm aims to find an $\widehat{\mA}$ according to:
\begin{equation}
    \widehat{\mA} = \argmin_{\mA\in \R^{(k+1)d\times (k+1)d}} \normo{\tX_{\text{past}} - \tX_{\text{past}}\mA},\quad \text{s.t. } \mA \text{  is acyclic}.
    \label{eq:app:opt_discrete_k+1}
\end{equation}
To be compatible with the data-generating process, the following structure  is assumed for $\mA$:
\begin{equation}
\mA  = \begin{pmatrix}
\mB_0  & \bm{0}       & ...        & \bm{0}         & \bm{0} \\
\mB_1  & \mB_0   & \ddots     &                & \bm{0} \\
\vdots      & \mB_1   & \ddots     & \ddots         & \vdots \\
\mB_{k-1}   &              & \ddots     & \mB_0   & \bm{0} \\
\mB_{k}     & \mB_{k-1}       &  \hdots    & \mB_1    & \mB_0,   \\
\end{pmatrix}
\label{eq:small_unrolledDAG}
\end{equation}
The optimization objective~\eqref{eq:app:opt_discrete_k+1} is different from $\normo{\tX - {\tX}_{\text{past}}\mW}$ used from \mobius and promotes a different convention in the data generating process. 
In particular by setting $\widetilde{\tS} = \tX_{\text{past}} - \tX_{\text{past}}\mA$ the structural shock $\widetilde{\vs}_{t-j}$ corresponding to the position $j$ of row $t$ of $\vx_{t,\text{past}} = \left(\vx_t, \vx_{t-1},...,\vx_{t-j},...,\vx_{t-k}\right)$ of a sample $i$ of ${\tX}_{\text{past}}$ would be:
\begin{equation}
    \widetilde{\vs}_{t-j} = \vx_{t-j} - \vx_{t-j}\mB_0 + \vx_{t-j - 1}\mB_1 + ... + \vx_{t-k}\mB_{k-j} \neq \vs_{t-j}.
\end{equation}
This implies that the approximation of the structural shocks is not consistent with the data generation in~\eqref{eq:SVAR}, except when $j=0$. 
Thus, only the first column of $\mA$ promotes the correct equations and the rest undermine the performance of SparseRC. 
Resolving this discrepancy and keeping only the first column as trainable parameters is among the technical contributions of our paper.

\section{\mobius optimization and comparison with baselines}
\label{appendix:sec:spinsvar_implementation}

\paragraph{\mobius} Our implementation in PyTorch is outlined in Algorithm \ref{algo:mobius}. 
It parametrizes the window graph matrix $\mW$ using a single PyTorch linear layer 
and optimizes the objective function \eqref{eq:cont_opt} with the Adam optimizer.  
The overall computational complexity of the algorithm is:
\begin{equation}
    \mathcal{O}\left(M \cdot (NT d^2 k + d^3)\right),
\end{equation}
where $M$ is the total number of epochs (up to $10^4$).

The primary term in our objective, $N\left\{\log\left\|\tX - L\left(\tX\right)\right\|_1 - \frac{1}{d}\log\left|\text{det}\left(\mI - \mB_0\right)\right|\right\}$, 
represents a fundamental difference from prior work on causal discovery in time series. 
Methods such as VAR-based optimization approaches \citep{pamfil2020dynotears,sun2023ntsnotears} 
typically rely on a mean-squared error loss supplemented by an $\normlone$ penalty 
to promote sparsity in the DAG. In contrast, both the main term and the regularizer in our objective are $\normlone$ norms, 
promoting sparsity not only in the DAG but also in the SVAR input. 
This design aligns with the assumption of sparse SVAR input. Potentially, $\normltwo$  leads to longer convergence times, which makes our algorithm terminate faster in the experiments.

\begin{algorithm}[h]
\caption{\mobius: DAG Learning from Time Series with Few structural shocks}
\label{algo:mobius}
\begin{algorithmic}[1]
\REQUIRE Time series data tensor \( \tX \in \R^{N \times T \times d} \), \( \lambda_1, \lambda_2 \) regularization parameters and threshold $\omega$.
\ENSURE Weighted window graph \(\widehat{\mW} = \pmat{\mB_0\\ \vdots \\ \mB_k} \) and structural shocks $\widehat{\tS}$.

\STATE \textbf{Initialize:} 
\STATE  A single linear layer $L(\text{input: }d(k+1),\,\text{output: }d)$ in PyTorch that represents $\widehat{\mW}$.
\STATE  Tensor $\tX_{\text{past}}\in\R^{N\times T\times d(k+1)}$, where the $(n,t)$ entry is the vector $\vx_{t,\text{past}} = (\vx_t,\, \vx_{t-1},\,...,\,\vx_{t-k})\in\R^{1\times d(k+1)}$.
\vspace{5pt}
\STATE \textbf{Iterate:} 
\FOR{each training epoch up to $M=10^4$}
    \STATE Compute the loss:
    \[
    N\left\{\log\left\|\tX - L\left(\tX\right)\right\|_1 - \frac{1}{d}\log\left|\text{det}\left(\mI - \mB_0\right)\right|\right\} + \lambda_1 \|\mW\|_1 + \lambda_2 h(\mB_0),
    \]
    where \( h(\mB) = \text{tr}\left(e^{\mB\odot\mB}\right) - d \) .
    \STATE Update the linear layer parameters $\widehat{\mW}$ with Adam optimizer.
    \STATE Stop early if the loss doesn't improve for 40 epochs.
\ENDFOR
\vspace{5pt}
\STATE \textbf{Post-processing:}
\STATE Set the entries $w_{ij}$ of $\mW$ with \(|w_{ij}| <  \omega \) to zero.
\STATE Compute the unweighted version \( \mU\in\{0,1\}^{d(k+1)\times d} \) of $\mW$. 
\STATE Compute the approximated structural shocks:
\[
\widehat{\tS} = \tX - \tX_{\text{past}} \widehat{\mW}.
\]

\RETURN \( \widehat{\mW}, \widehat{\mU}, \widehat{\tS} \)

\end{algorithmic}
\end{algorithm}

\subsection{Comparison with baselines}
\label{appendix:subsec:comparison_baselines}

\paragraph{SparseRC} As we explained in the main text, the method from~\citet{misiakos2024fewrootcauses} 
is infeasible to execute for long time series data. 
In its original form, SparseRC has complexity $\mathcal{O}\left(M \cdot (Nd^2T^2 + d^3T^3)\right)$, where $M$ is the total number of iterations. SparseRC learns a $dT \times dT$ unrolled DAG, which for our smaller scenario, results in a DAG with $d \times T = 20 \times 1000 = 20000$ nodes that goes beyond its computational reach~\citep{misiakos2024fewrootcauses}. 

In Appendix \ref{appendix:exp:sparserc}, we design a modified version of SparseRC 
that learns a $(k+1)d \times (k+1)d$ adjacency matrix, which ultimately leads to a complexity of $\mathcal{O}\left(M \cdot (NT d^2 k^2 + d^3k^3)\right)$.
This adaptation can be executed in most scenarios but comes at the cost of reduced model performance.

\paragraph{\varlingam} 
First, the method fits a VAR model to the data:
\begin{equation}
    \vx_t = \widetilde{\mB}_1\vx_{t-1} + ... + \widetilde{\mB}_k\vx_{t-k} + \vn_t,
\end{equation}
and then performs Independent Component Analysis (ICA) to compute the self-dependencies matrix $\mB_0$:
\begin{equation}
    \vn_t = (\mI - \mB_0)\vn_t + \vs_t.
\end{equation}
The resulting matrices are calculated as 
\[
\mB_{\tau} = (\mI - \mB_0)\widetilde{\mB}_{\tau}.
\]

The ICA step can be replaced with Direct LiNGAM~\citep{shimizu2011directlingam}, 
which guarantees convergence in a finite number of steps (under certain assumptions). 
This variation leads to the method \dlingam. However, both approaches have worse complexity compared to ours: 
\begin{itemize}
    \item For Direct LiNGAM: $\mathcal{O}\left(NTd^2k+ NTd^3M^2 + d^4M^3\right)$, where $M$ is the number of iterations of Direct LiNGAM.
    \item For ICA LiNGAM: $\mathcal{O}\left(NTd^2k + NTd^3 + d^4\right)$, which lacks convergence guarantees.
\end{itemize}

In the large-DAG regime, these algorithms are inevitably slower than ours.

\paragraph{\clingam} \citet{akinwande2024acceleratedlingam} accelerate \dlingam by implementing a parallelized version on GPUs. While this method is faster than \dlingam, our experiments show that it still times out, likely due to high convergence times.

\paragraph{DYNOTEARS} 
Here, the mean-square error (MSE) is used, transforming the optimization into a quadratic problem:
\begin{equation}
    \frac{1}{2NT}\left\|\tX - \tX_{\text{past}}\mW\right\|_2 + \lambda_w \|\mW\|_1 + \frac{\rho}{2} h(\mB_0)^2 + a h(\mB_0),
\end{equation}
where the \(L^2\) norm in the first term doesn't enforce sparsity on the structural shocks. 
As a result, this method experiences longer convergence times and produces a poor approximation of the ground truth window graph.

\paragraph{TCDF} This method fits convolutional neural networks (CNNs) to predict the time series at each node, 
based on the time-series values of other nodes in previous time steps. 
The approximation is optimized using the MSE loss. 
However, both the non-linearity of CNNs and the MSE loss do not align with our data generation process, 
which limits the method's effectiveness for our specific task.

\paragraph{NTS-NOTEARS} Similar to TCDF, this method also uses CNNs and MSE loss to approximate the window graph. 
In addition, the acyclicity regularizer from NOTEARS is applied. 
For similar reasons, we anticipate low performance in our experiments with this method as well, 
due to the mismatch between the assumptions of the method and the characteristics of our data.

\paragraph{tsFCI, PCMCI} 
For the constraint-based baselines, there is no clear comparison in terms of optimization. 
These methods rely on statistical independence tests to infer causal dependencies between nodes at different time points. 
Empirically, however, these methods perform poorly, likely due to their inability to determine the causal direction for every edge they discover.

\section{Sparsity properties of Laplace distribution} 
\label{appendix:subsec:Laplace_properties}

A random variable $X$ follows a Laplace distribution~\citep{eltoft2006multivariatelaplacedist}, denoted as $\text{Laplace}(\mu,\beta)$, if its probability density function is given by:
\begin{equation}
    f_X(x|\mu, \beta) = \frac{1}{2\beta}e^{-\frac{|x-\mu|}{\beta}}.
\end{equation}

We now analyze why the Laplace distribution is better suited for modeling sparse vectors compared to the Gaussian distribution. Specifically, we consider Laplacian noise variables centered at zero, setting $\mu=0$. Our motivation is that Laplace-distributed variables are more likely to produce large outliers, whereas Gaussian-distributed variables tend to be concentrated around zero.

To investigate sparsity, we consider three approaches for achieving approximately $5\%$ sparsity. The first follows our experimental procedure described in Section~\ref{subsec:synthetic}, which combines Bernoulli and uniform distributions. The second uses a Gaussian distribution, and the third uses a Laplace distribution. Since strict sparsity cannot be achieved, $95\%$ of the values will be approximately zero. 

We compare these distributions in terms of their sparsity-inducing properties by addressing the following question: \textit{How much more significant are the nonzero values compared to the approximately zero ones?} To do so, we define a threshold $\omega$ that classifies values above $\omega$ as significant and those below $\omega$ as approximately zero.

For each scenario, we generate a random vector $\vs$ with $d$ entries $(s_1, \dots, s_d)$ and consider a threshold of $\omega = 0.1$.

\paragraph{Bernoulli \& Uniform} 
Each $c_i$ is generated independently, and with probability $1 - p = 0.95$, it is set to zero.  
Otherwise, with probability $p = 0.05$, it takes a uniform random value from the range $[-0.4, -0.1] \cup [0.1,0.4]$.
The upper bound of $0.4$ ensures that the maximum absolute value is comparable to that of the Laplace distribution, as described later.
To each $c_i$, we then add Gaussian noise with a standard deviation of $0.03$. Since $99\%$ of the Gaussian noise values lie within $[-0.09,0.09]$, this noise does not significantly affect the sparsity structure.
Thus, given $\omega = 0.1$, approximately $95\%$ of the entries in $\vs$ will have absolute values below $\omega$, effectively maintaining sparsity.

\paragraph{Gaussian} 
For a Gaussian-distributed variable, it is known that approximately $95\%$ of values lie within $[-2\sigma, 2\sigma]$. To achieve the required sparsity threshold $\omega = 0.1$, we set the standard deviation to $\sigma = 0.05$.

\paragraph{Laplace} 
For a Laplace-distributed variable $X$, the probability that its absolute value does not exceed $\omega$ is given by:
\begin{align*}
    \prob{|X| \leq \omega} &= \int_{-\omega}^{\omega}\frac{1}{2\beta}e^{-\frac{|x|}{\beta}}dx \\
    &= 2\int_{0}^{\omega}\frac{1}{2\beta}e^{-\frac{x}{\beta}}dx \\
    &= \int_{0}^{\omega}\frac{1}{\beta}e^{-\frac{x}{\beta}}dx \\
    &= -e^{-\frac{x}{\beta}}\Big|_{0}^{\omega} = 1 - e^{-\frac{\omega}{\beta}}.
\end{align*}
Setting $\beta = \omega / 3$ ensures that $\prob{|X| \leq \omega} \approx 0.95$, thereby achieving the desired sparsity.

\paragraph{Empirical Evaluation} 
With the distribution parameters set, we empirically evaluate the sparsity patterns of the generated vectors. Our goal is to demonstrate that the Laplace distribution is better suited for generating sparse vectors compared to the Gaussian. For each distribution listed in Table~\ref{appendix:tab:distributions_sparsity}, we generate a vector $\vs$ with $d = 10^6$ entries and compute the following evaluation metrics:
\begin{itemize}
    \item \textbf{Sparsity fraction}: The percentage of values with absolute values greater than $\omega$.
    \item \textbf{Maximum absolute value}: $\max_i |c_i|$.
    \item \textbf{Contrast ratio}: 
    \begin{equation}
    \text{Contrast ratio} = \frac{\frac{1}{M} \sum_{|c_i| > \omega} |c_i|}{\omega},
\end{equation}
where $M$ is the number of entries satisfying $|c_i| > \omega$.
\item \textbf{Signal-to-noise ratio (SNR)}:
\begin{equation}
    \text{SNR} = \frac{\sum_{|c_i| > \omega} |c_i|^2}{\sum_{|c_i| < \omega} |c_i|^2}.
\end{equation}
\end{itemize}

The computational results are shown in Table~\ref{appendix:tab:distributions_sparsity}. Given the $5\%$ sparsity constraint, the best performance is achieved by the Bernoulli-Uniform method, which produces higher values with a maximum magnitude of $0.51$ and exhibits a superior contrast ratio and SNR, indicating better sparsity characteristics. The Laplace distribution achieves the second-best performance.

\begin{table}[ht]
    \footnotesize
    \centering
    \caption{Empirical sparsity evaluation for different distributions.}
    \begin{tabular}{@{}lllll@{}}
        \toprule
        Method  & Sparsity &  Maximum Absolute Value & Contrast Ratio & SNR \\
        \midrule
        Bernoulli \& Uniform &  $5.0\%$  &  $ 0.51   $  & $2.50$& $3.40$  \\ 
        Gauss $\mathcal{N}(0,0.051^2)$& $5.0\%$  &  $   0.25  $   &  $1.19$ & $0.38$  \\ 
        Laplace$\left(0,\frac{1}{3}\right)$ &  $5.0\%$  &  $ 0.53 $ &  $1.33$ & $0.73$  \\ 
        \bottomrule
    \end{tabular}
    \label{appendix:tab:distributions_sparsity}
\end{table}

\section{Additional Experiments}
\label{app:sec:more_experiments}
In this section we include additional synthetic experiments and additional results regarding the simulated and real-world financial datasets.


\subsection{Empirical Stability of Time Series}
\label{appendix:exp:stability}

According to Theorem~\ref{appendix:th:stabilitySEM}, the stability of the time-series data $\mX$ requires that the weight matrices $\mB_0, \mB_1, \dots, \mB_k$ satisfy an upper bound $w$ such that: 
\begin{equation}
    (5 + 2 + 2)w = 9w < 1, \quad \text{or equivalently,} \quad w < 0.11.
\end{equation}
However, to allow for a greater variety of edge weights, we instead assign uniformly random weights from the range $[0.1, 0.5]$. 
In practice, $\mX$ is typically observed to converge. If any generated dataset results in unbounded values—specifically, if the average value of $\mX$ exceeds $10^6 \cdot NdT$—we discard the sample and repeat the data generation process.

\subsection{Additional metrics}
\label{appendix:exp:additional_metrics}

In Figs. \ref{app:fig:synthetic_plots_laplace},\ref{app:fig:synthetic_plots_bernoulli} we provide the additional metrics AUROC (area under ROC curve), F1-score, the normalized mean square error (NMSE) and the SHD and NMSE on the input $\widehat{\tS}$ approximation. 
Formally, if $\widehat{\mW}$ and $\widehat{\tS}$ are the approximations of the ground truth window graph $\mW$ and structural shocks $\mS$ then:
\begin{equation}
    \text{NMSE} = \frac{\left\|\widehat{\mW} - \mW\right\|_2}{\left\|\mW \right\|_2},\quad \mS\text{ NMSE}=\frac{\left\|\widehat{\tS} - \tS\right\|_2}{\left\|\tS \right\|_2}.
\end{equation}

In Fig.\ref{app:fig:synthetic_plots_laplace} the computation of $\widehat{\tS}$ NMSE is numerically unstable for all methods and is not reported.

\begin{figure}[H]
    \centering
    \begin{subfigure}{0.26\linewidth}
        \centering
        \includegraphics[width=\linewidth]{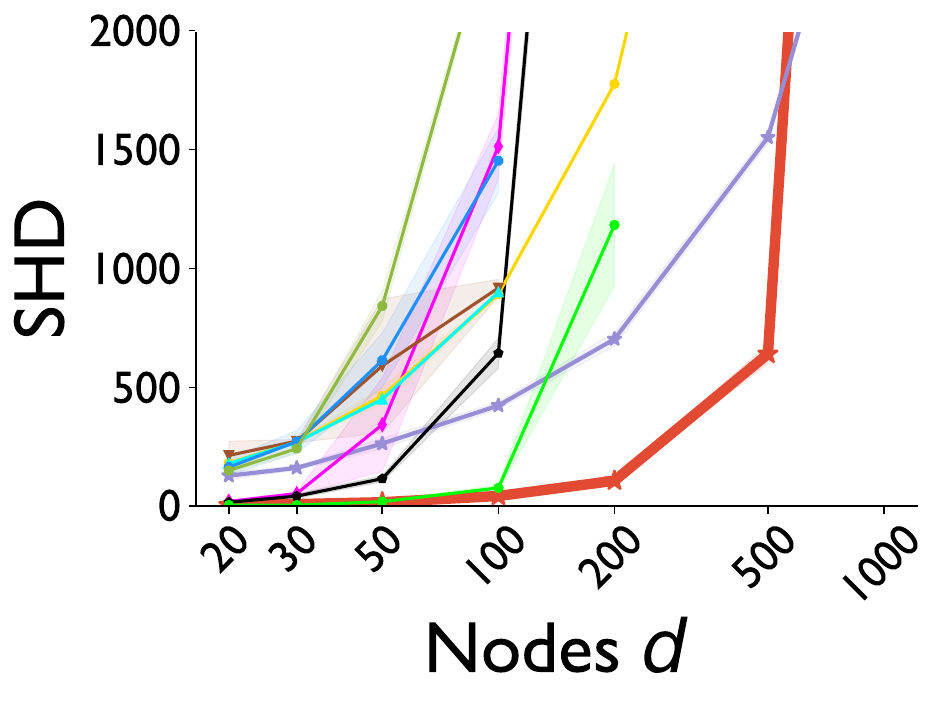}

        \includegraphics[width=\linewidth]{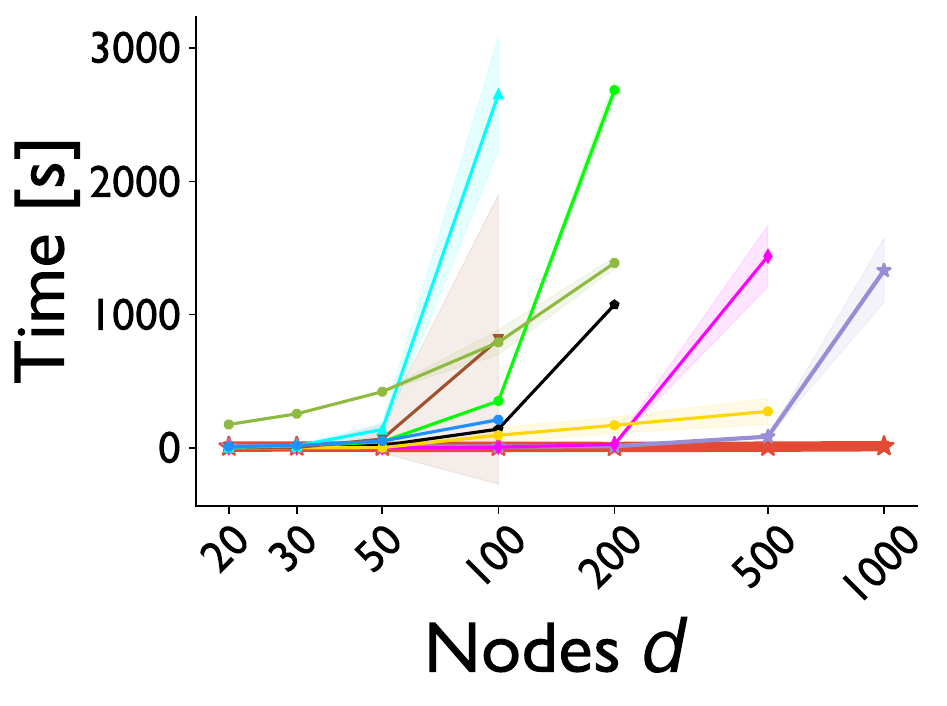}
        
        \includegraphics[width=\linewidth]{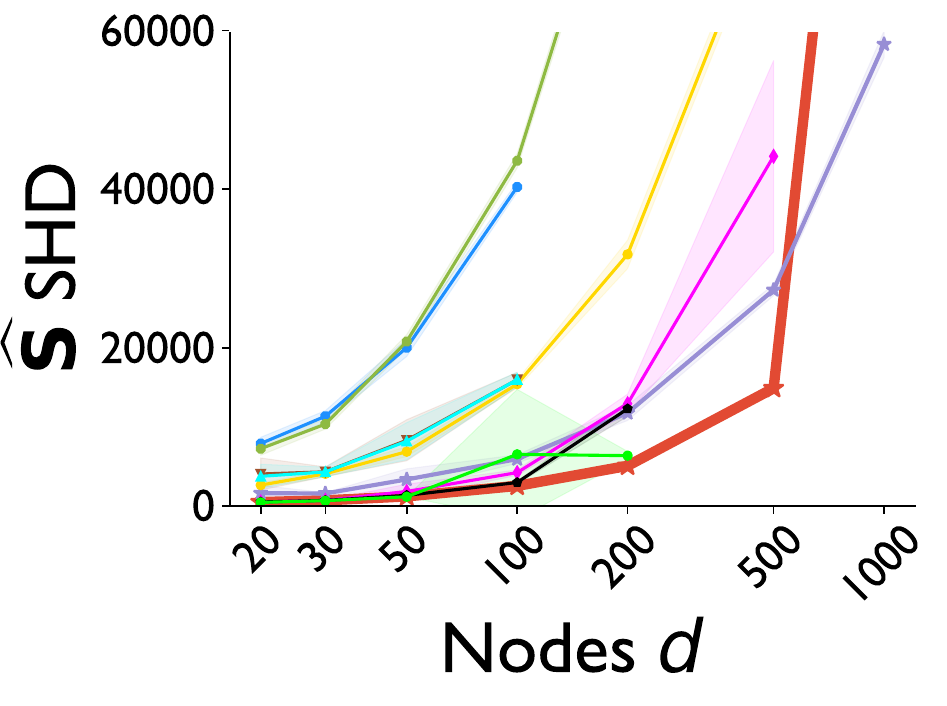}

        \includegraphics[width=\linewidth]{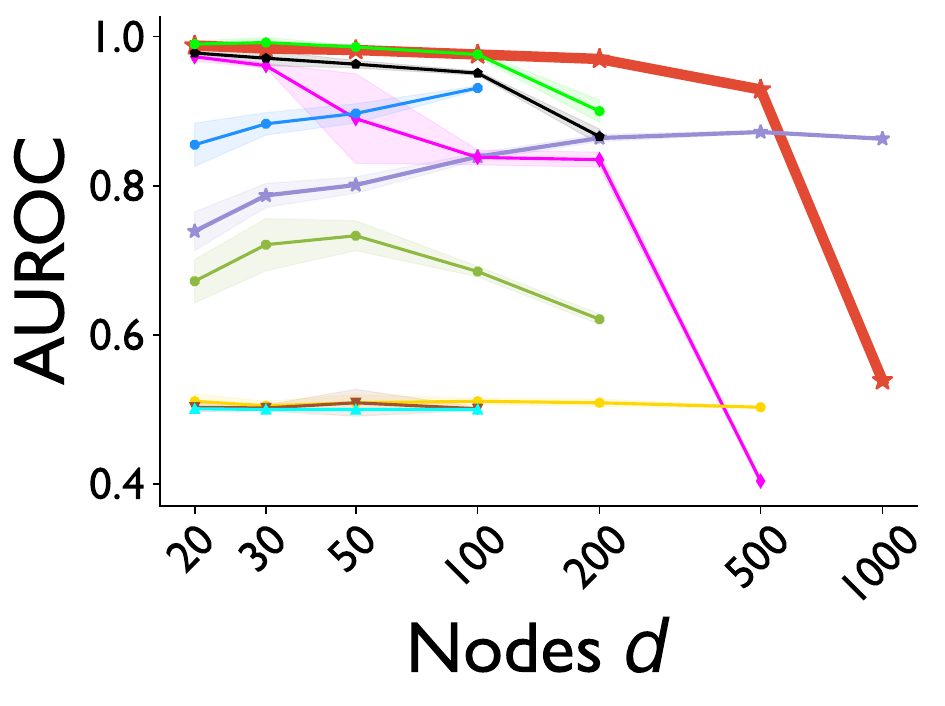}
        
        \includegraphics[width=\linewidth]{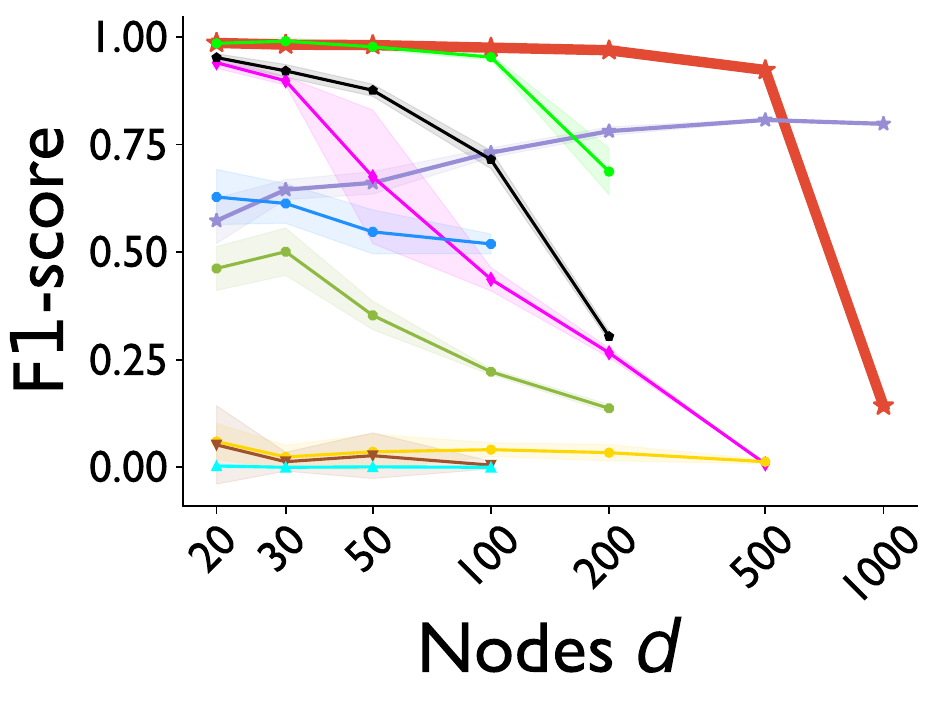}

        \includegraphics[width=\linewidth]{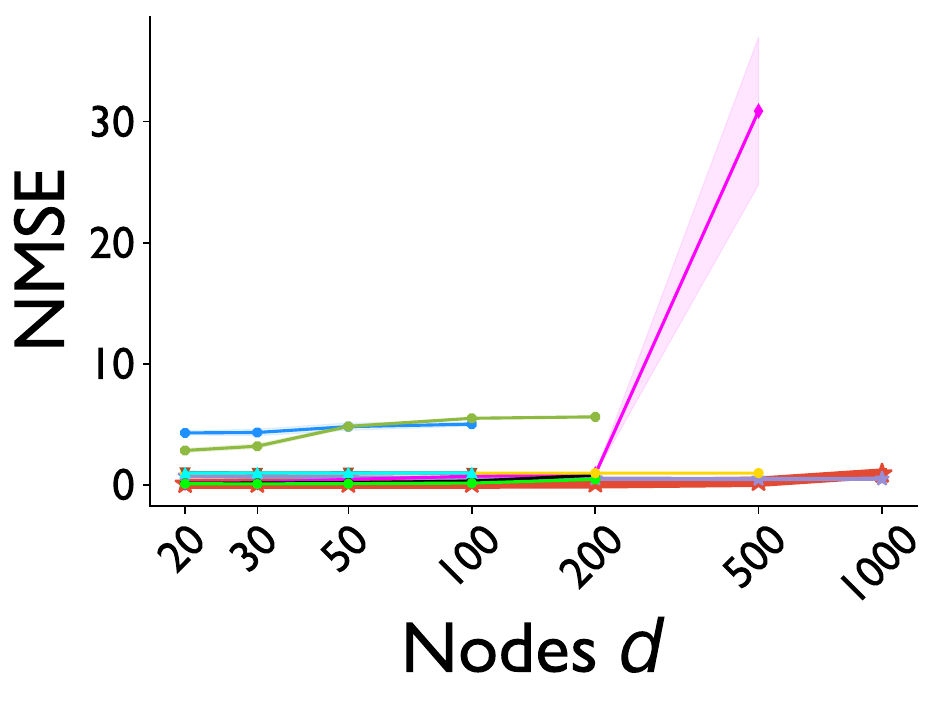}
        \caption{$N=1$, $T=1000$}
    \end{subfigure}
    \hfill
    \begin{subfigure}{0.26\linewidth}
        \centering
        \includegraphics[width=\linewidth]{figures/plot_noise_laplace_noise_std_0.033_sparsity_0.0_timeout_10000_dataset_laplace__shd.pdf}

        \includegraphics[width=\linewidth]{figures/plot_noise_laplace_noise_std_0.033_sparsity_0.0_timeout_10000_dataset_laplace__time.pdf}

        \includegraphics[width=\linewidth]{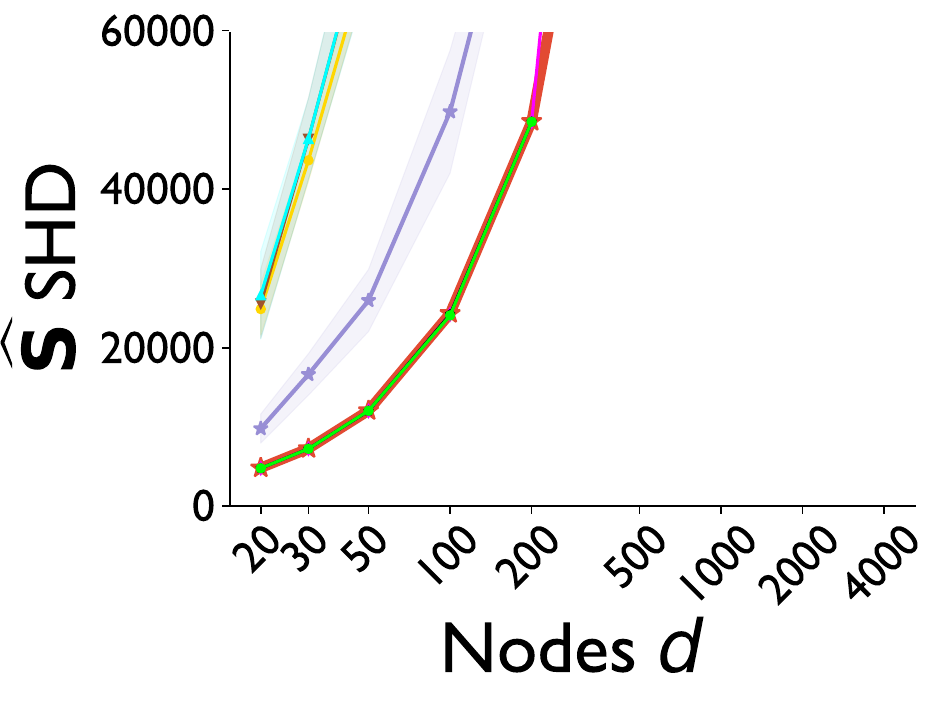}

        \includegraphics[width=\linewidth]{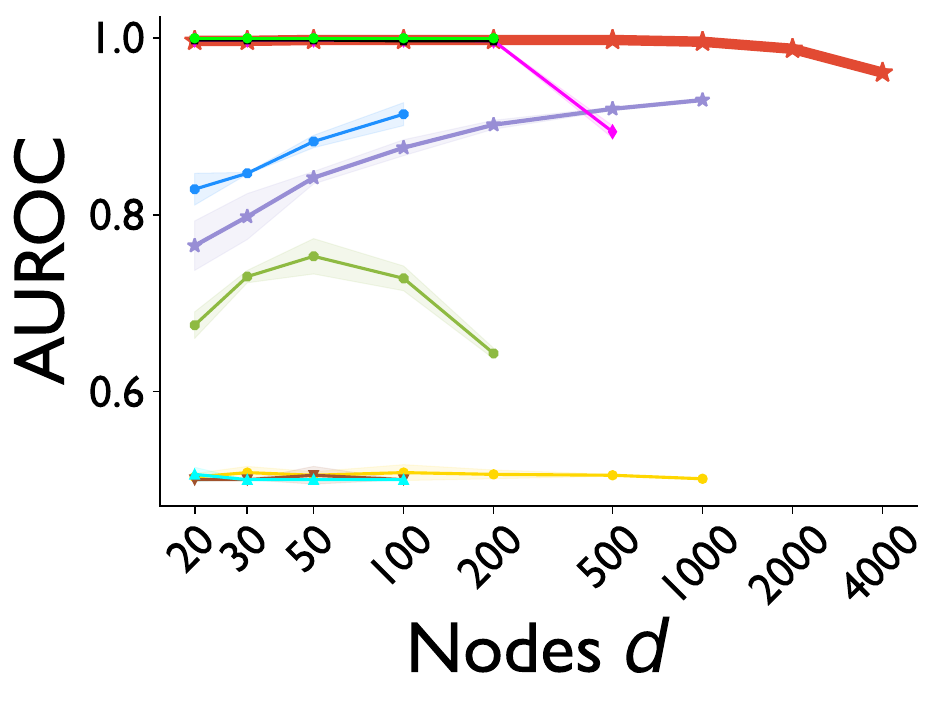}
        
        \includegraphics[width=\linewidth]{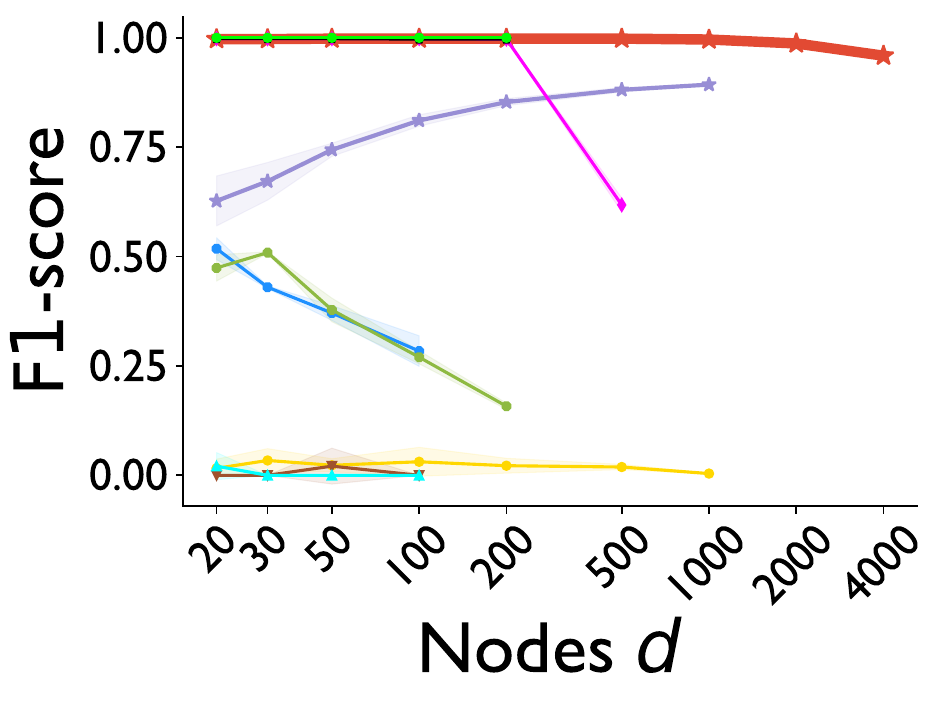}

        \includegraphics[width=\linewidth]{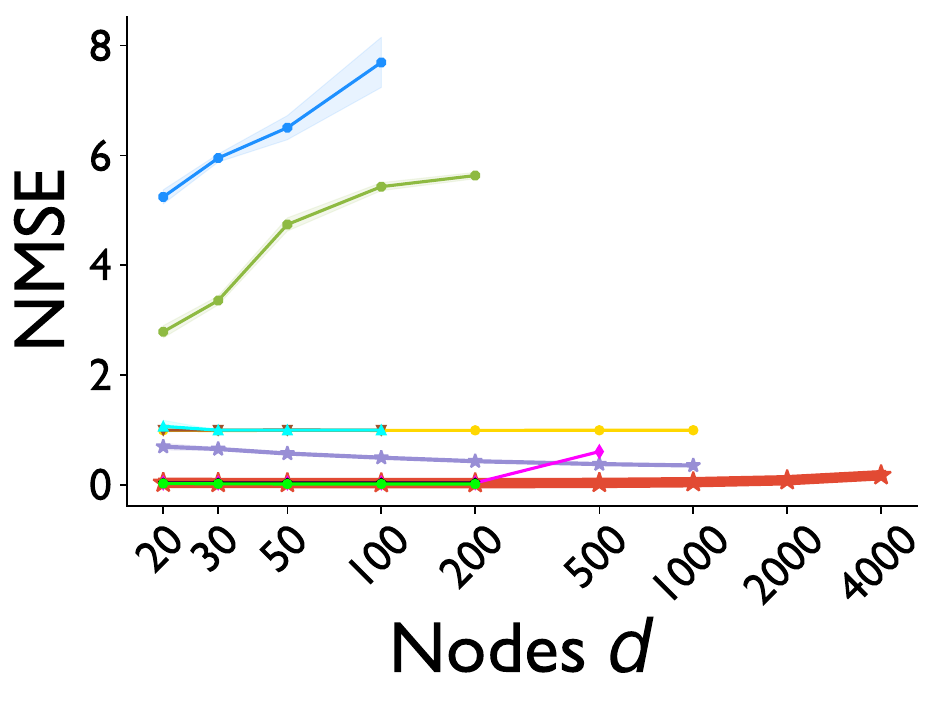}
        \caption{$N=10$, $T=1000$}
    \end{subfigure}
    \hfill
    \begin{subfigure}{0.26\linewidth}
        \centering
        \includegraphics[width=\linewidth]{figures/plot_noise_laplace_noise_std_0.033_sparsity_0.0_samples__1,_2,_3,_5,_10,_20__timeout_10000_dataset_laplace__shd.pdf}
                
        \includegraphics[width=\linewidth]{figures/plot_noise_laplace_noise_std_0.033_sparsity_0.0_samples__1,_2,_3,_5,_10,_20__timeout_10000_dataset_laplace__time.pdf}
        
        \includegraphics[width=\linewidth]{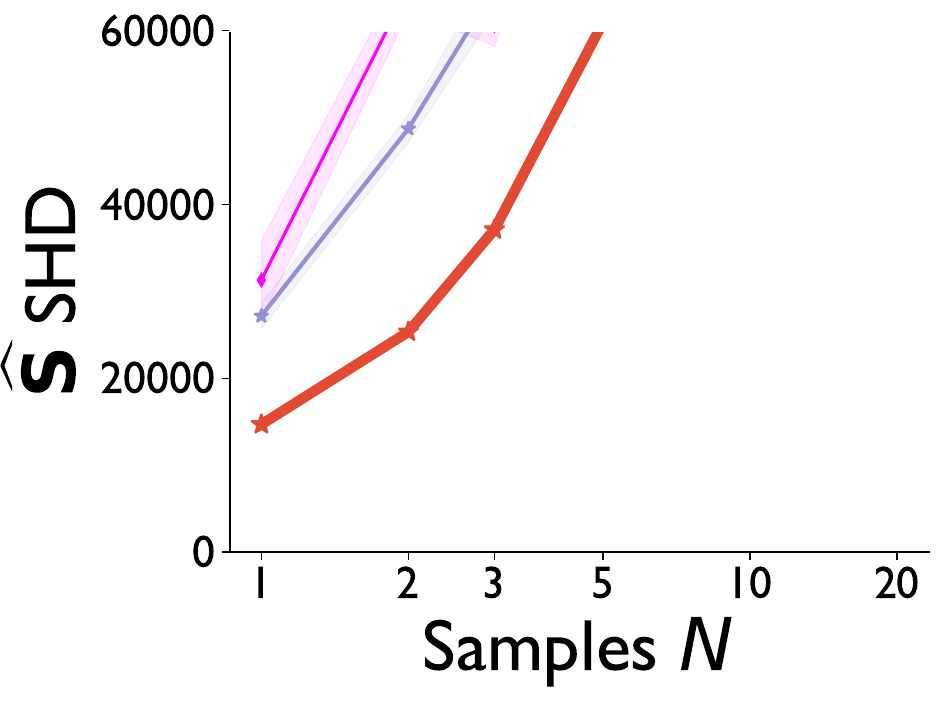}

        \includegraphics[width=\linewidth]{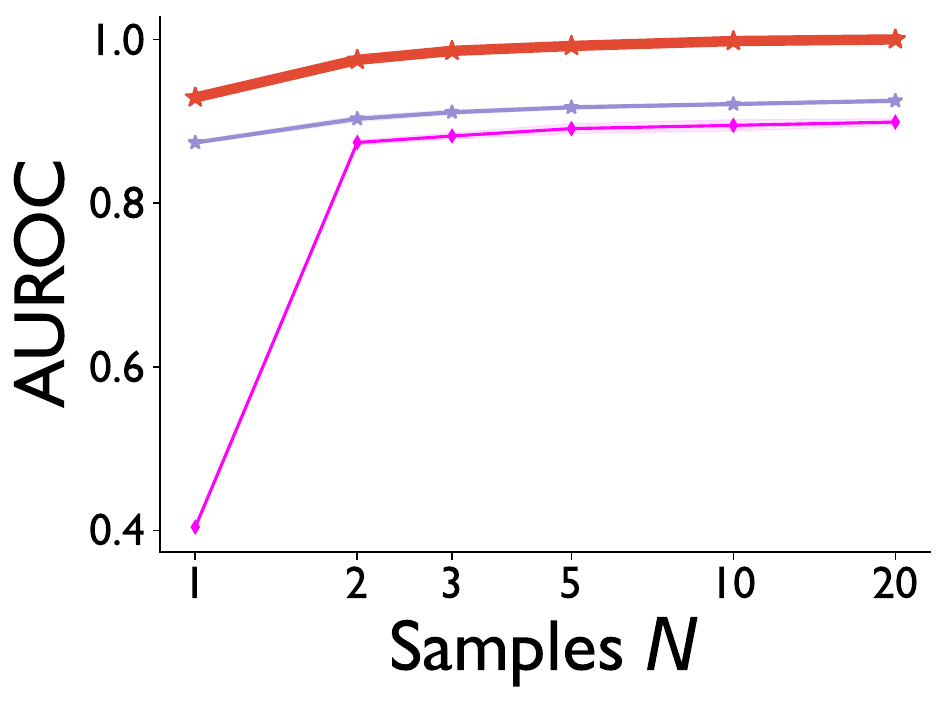}
        
        \includegraphics[width=\linewidth]{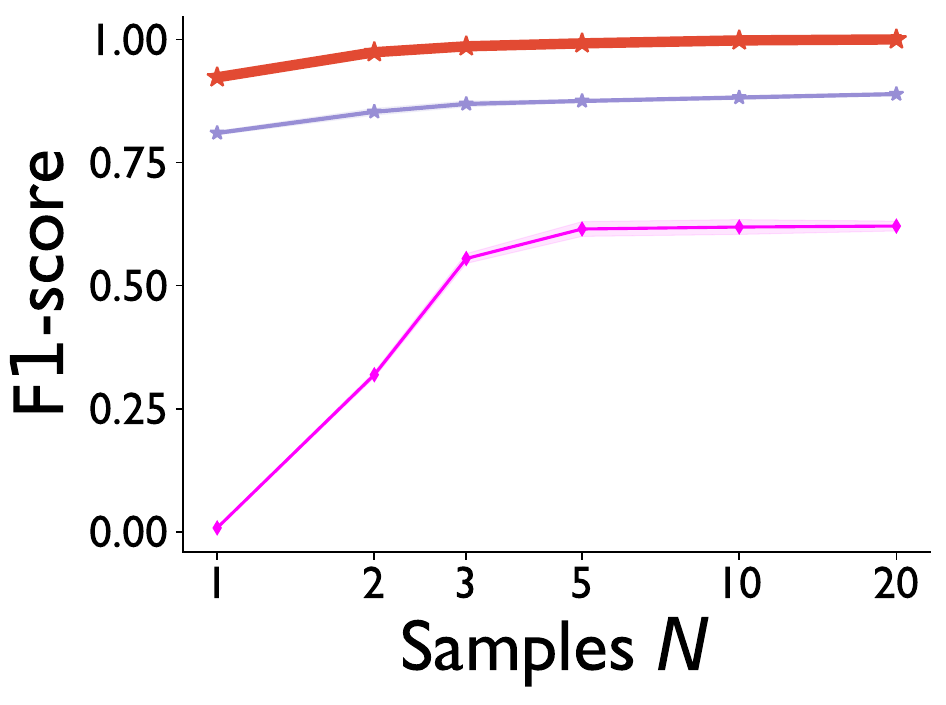}

        \includegraphics[width=\linewidth]{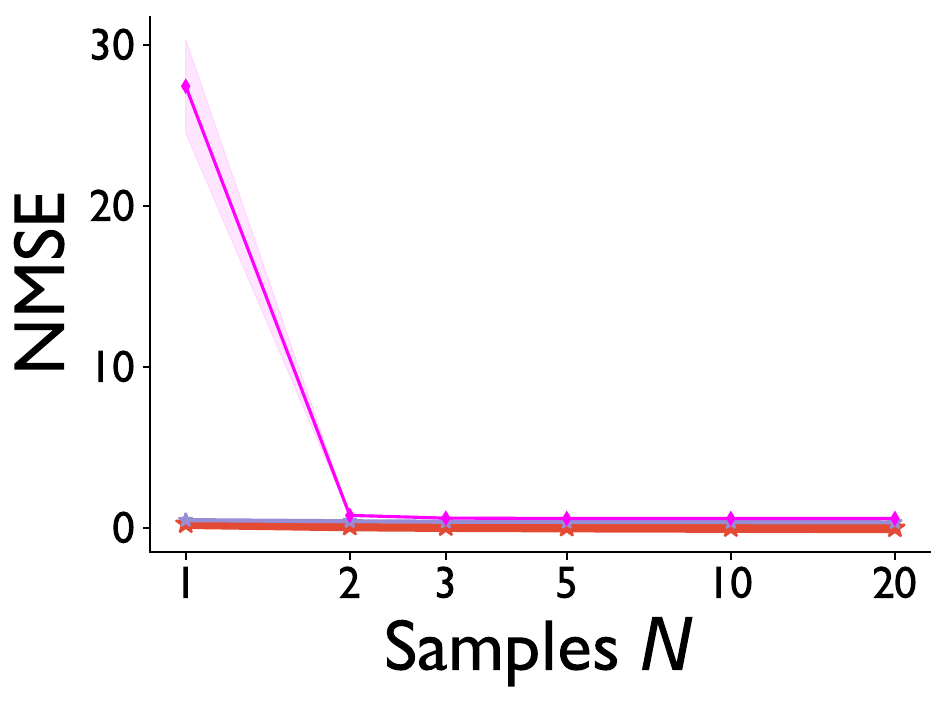}
        \caption{$d=500$, $T=1000$}
    \end{subfigure}
    \hfill
    \begin{subfigure}{0.18\linewidth}
        \hspace{30pt}
        \includegraphics[trim={6.5cm 0 0 0}, clip, width=\linewidth]{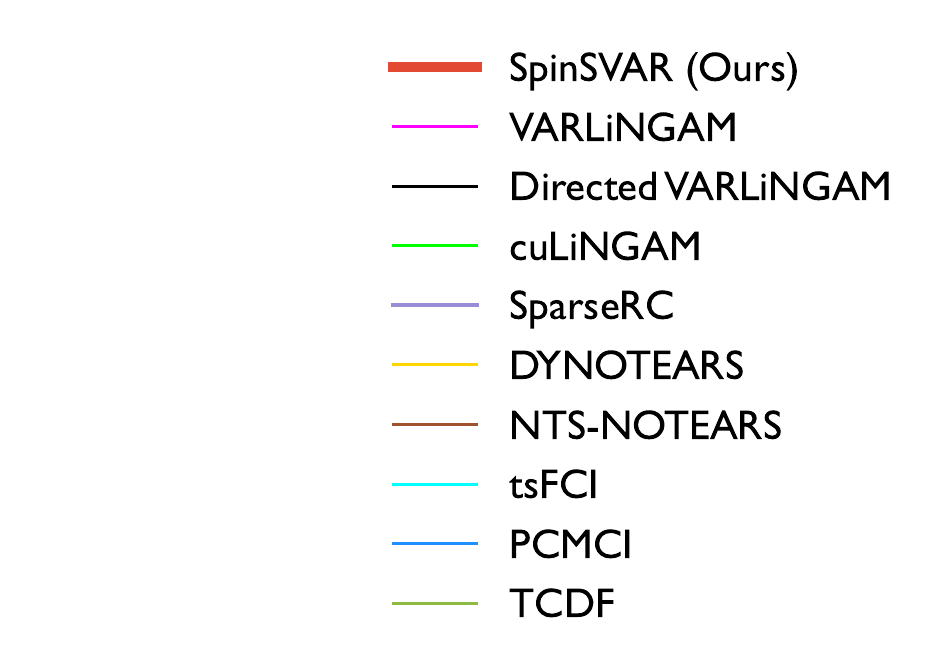}
        \vspace{250pt}
    \end{subfigure}
    \caption{Performance on synthetic data (Laplacian distributed input): AUROC ($\uparrow$), F1-score ($\uparrow$) NMSE ($\downarrow$) and structural shocks NMSE ($\downarrow$). (a), (b) correspond to $N= 1$ and $N=10$ samples of time-series with $T=1000$ and  varying number of nodes. (c) corresponds to $d=500$ nodes and varying samples $N$ of time-series of length $T=1000$.}
    \label{app:fig:synthetic_plots_laplace}
\end{figure}

\begin{figure}[H]
    \centering
    \vspace{-25pt}
    \begin{subfigure}{0.26\linewidth}
        \centering
        \includegraphics[width=\linewidth]{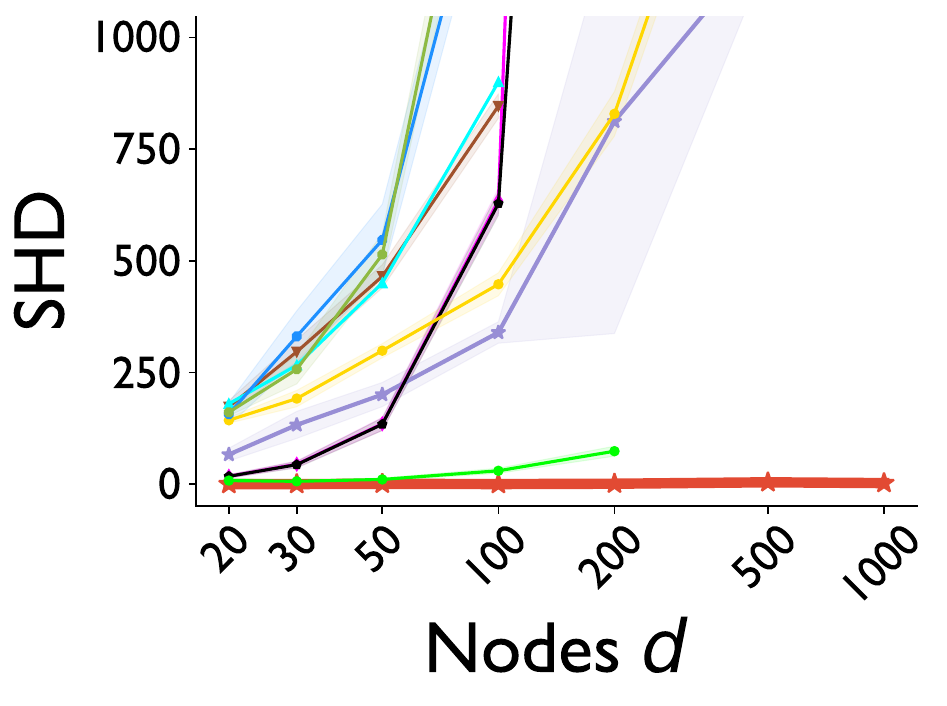}
        
        \includegraphics[width=\linewidth]{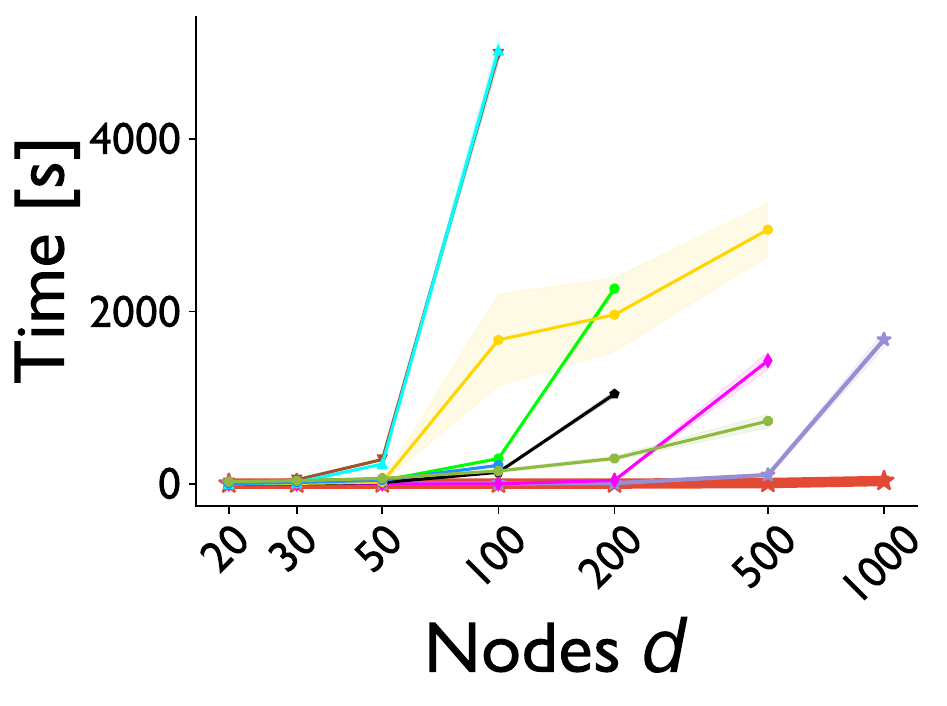}
        
        \includegraphics[width=\linewidth]{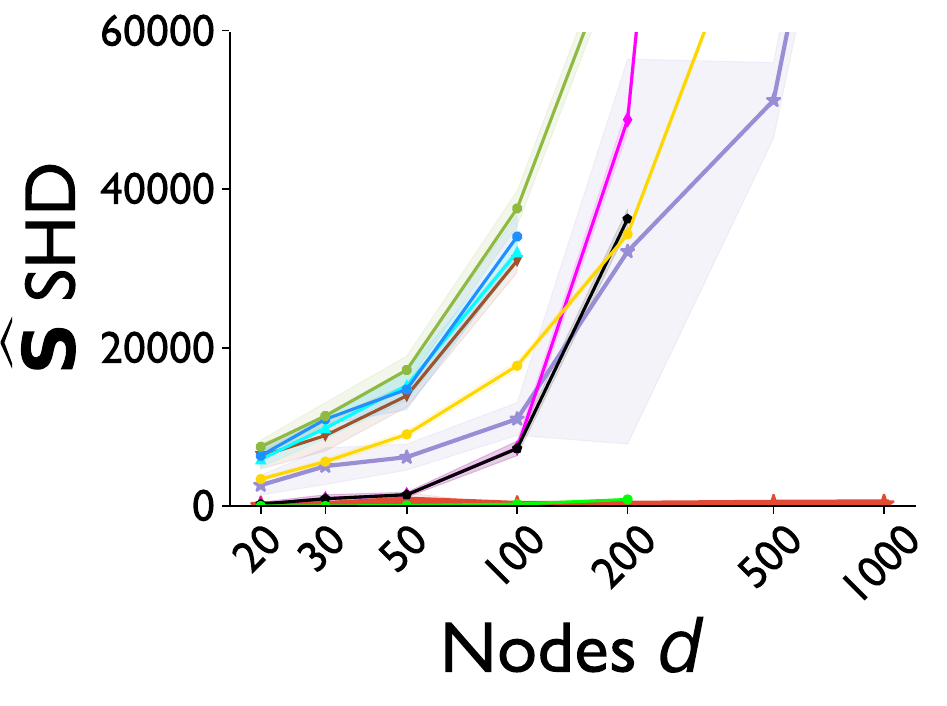}
        
        \includegraphics[width=\linewidth]{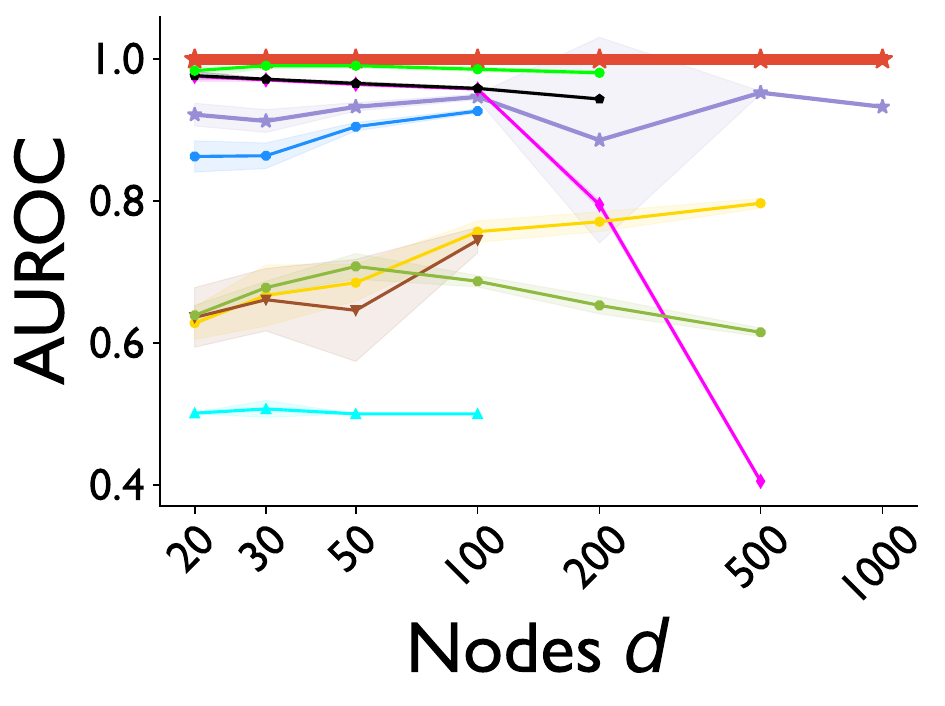}
        
        \includegraphics[width=\linewidth]{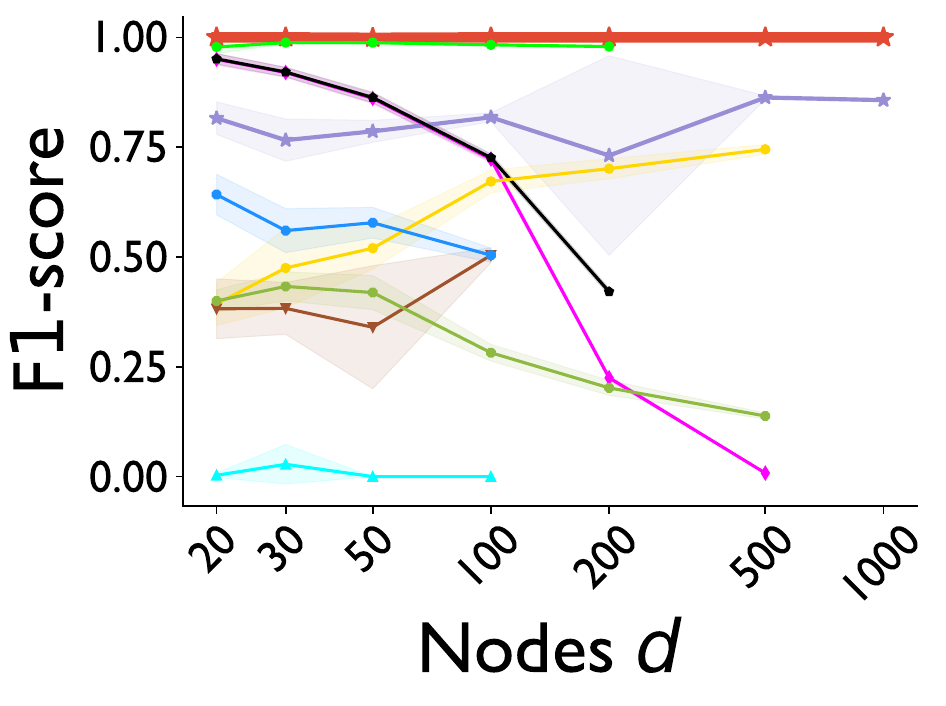}

        \includegraphics[width=\linewidth]{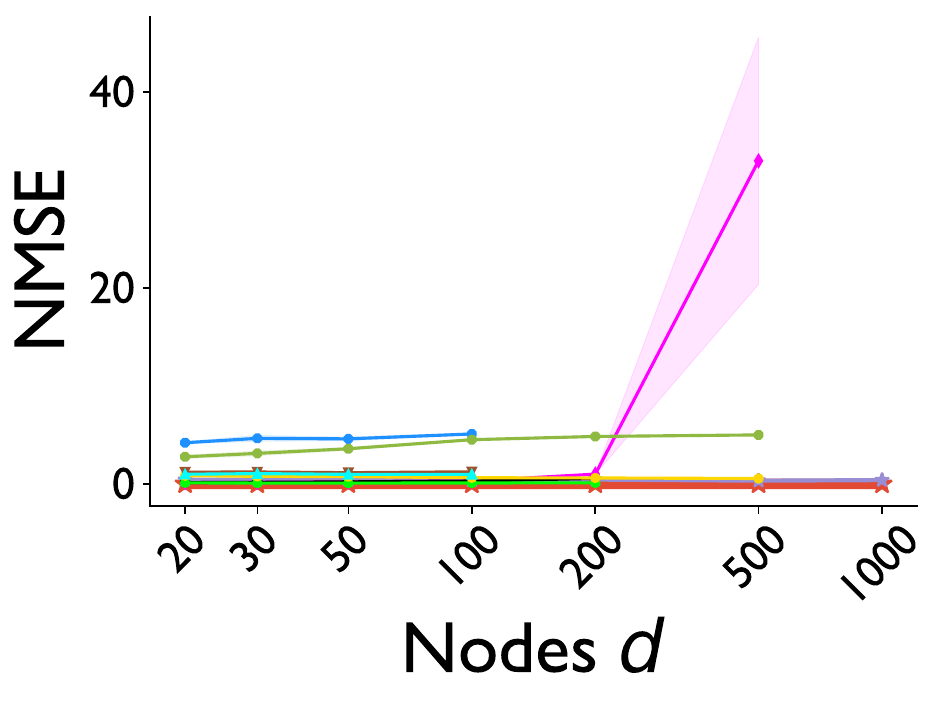}

         \includegraphics[width=\linewidth]{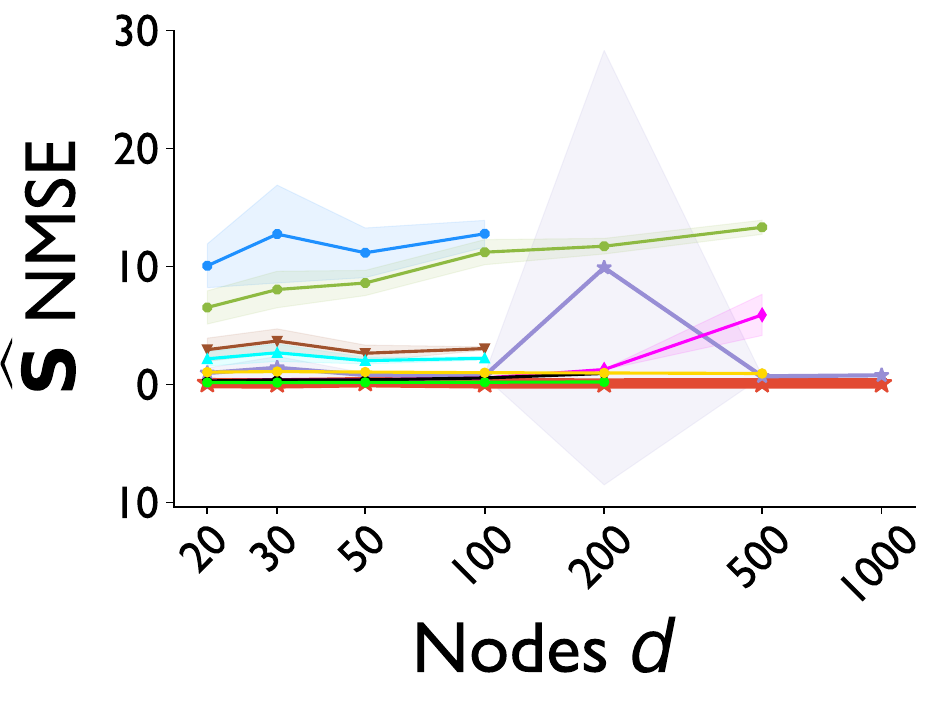}
        \caption{$N=1$, $T=1000$}
    \end{subfigure}
    \hfill
    \begin{subfigure}{0.26\linewidth}
        \centering        
        \includegraphics[width=\linewidth]{figures/plot_timeout_10000__shd.pdf}
        
        \includegraphics[width=\linewidth]{figures/plot_timeout_10000__time.pdf}
        
        \includegraphics[width=\linewidth]{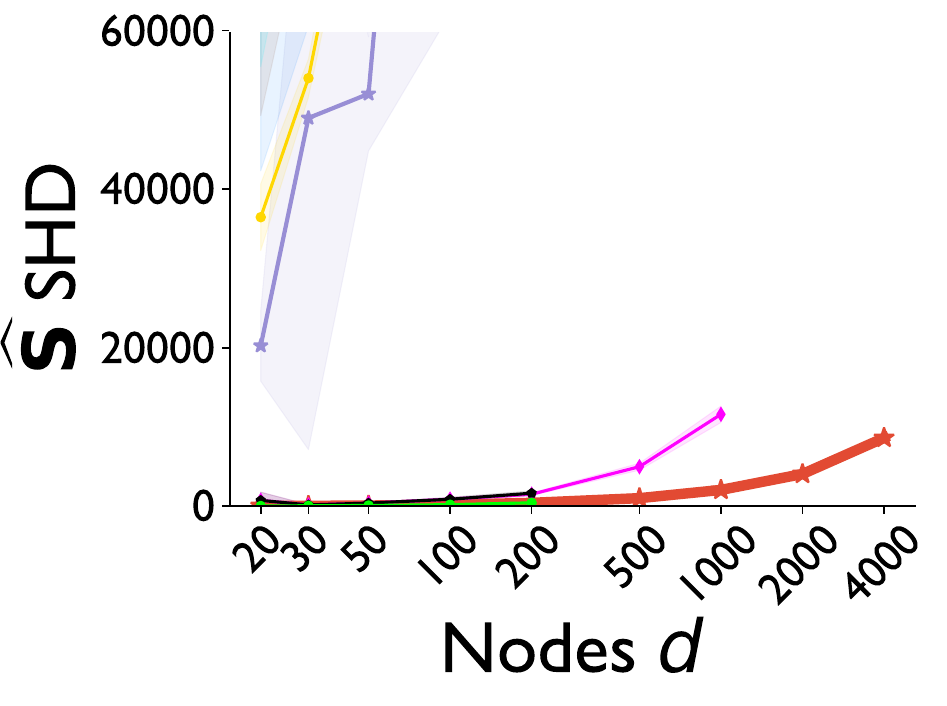}
        
        \includegraphics[width=\linewidth]{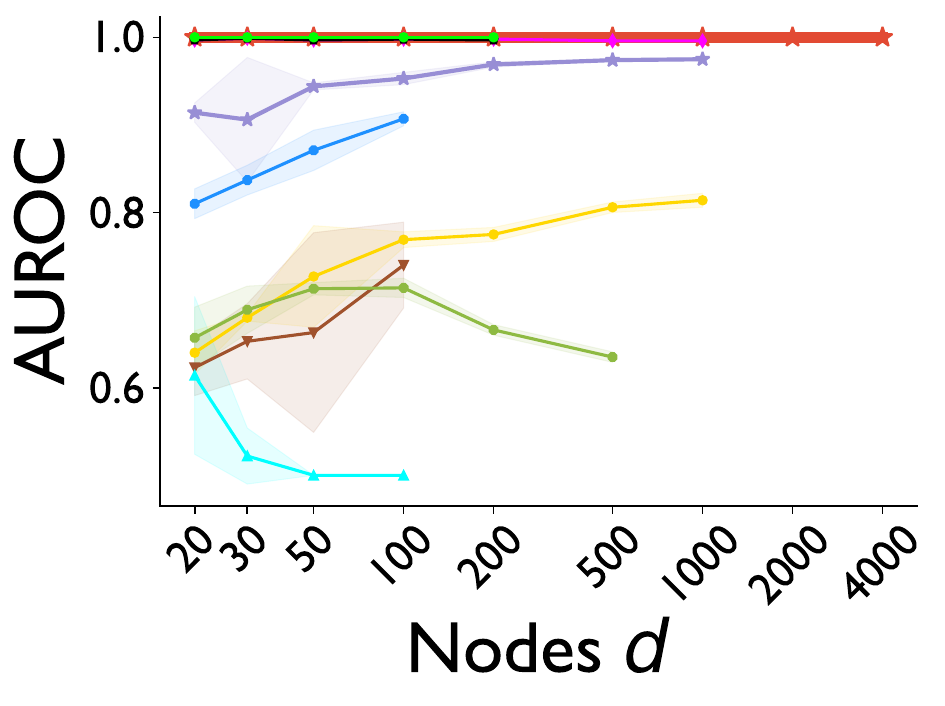}
        
        \includegraphics[width=\linewidth]{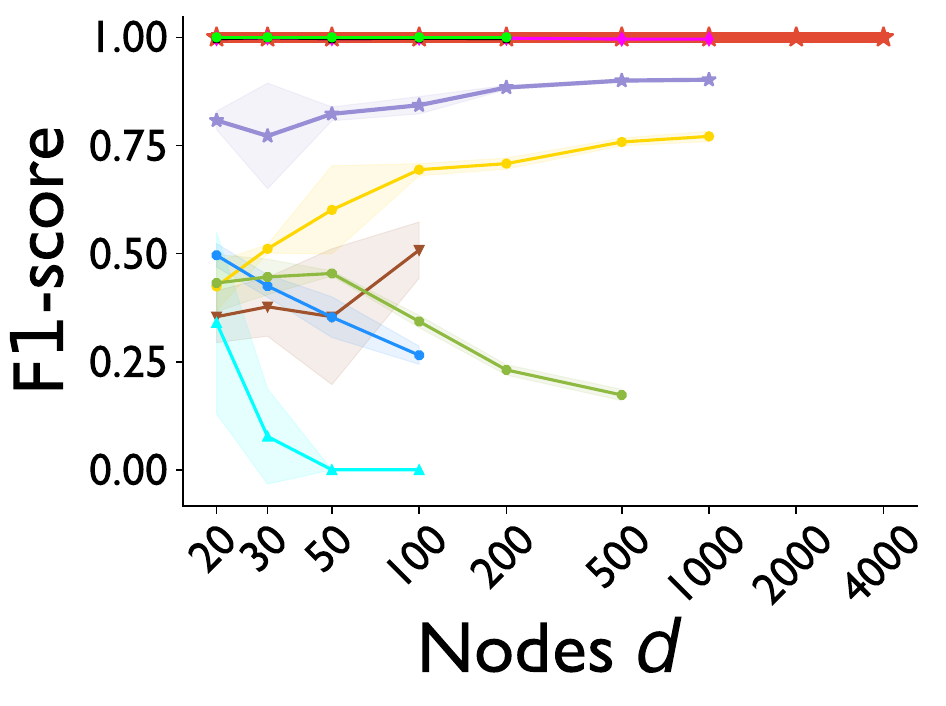}

        \includegraphics[width=\linewidth]{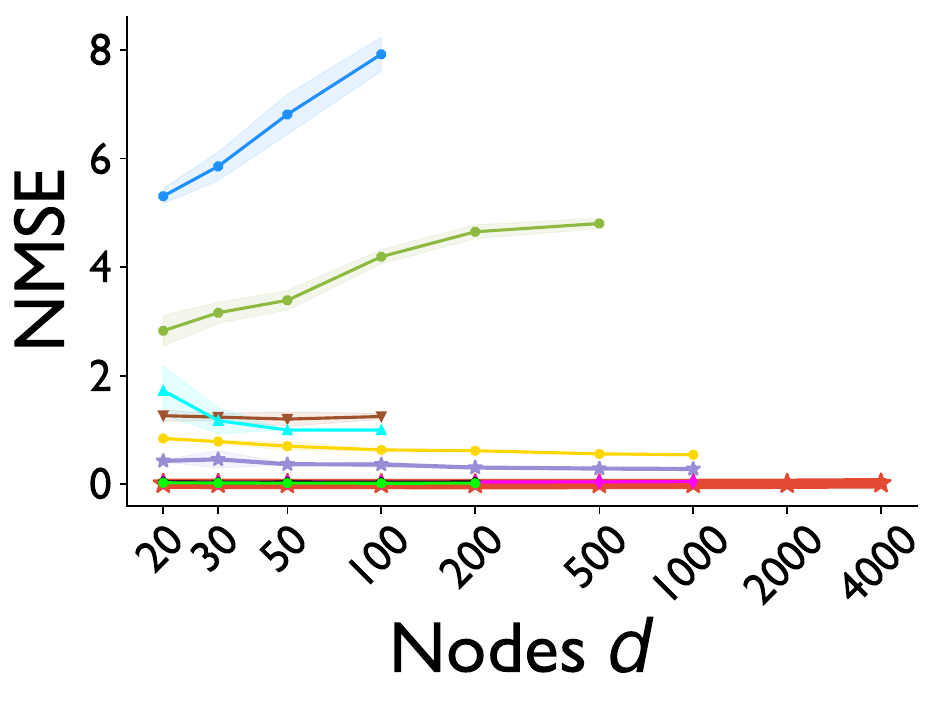}

        \includegraphics[width=\linewidth]{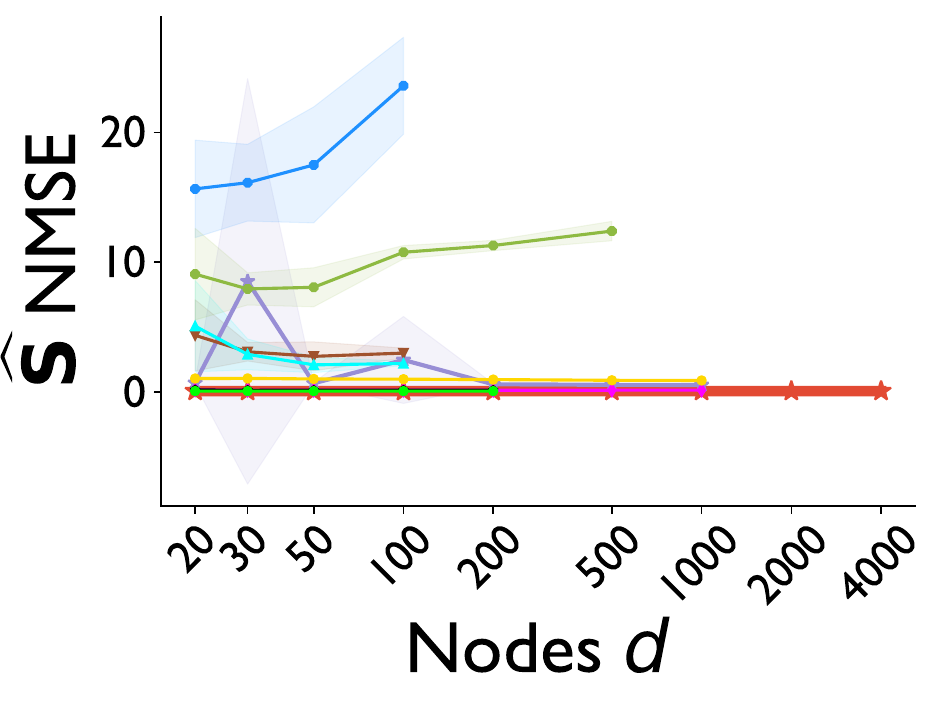}
        \caption{$N=10$, $T=1000$}
    \end{subfigure}
    \hfill
    \begin{subfigure}{0.26\linewidth}
        \centering
        \includegraphics[width=\linewidth]{figures/plot_samples__1,_2,_3,_5,_10,_20__timeout_10000__shd.pdf}

        \includegraphics[width=\linewidth]{figures/plot_samples__1,_2,_3,_5,_10,_20__timeout_10000__time.pdf}
        
        \includegraphics[width=\linewidth]{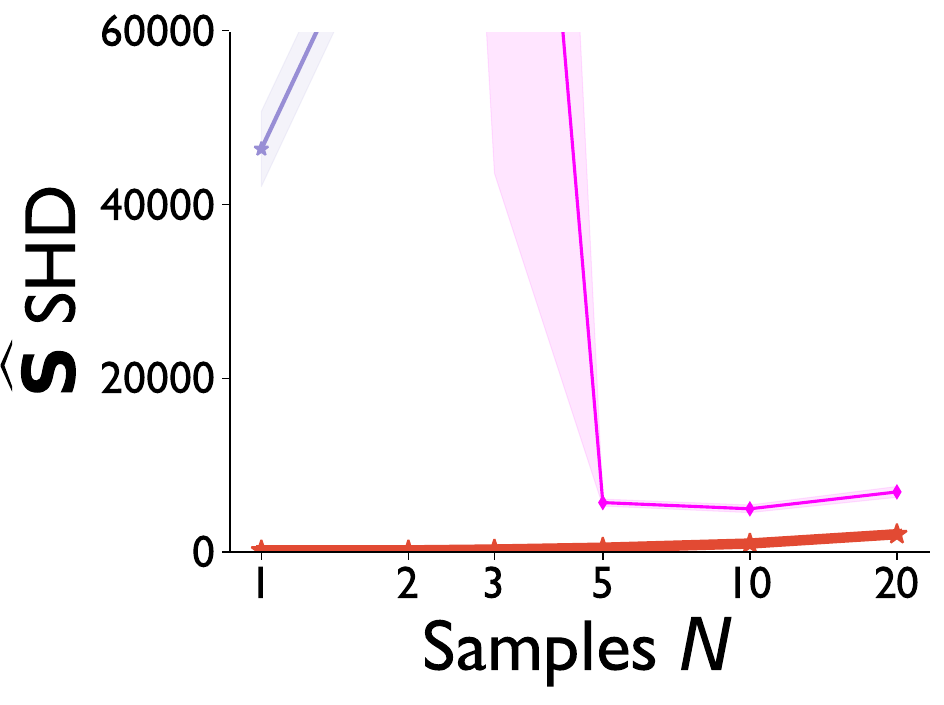}

        \includegraphics[width=\linewidth]{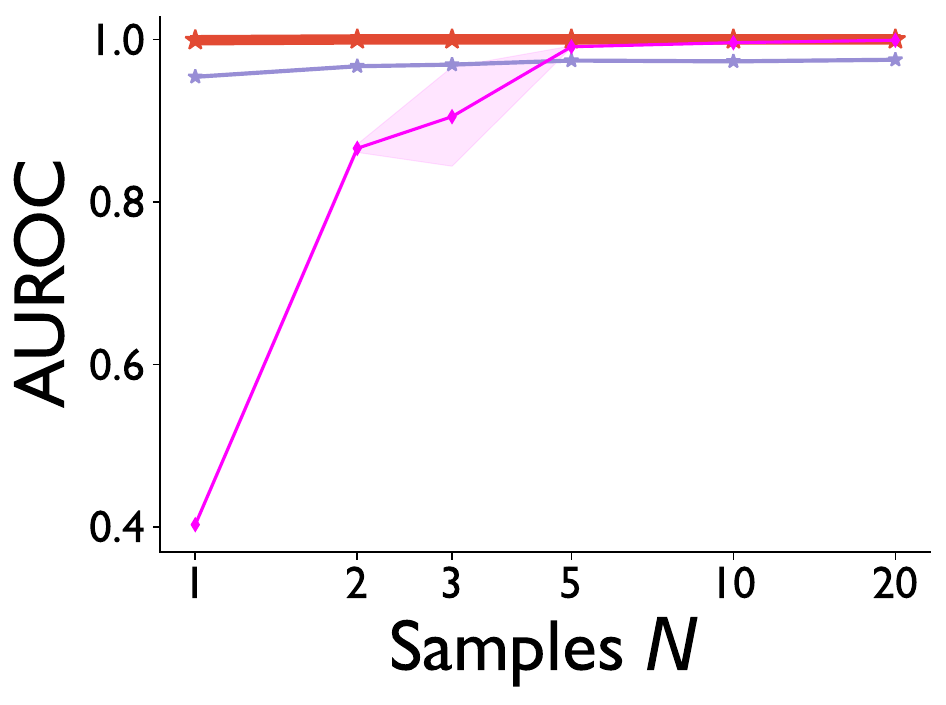}
        
        \includegraphics[width=\linewidth]{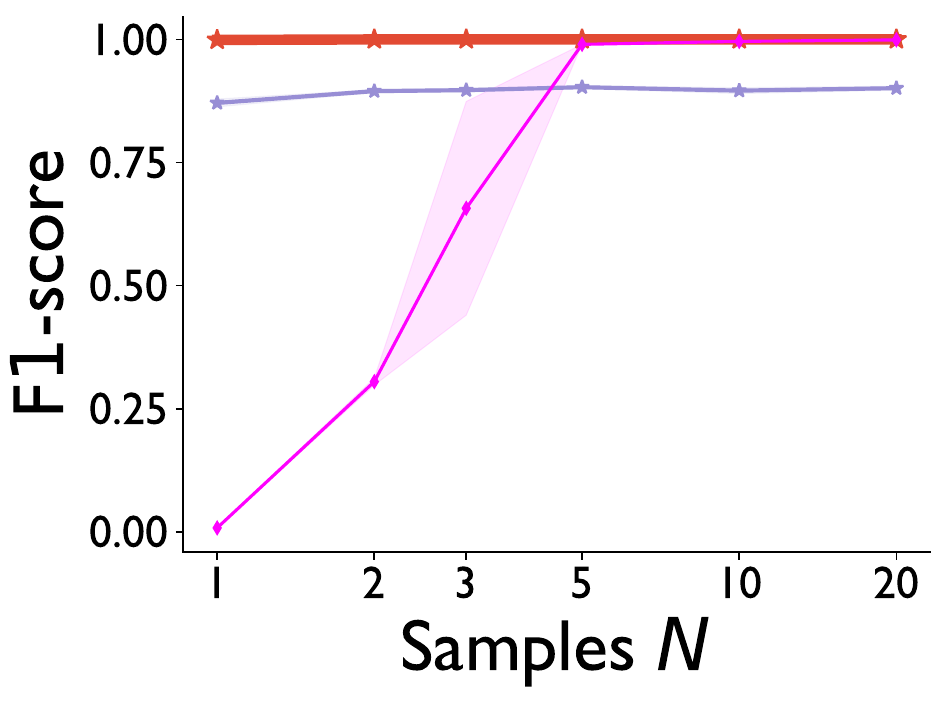}

        \includegraphics[width=\linewidth]{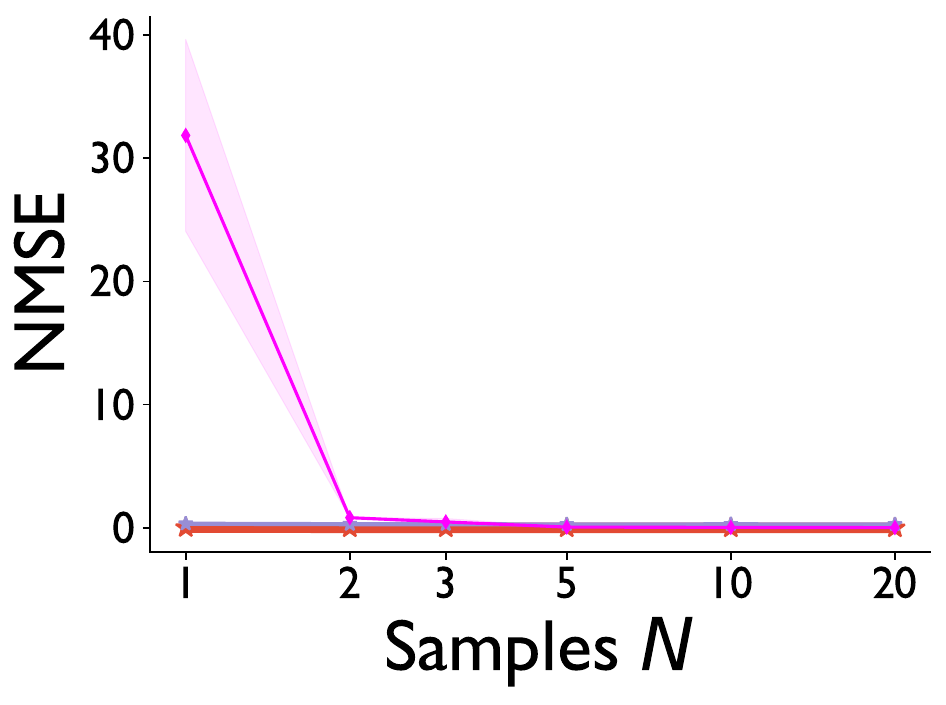}

        \includegraphics[width=\linewidth]{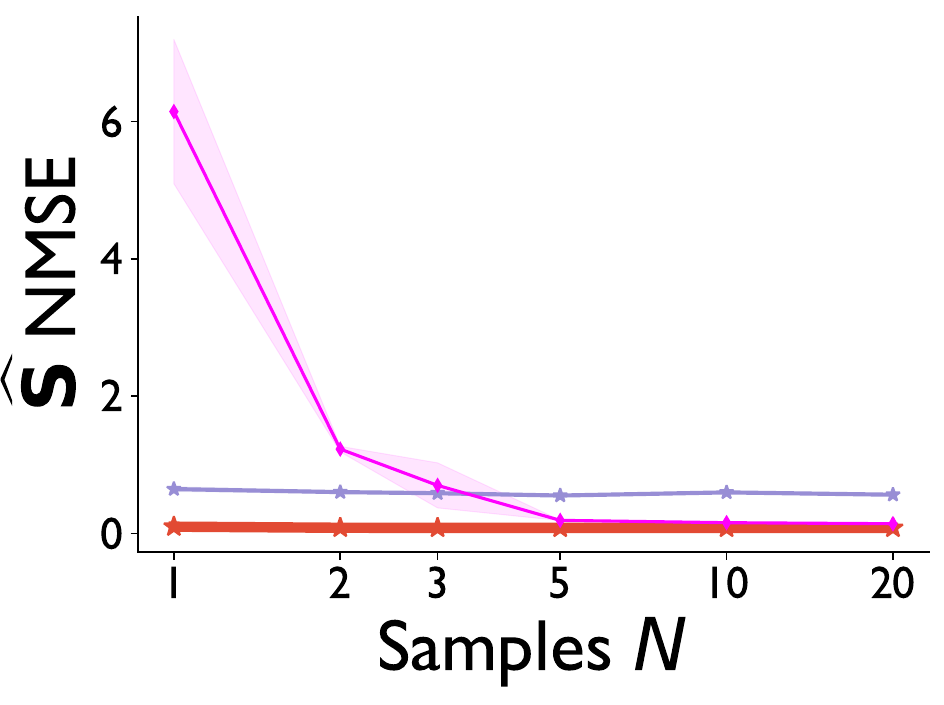}
        \caption{$d=500$, $T=1000$}
    \end{subfigure}
    \begin{subfigure}{0.18\linewidth}
        \hspace{30pt}
        \includegraphics[trim={7cm 0 0 0}, clip, width=\linewidth]{figures/plot_timeout_10000__legend_only.pdf}
        \vspace{300pt}
    \end{subfigure}
    \caption{Performance on synthetic data (Bernoulli distributed input).}
    \label{app:fig:synthetic_plots_bernoulli}
\end{figure}

\subsection{Larger time lag} 
\label{appendix:subsec:large_time_lag}

In Figs.~\ref{fig:appendix:large_lag_laplace},\ref{fig:appendix:large_lag_bernoulli}, we present an experiment with a larger number of time lags, setting $k=5$. This experiment considers $N=10$ samples of time series, each of length $T=1000$, while varying the number of nodes. All other experimental settings remain the same as in the main experiment, except for the weight bounds of $\mW$, which are set to $[0.1,0.2]$. This adjustment is necessary because a larger number of lags requires smaller weights to ensure bounded data, as dictated by Theorem~\ref{appendix:th:stabilitySEM}.
The results are consistent with those in Fig.~\ref{fig:synthetic_plots}, with \mobius performing better than the baselines.

\begin{figure}[H]
    \centering
    \begin{subfigure}[t]{0.26\linewidth}
        \centering
        \includegraphics[width=\linewidth]{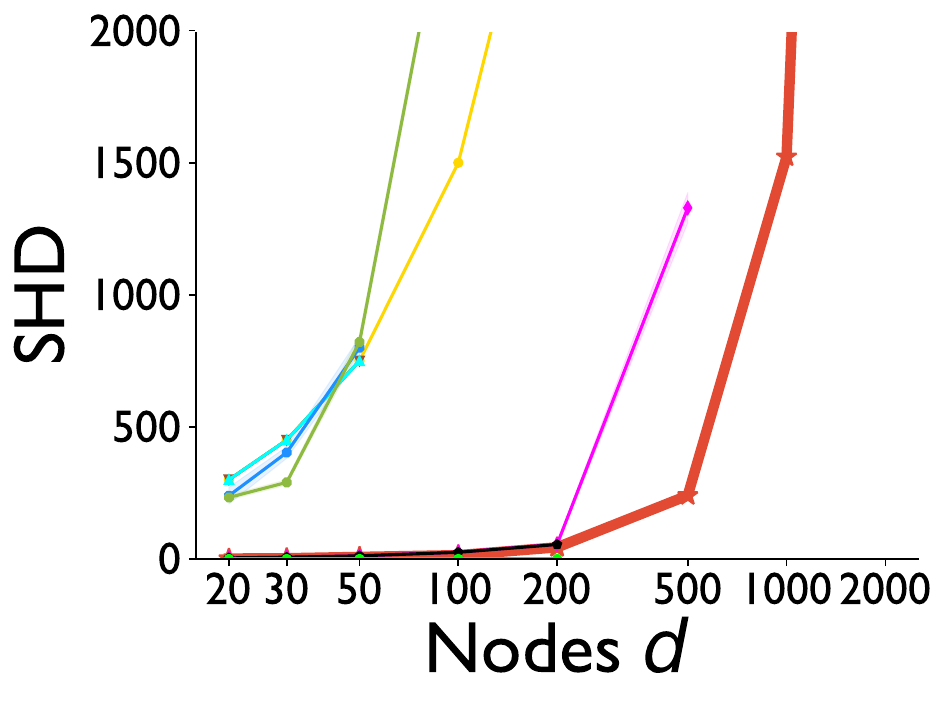}
        
        \includegraphics[width=\linewidth]{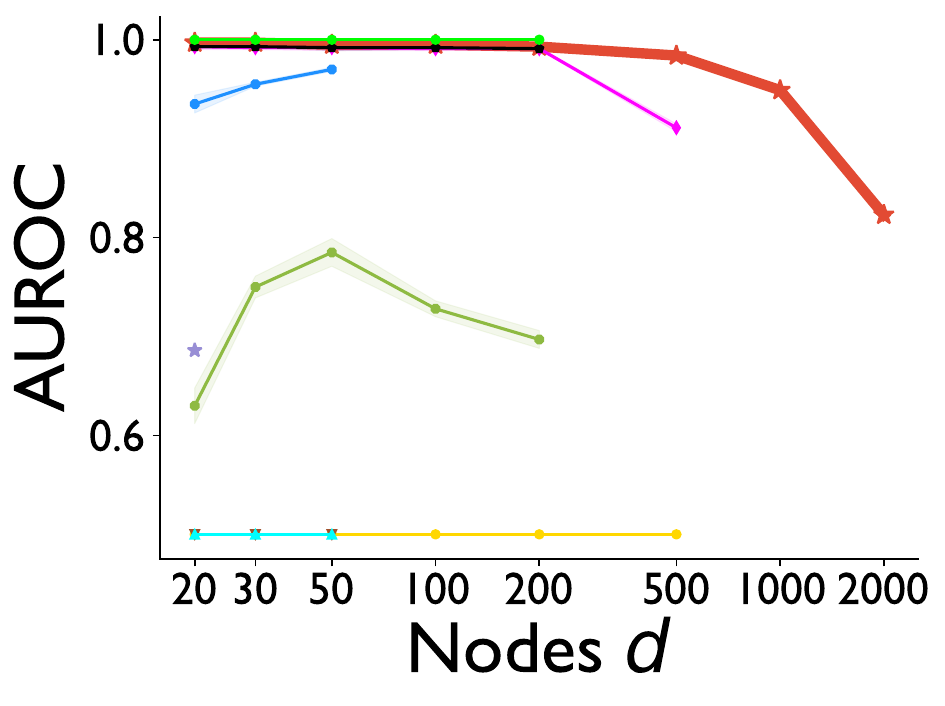}
    \end{subfigure}
    \hfill
    \begin{subfigure}[t]{0.26\linewidth}
        \centering
        \includegraphics[width=\linewidth]{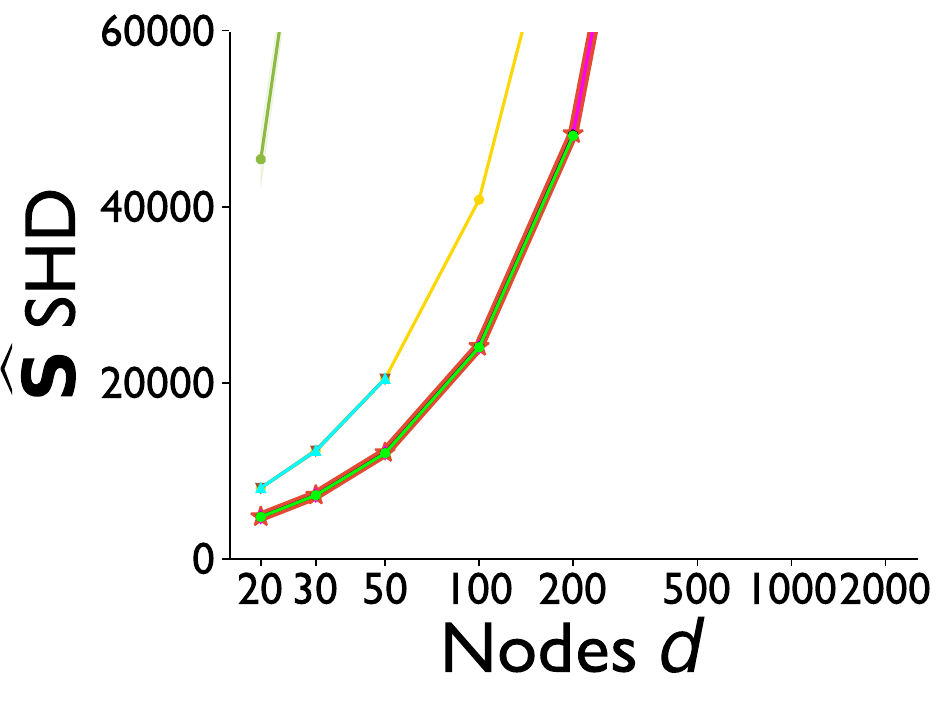}
        
        \includegraphics[width=\linewidth]{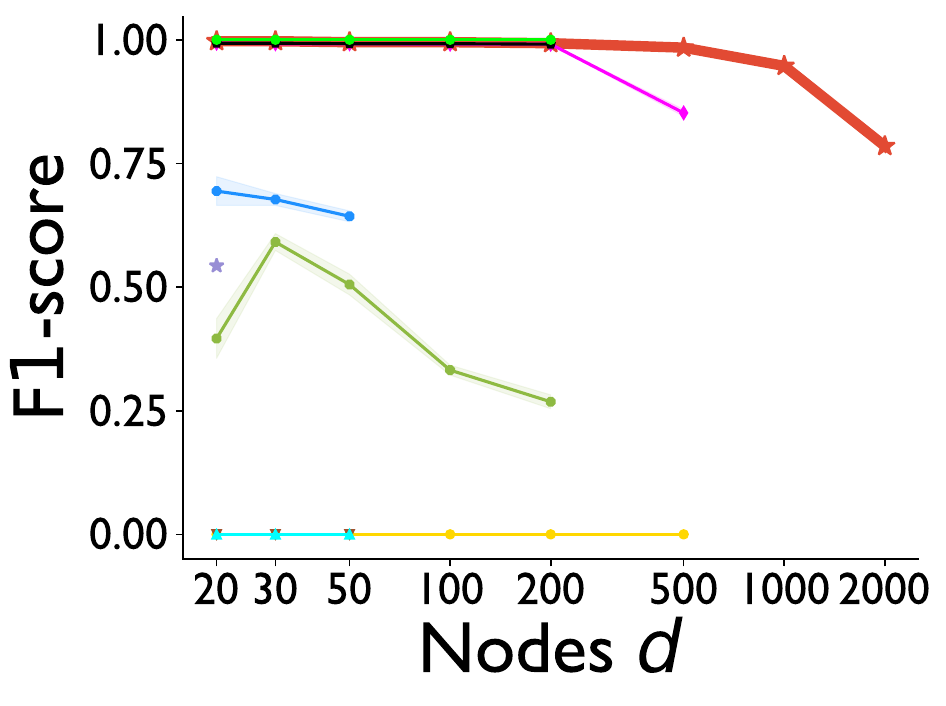}
    \end{subfigure}
    \hfill
    \begin{subfigure}[t]{0.26\linewidth}
        \centering
        \includegraphics[width=\linewidth]{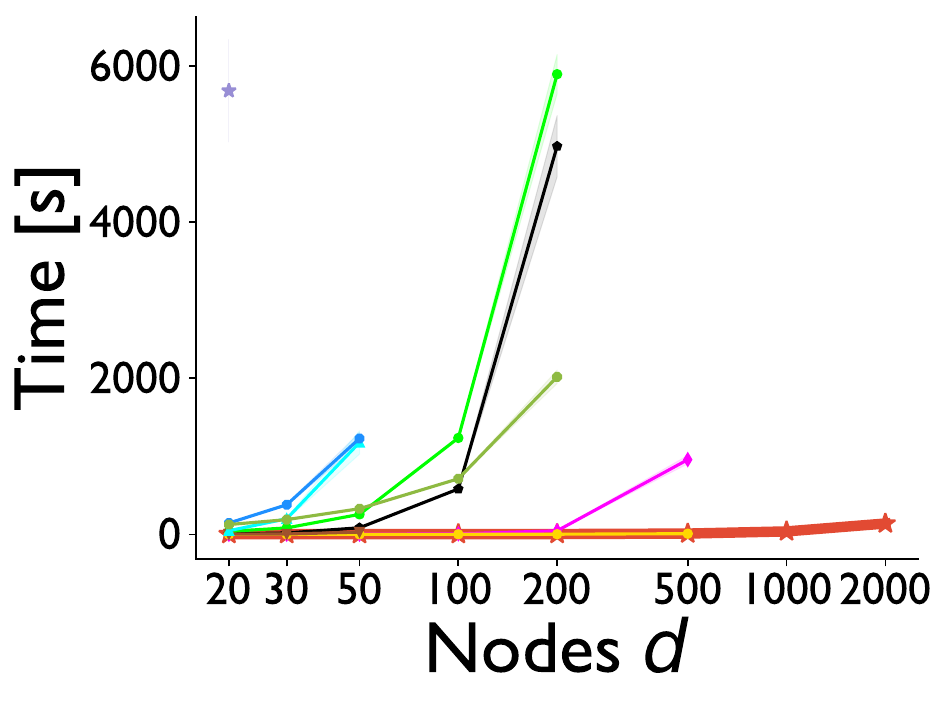}
        
        \includegraphics[width=\linewidth]{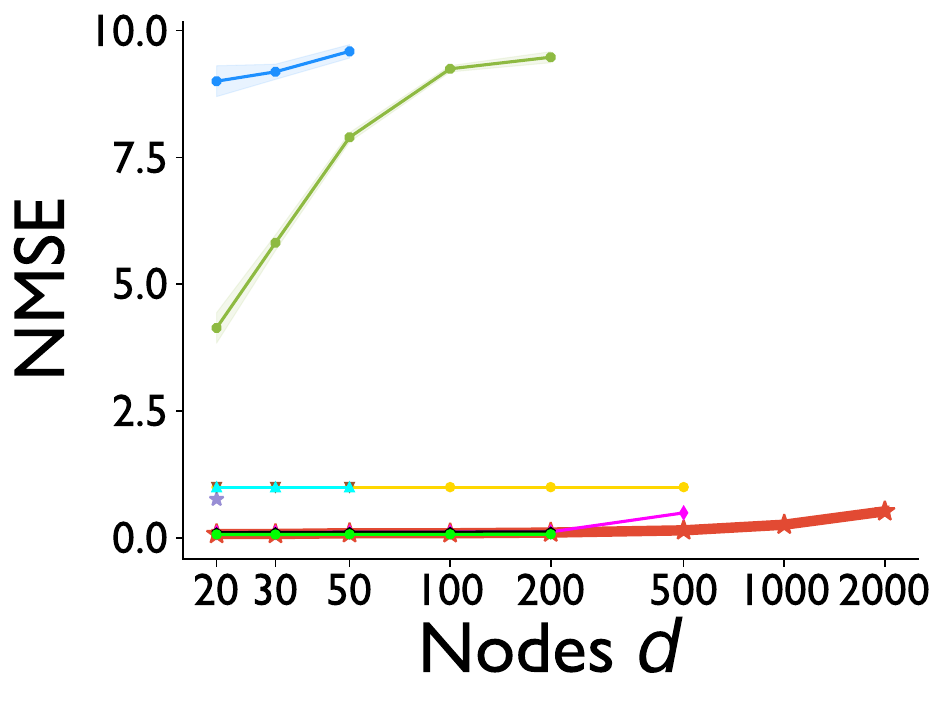}
    \end{subfigure}
    \begin{subfigure}{0.18\linewidth}
        \hspace{30pt}
        \includegraphics[trim={7cm 0 0 0}, clip, width=\linewidth]{figures/plot_timeout_10000__legend_only.pdf}
        \vspace{-50pt}
    \end{subfigure}
    \caption{Synthetic experiment with with larger time lag $k=5$, assuming input with Laplacian distribution. The number of samples is set to $N=10$ and each time series sample has length $T=1000$. The plots show performance for varying number of nodes.}
    \label{fig:appendix:large_lag_laplace}
\end{figure}
\vfill

\begin{figure}[H]
    \centering
    \begin{subfigure}[t]{0.26\linewidth}
        \centering
        \includegraphics[width=\linewidth]{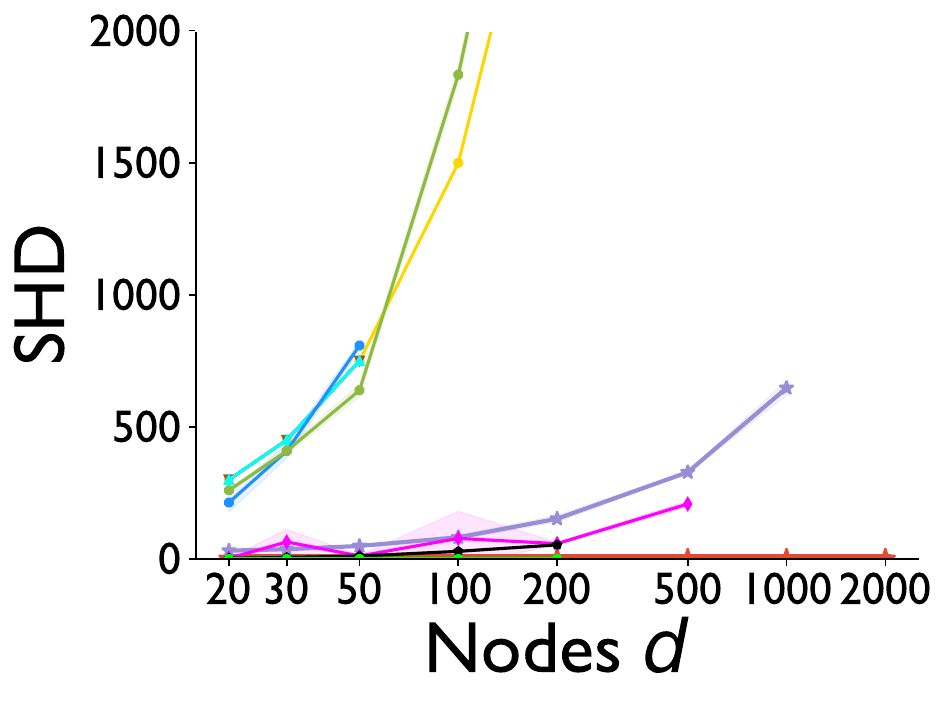}
        
        \includegraphics[width=\linewidth]{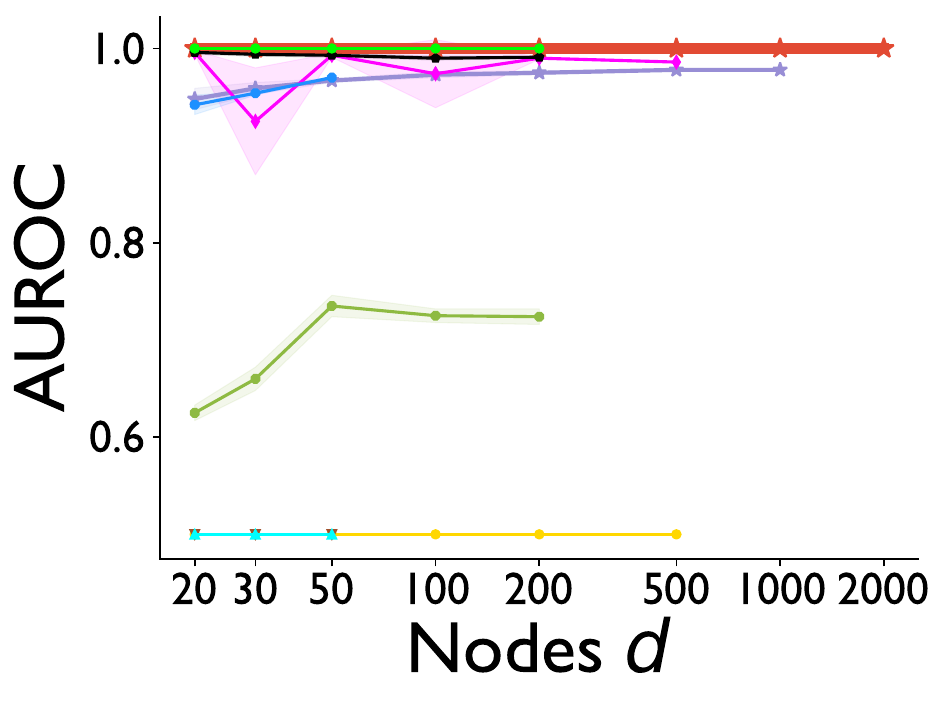}

        \includegraphics[width=\linewidth]{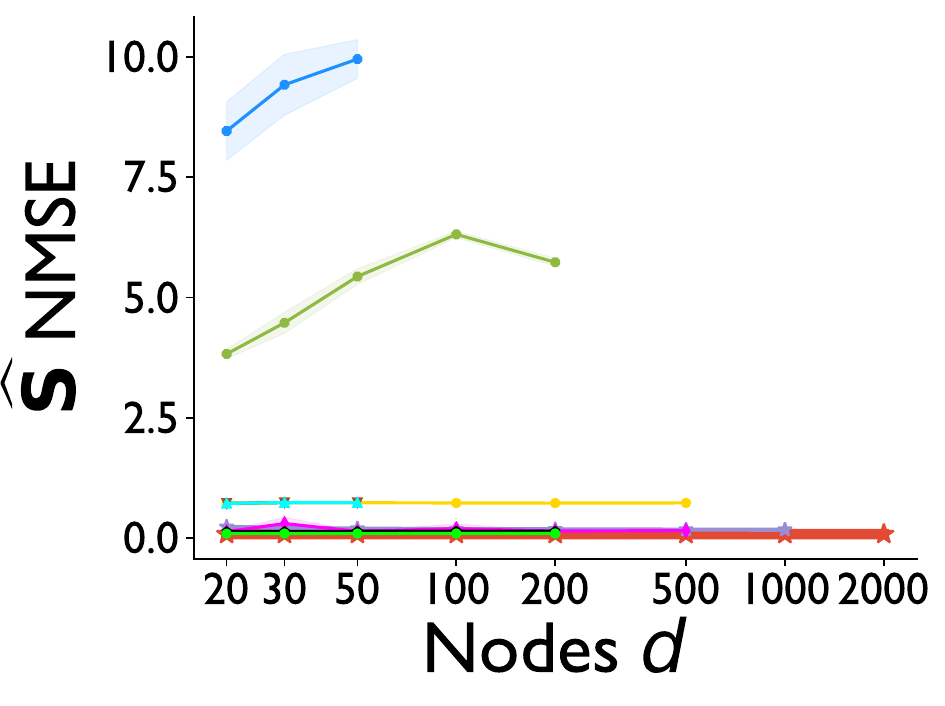}
    \end{subfigure}
    \hfill
    \begin{subfigure}[t]{0.26\linewidth}
        \centering
        \includegraphics[width=\linewidth]{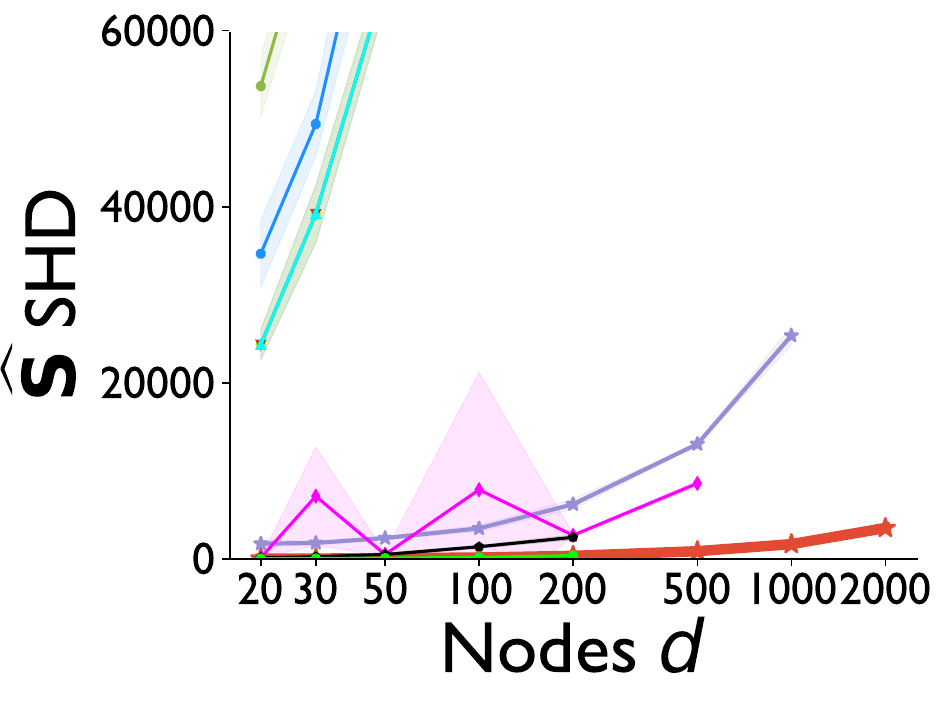}
        
        \includegraphics[width=\linewidth]{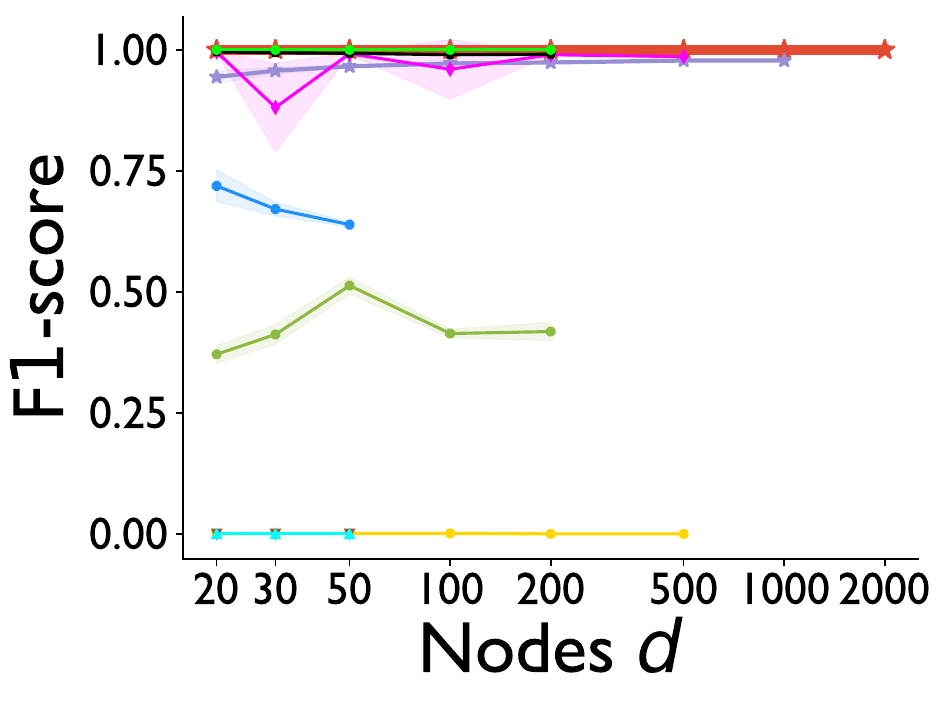}
    \end{subfigure}
    \hfill
    \begin{subfigure}[t]{0.26\linewidth}
        \centering
        \includegraphics[width=\linewidth]{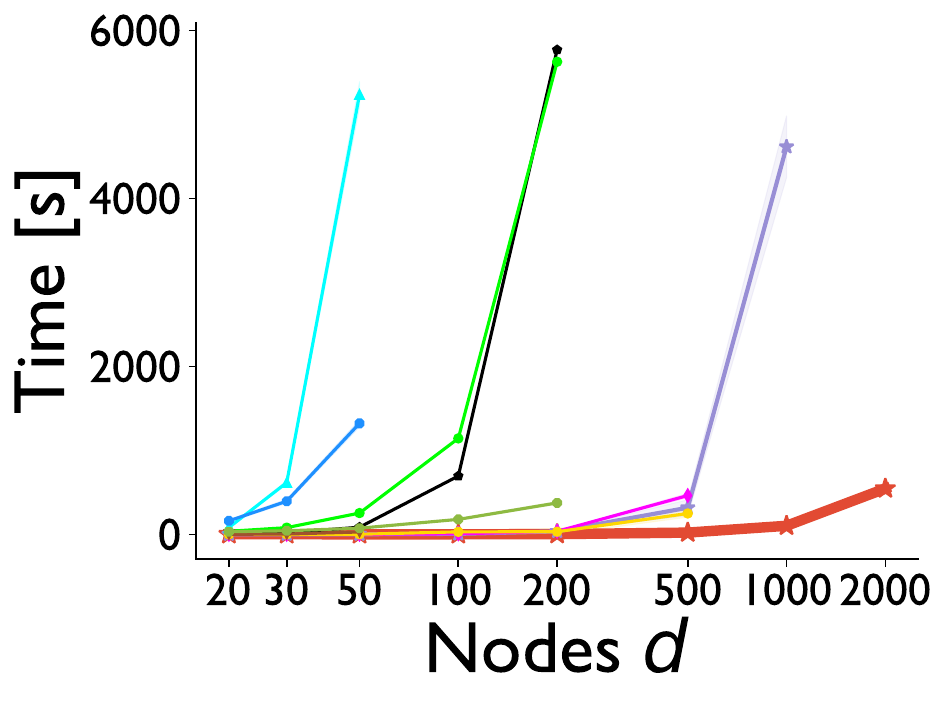}
        
        \includegraphics[width=\linewidth]{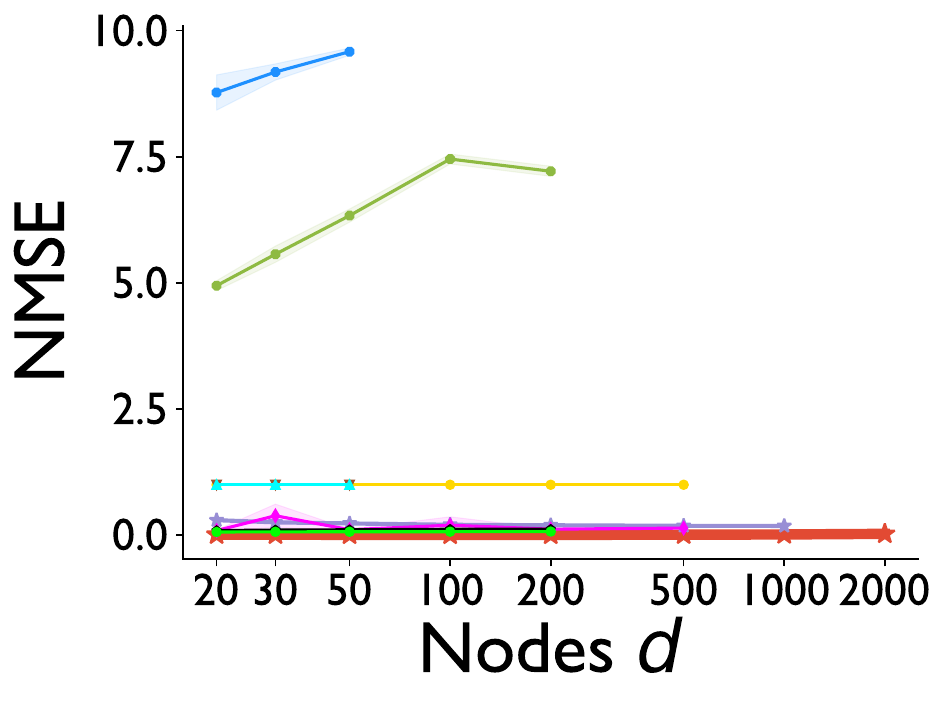}
    \end{subfigure}
    \begin{subfigure}{0.18\linewidth}
        \hspace{30pt}
        \includegraphics[trim={7cm 0 0 0}, clip, width=\linewidth]{figures/plot_timeout_10000__legend_only.pdf}
        \vspace{-50pt}
    \end{subfigure}
    \caption{Synthetic experiment with larger time lag $k=5$, assuming input with Bernoulli distribution.}
    \label{fig:appendix:large_lag_bernoulli}
\end{figure}

\subsection{Sensitivity of time lag}
\label{appendix:subsec:more_time_lags}

We examine the sensitivity of the time lag parameter in the algorithms using the experiment shown in Figs.\ref{fig:appendix:varying_lag_bernoulli},\ref{fig:appendix:varying_lag_laplacian}. This experiment follows standard synthetic settings with $d=1000$, $T=1000$, and a true time lag of $k=3$.

\paragraph{Bernoulli-uniform input} 
When \mobius is provided with a time lag parameter $k' \geq k=3$, its approximation remains optimal. This indicates that, as long as \mobius is given a sufficiently large time lag, it can correctly identify the true maximum time lag $k$ of the system. We observed a similar behavior in our real-world stock market experiment (Fig.~\ref{fig:real_stocks}), where \mobius did not detect any time-lagged dependencies, as expected—the stock market typically reacts almost instantaneously. Conversely, if \mobius is given a time lag $k' < 3$, its performance deteriorates significantly.

SparseRC performs well as long as $k' \geq k$, though its approximation remains worse than that of \mobius. Additionally, SparseRC has a higher execution time and fails to complete (times out) when $k' = 6$. \varlingam performs reasonably when provided with the exact time lag $k$, but also times out when $k' > 3$.

Other baseline methods either timed out or exhibited poor performance.

\begin{figure}[H]
    \centering
    \begin{subfigure}[t]{0.26\linewidth}
        \centering
        \includegraphics[width=\linewidth]{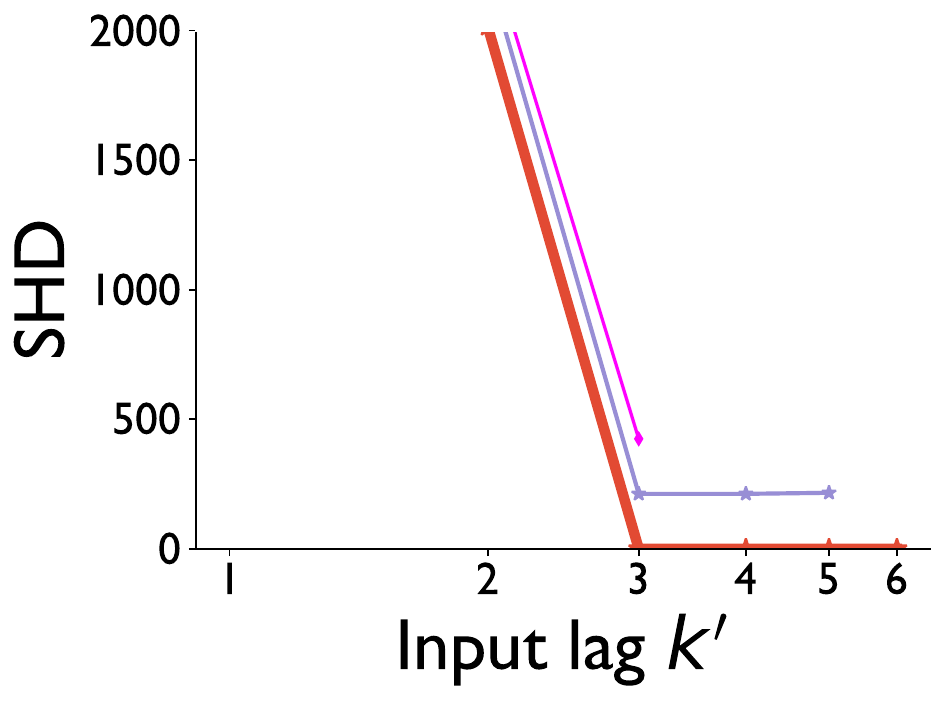}
        
        \includegraphics[width=\linewidth]{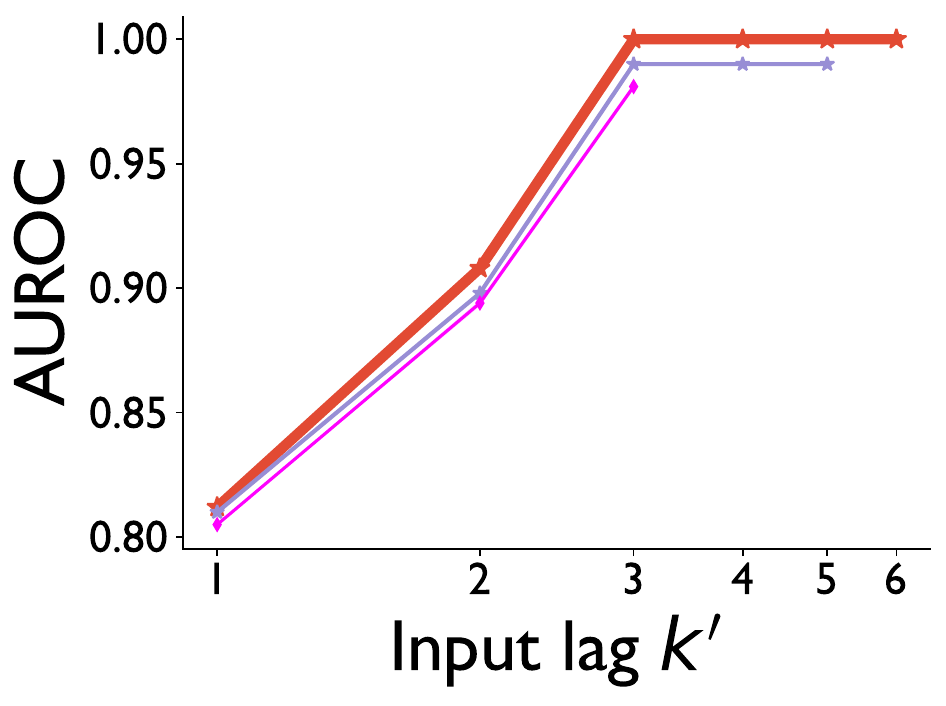}

        \includegraphics[width=\linewidth]{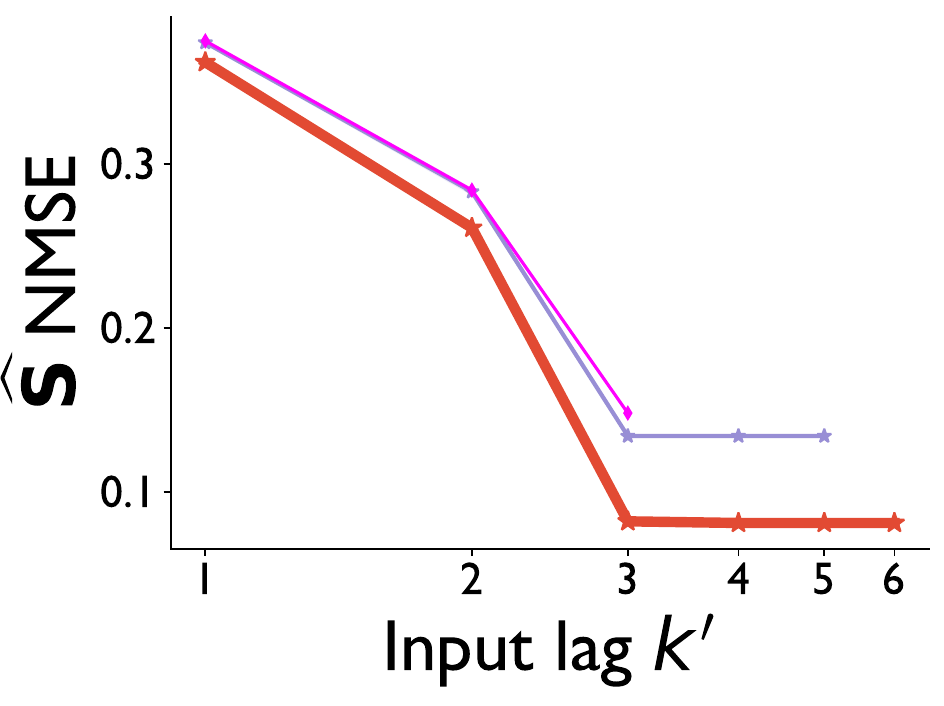}
    \end{subfigure}
    \hfill
    \begin{subfigure}[t]{0.26\linewidth}
        \centering
        \includegraphics[width=\linewidth]{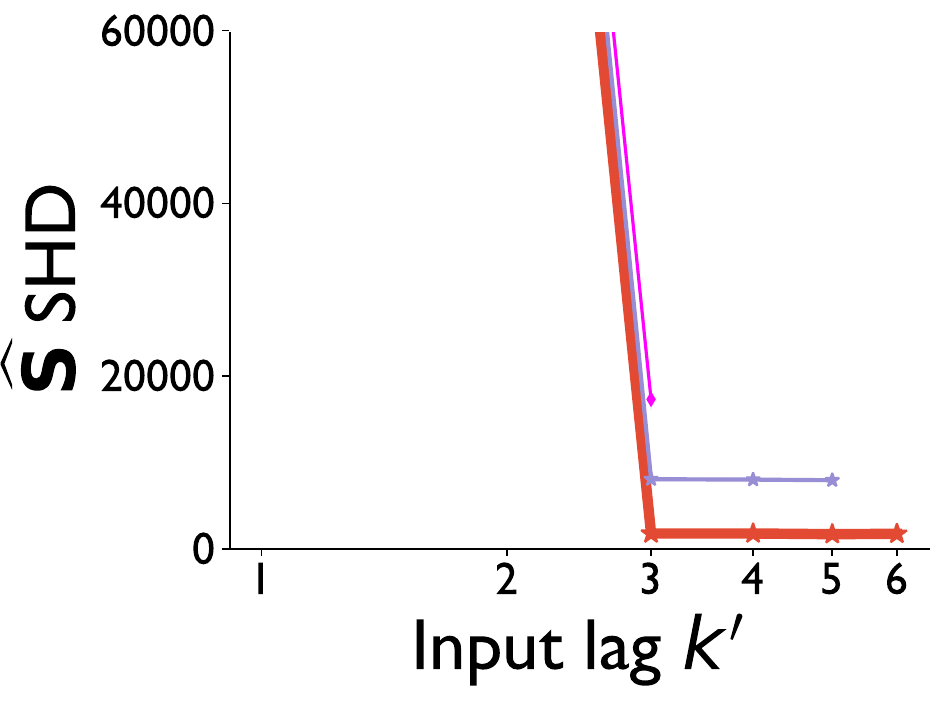}
        
        \includegraphics[width=\linewidth]{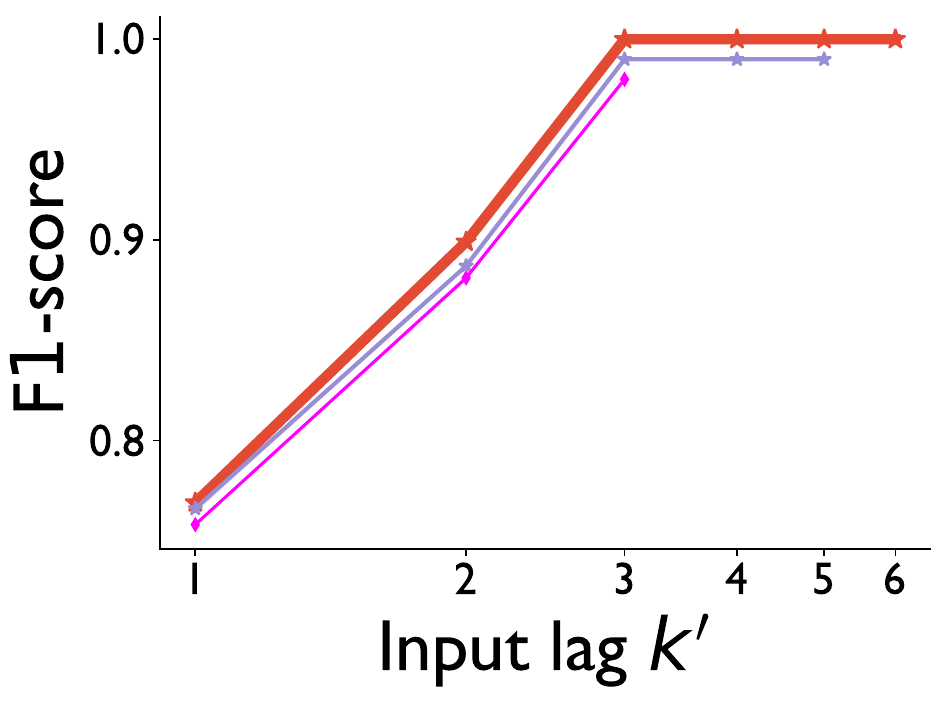}
    \end{subfigure}
    \hfill
    \begin{subfigure}[t]{0.26\linewidth}
        \centering
        \includegraphics[width=\linewidth]{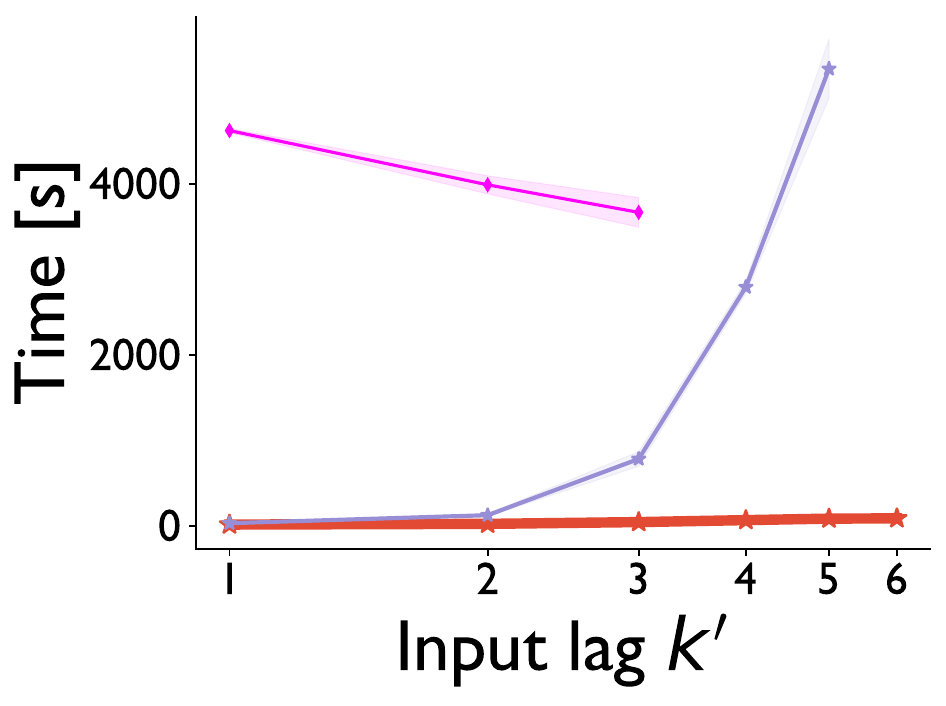}
        
        \includegraphics[width=\linewidth]{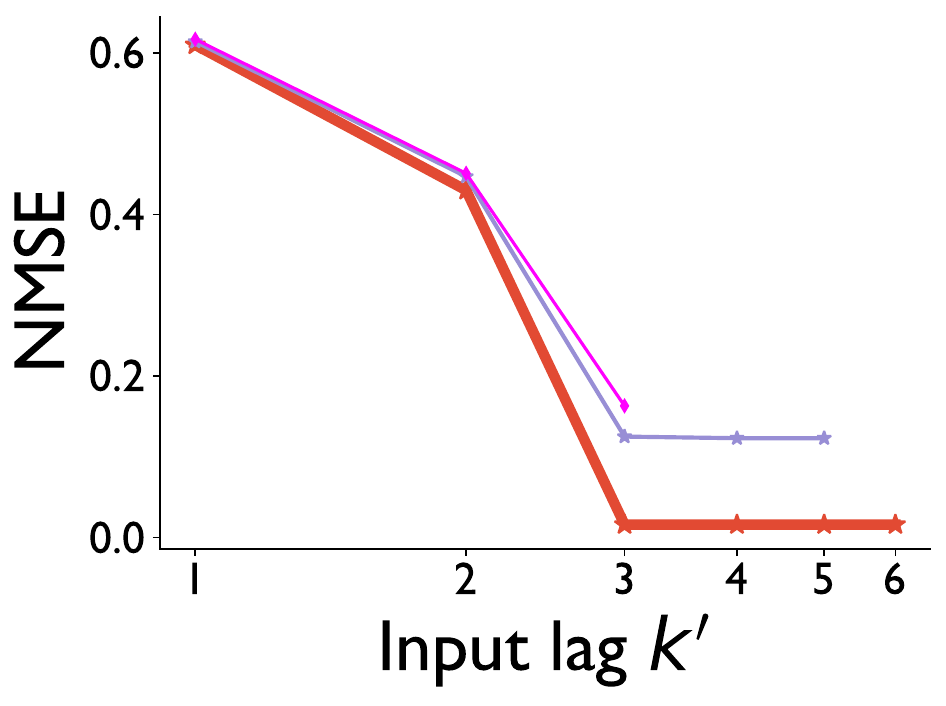}
    \end{subfigure}
    \begin{subfigure}{0.18\linewidth}
        \hspace{30pt}
        \includegraphics[trim={7cm 0 0 0}, clip, width=\linewidth]{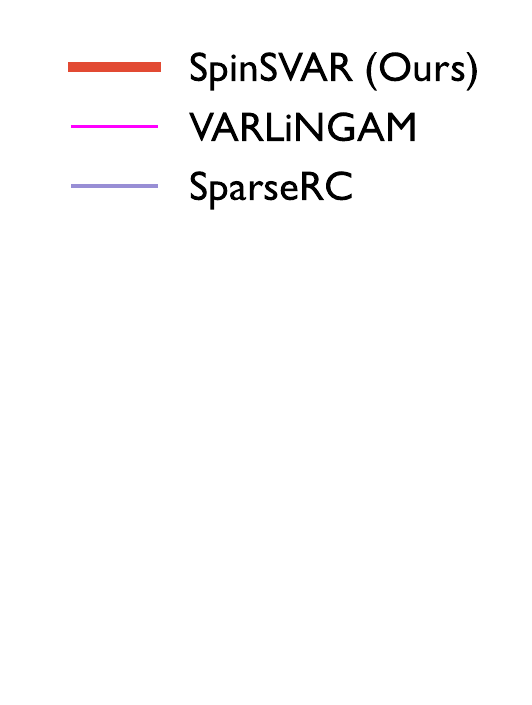}
        \vspace{-50pt}
    \end{subfigure}
    \caption{Evaluating the sensitivity of the time lag $k$ in synthetic settings with original $k=3$, $d=1000$ nodes, $T=1000$ and $N=10$ samples and Bernoulli-uniform input. The algorithms have varying time lag from $1$ to $6$.}
    \label{fig:appendix:varying_lag_bernoulli}
\end{figure}
\paragraph{Laplacian input}
In this setting, the behavior differs slightly. When $k'\geq k$, \mobius continues to perform well but is unable to recover the exact ground truth. This limitation explains the failure of the $\widehat{\tS}$ metric. Nevertheless, \mobius still outperforms the baseline methods. Notably, \varlingam times out in this scenario.

\vfill
\newpage

\begin{figure}[H]
    \centering
    \begin{subfigure}[t]{0.26\linewidth}
        \centering
        \includegraphics[width=\linewidth]{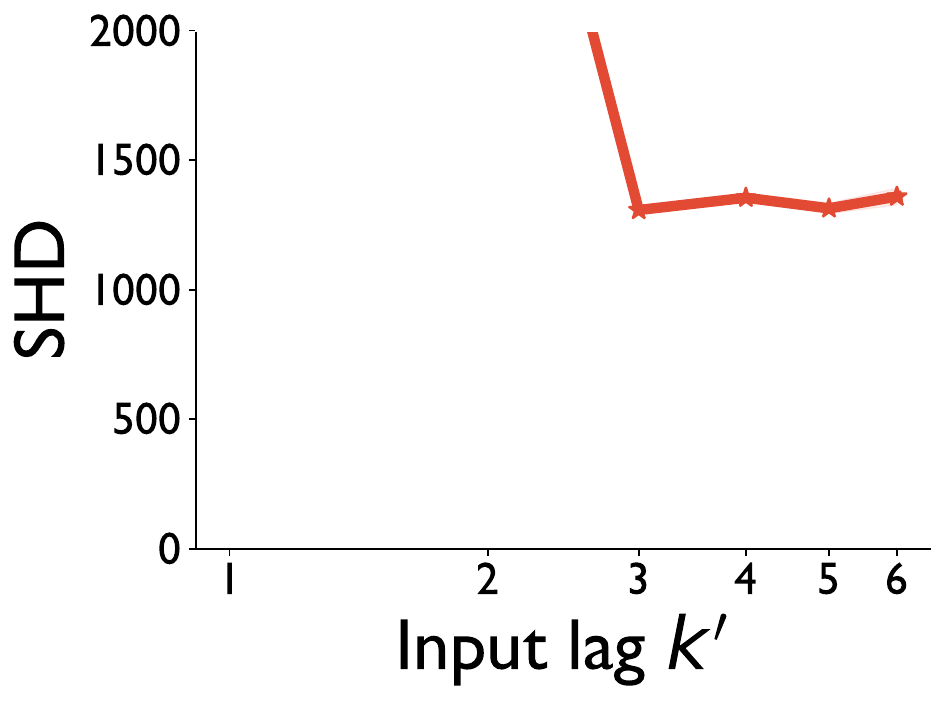}

        \includegraphics[width=\linewidth]{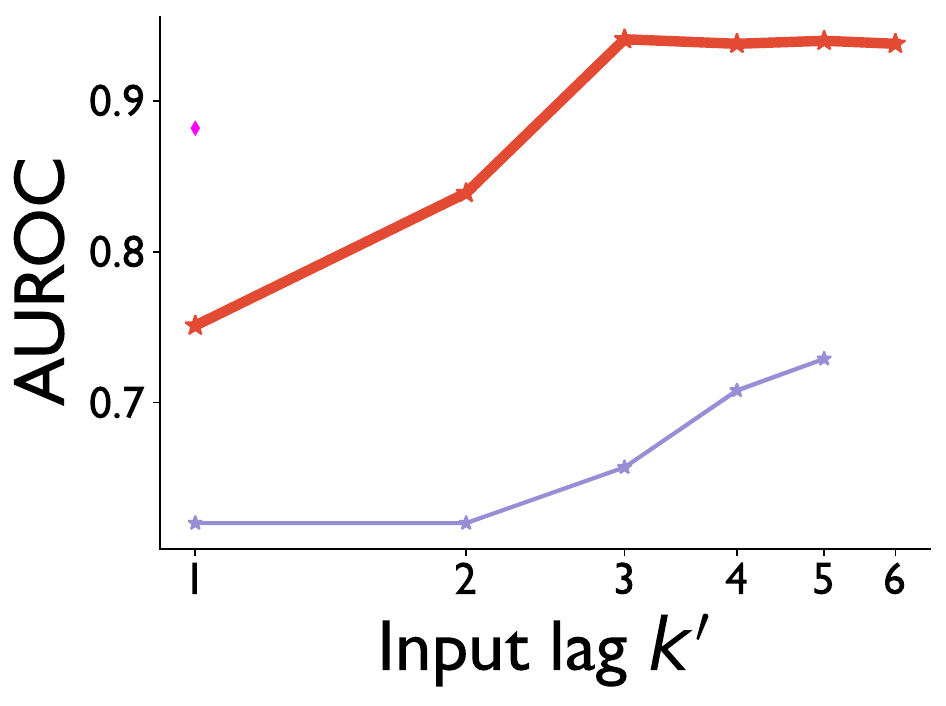}
    \end{subfigure}
    \hfill
    \begin{subfigure}[t]{0.26\linewidth}
        \centering
        \includegraphics[width=\linewidth]{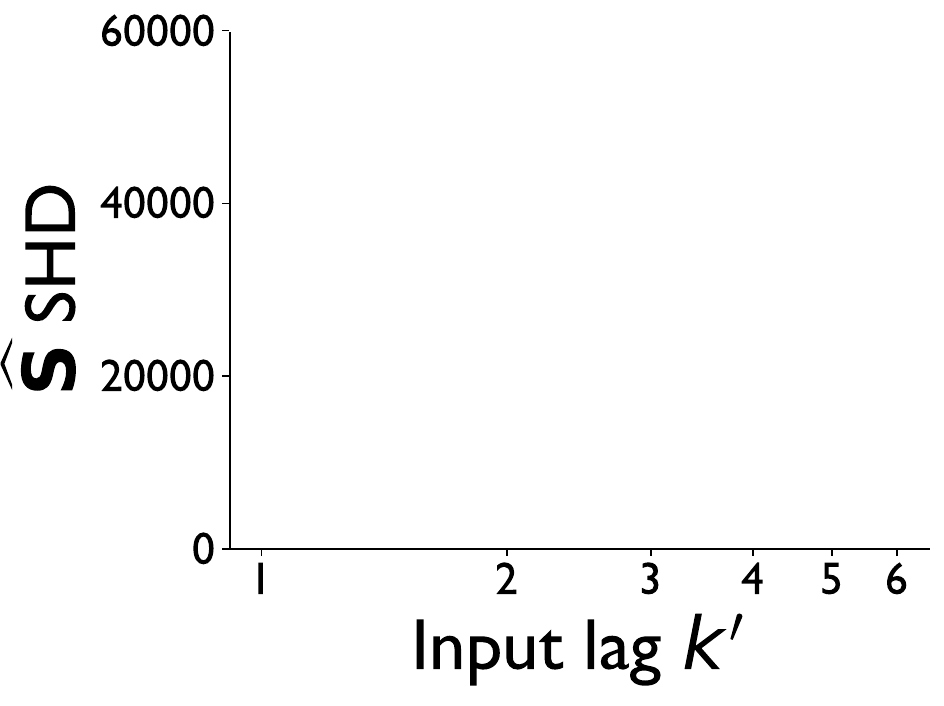}

        \includegraphics[width=\linewidth]{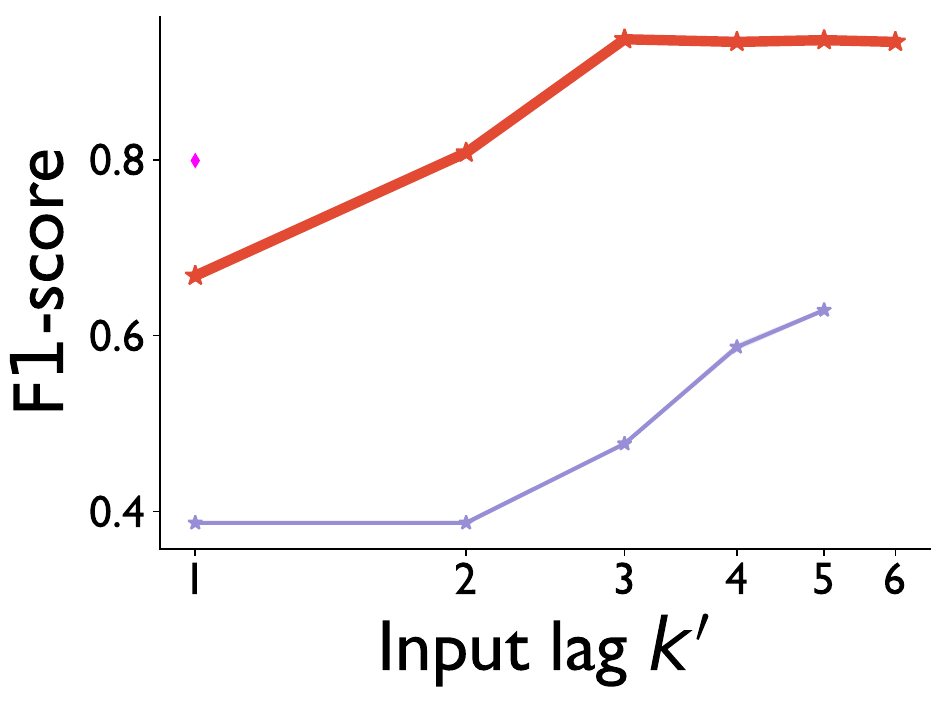}
    \end{subfigure}
    \hfill
    \begin{subfigure}[t]{0.26\linewidth}
        \centering
        \includegraphics[width=\linewidth]{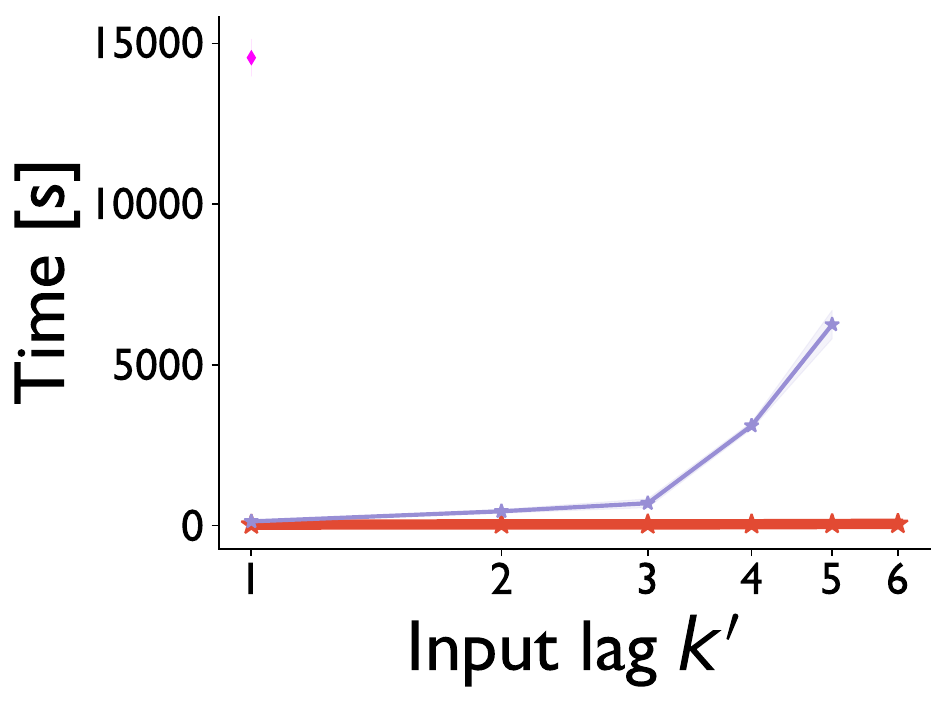}

        \includegraphics[width=\linewidth]{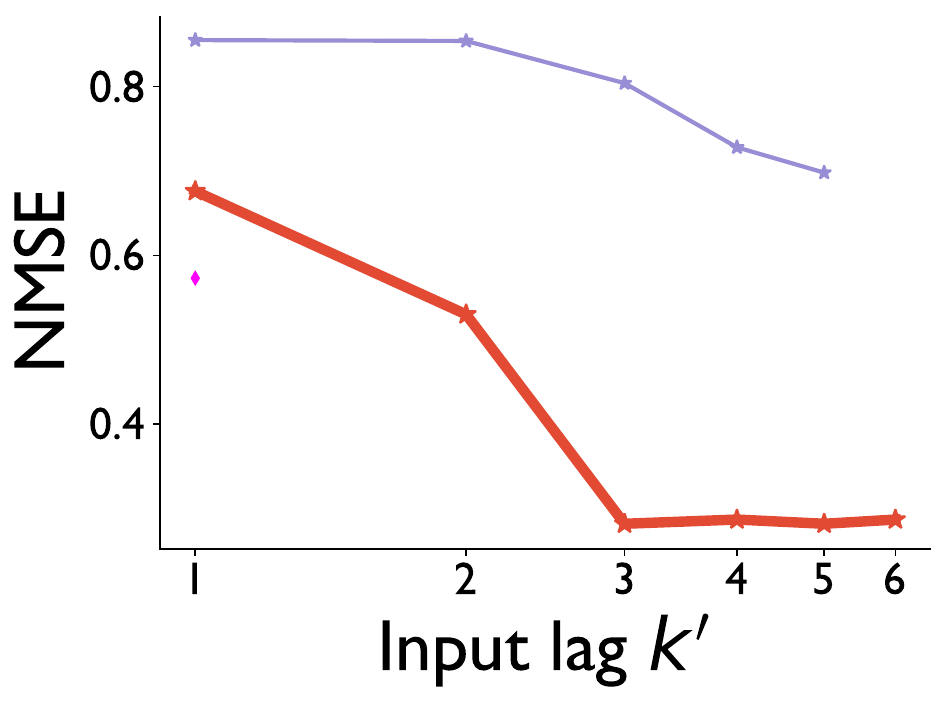}
        \end{subfigure}
    \begin{subfigure}{0.18\linewidth}
        \hspace{30pt}
        \includegraphics[trim={7cm 0 0 0}, clip, width=\linewidth]{figures/plot_weight_bounds__0.1,_0.2__number_of_lags_3_timeout_10000_algo_lags_6__legend_only.pdf}
        \vspace{-50pt}
    \end{subfigure}
    \caption{Evaluating the sensitivity of the time lag $k$ in synthetic settings with original $k=3$, $d=1000$ nodes, $T=1000$ and $N=10$ samples and Laplacian input. The algorithms have varying time lag from $1$ to $6$.}
    \label{fig:appendix:varying_lag_laplacian}
\end{figure}

\subsection{Larger DAGs}
\label{appendix:subsec:larger_DAGs}
Here we include the time-outs of \varlingam for $d=2000$ and the performance of SparseRC which is poor compared to \varlingam and \mobius.

\begin{table}[H]
    \centering
    \caption{SHD report for large DAGs ($T = 1000$).}
    \begin{tabular}{@{}llllll@{}}
    \toprule
      \mobius \hfill $N=$ & $1$ & $2$ & $4$ & $8$ & $16$\\
    \midrule
        $d=1000$, $\tS\sim$ Laplace & $8.3k$ & $1k$ & $371$ & $112$ & $\boldsymbol{27}$ \\
    $d=1000$, $\tS\sim$ Bernoulli & $2$ & $\boldsymbol{0}$ & $\boldsymbol{0}$ & $\boldsymbol{0}$ & $\boldsymbol{0}$ \\
    $d=2000$, $\tS\sim$ Laplace & $18k$ & $17k$ & $2.1k$ & $645$ & $183$ \\
    $d=2000$, $\tS\sim$ Bernoulli  & $12$ & $\boldsymbol{0}$ & $\boldsymbol{0}$ & $\boldsymbol{0}$ & $\boldsymbol{0}$  \\
    $d=4000$, $\tS\sim$ Laplace & $36k$ & $36k$ & $33k$ & $4.5k$ & $1.2k$ \\
    $d=4000$, $\tS\sim$ Bernoulli & $164$ & $27$ & $15$ & $\boldsymbol{7}$ & $\boldsymbol{9}$  \\
    \midrule
    \varlingam  \hfill $N=$& $1$ & $2$ & $4$ & $8$ & $16$\\
    \midrule
    $d=1000$, $\tS\sim$ Laplace  & $-$ & $-$ & $-$ & $-$ & $-$ \\
    $d=1000$, $\tS\sim$ Bernoulli  & $-$ & $-$ & $-$ & $115$ & $29$ \\
    $d=2000$, $\tS\sim$ Laplace  & $-$ & $-$ & $-$ & $-$ & $-$ \\
    $d=2000$, $\tS\sim$ Bernoulli  & $-$ & $-$ & $-$ & $-$ & $-$ \\
    \midrule
    SparseRC  \hfill $N=$& $1$ & $2$ & $4$ & $8$ & $16$\\
    \midrule
    $d=1000$, $\tS\sim$ Laplace  & $3.3k$ & $2.2k$ & $2k$ & $1.8k$ & $1.8k$ \\
    $d=1000$, $\tS\sim$ Bernoulli  & $2.6k$ & $1.7k$ & $1.6k$ & $1.6k$ & $1.7k$ \\
    $d=2000$, $\tS\sim$ Laplace  & $-$ & $-$ & $-$ & $-$ & $-$ \\
    $d=2000$, $\tS\sim$ Bernoulli  & $-$ & $-$ & $-$ & $-$ & $-$ \\
    \bottomrule
    \end{tabular}
\end{table}

\subsection{Simulated financial portfolios}
\label{appendix:exp:simulated}

We evaluate our method on simulated financial time-series data from~\citet{kleinberg2013finance}, generated using the Fama-French three-factor model~\citep{fama1970efficient} (volatility, size, and value). The return $\evx_{i,t}$ of stock $i$ at time $t$ is computed as $\evx_{t,i} = \sum_{j} b_{ij}f_{t,i} + \epsilon_{t,i}$,
where $f_{t,i}$ are the three factors, $b_{ij}$ are their corresponding weights and $\epsilon_{t,i}$ are (correlated) idiosyncratic terms. We use 16 datasets from this benchmark, each incorporating time lags up to $k=3$. The data consists of daily returns for $d=25$ stocks, with ground truth DAGs containing an average of 22 edges. Each dataset provides a multivariate time series $\mX$ with $4000$ time steps, which we segment into non-overlapping windows of $50$ time steps, yielding a dataset $\tX$ of shape $80 \times 50 \times 25$.

Table~\ref{appendix:tab:FinanceCPT} reports the SHD and runtime for each method.  Since the true structural shocks are unknown, we do not evaluate them in this setting. 
Hyperparameters were selected via grid search, as detailed in Appendix~\ref{appendix:subsubsec:hyper_simulated}. The best-performing methods are \mobius and SparseRC, suggesting that assuming a sparse set of structural shocks is valid for financial data. SparseRC slightly outperforms \mobius, likely due to the dataset’s small scale—both in terms of time lags and number of nodes—though it remains slower. The fastest method, \varlingam, exhibits weaker performance. The other baselines perform poorly in this dataset.

\begin{table}[t]
    \footnotesize
        \centering
        \caption{Performance on the simulated financial dataset~\citep{kleinberg2013finance}.}
        \begin{tabular}{@{}lll@{}}
        \toprule
        Method  &  SHD ($\downarrow$)  & Time [s] \\
        \midrule
        \mobius (Ours) &  $    12.89\pm7.87 $  &  $    5.43\pm0.65 $  \\ 
        SparseRC &  $\bm{9.92\pm8.22}$  &  $    9.74\pm1.21 $  \\ 
        \varlingam &  $    19.25\pm10.64 $  &  $\bm{1.64\pm0.10}$  \\ 
        \dlingam &  $    15.31\pm9.38 $  &  $    4.85\pm0.31 $  \\ 
        \clingam &  $    15.22\pm8.44 $  &  $    12.88\pm0.42 $  \\ 
        TCDF &  $    19.06\pm10.18 $  &  $    33.56\pm1.01 $  \\ 
         DYNOTEARS &  $    33.92\pm9.09 $  &  $    112.91\pm29.59 $  \\ 
        NTS-NOTEARS &  $    57.83\pm37.22 $  &  $    16.40\pm14.45 $  \\ 
        tsFCI &  $21.94\pm9.52$  &  $    17.50\pm12.82 $  \\ 
        PCMCI &  $    361.69\pm67.80 $  &  $16.23\pm4.69$  \\ 
        \bottomrule
        \end{tabular}
        \label{appendix:tab:FinanceCPT}
    \end{table}

\subsection{Dream3 Challenge dataset}
\label{appendix:exp:dream}

\begin{table}[h]
\footnotesize
    \centering
    \caption{AUROC report on the Dream3 challenge dataset \citep{marbach2009dream3,prill2010dream3}. The methods are partitioned into non-linear and linear for a fair comparison. Best performances are marked with bold. }
    \vspace{5pt}
    \begin{tabular}{@{}lllllll@{}}
    \toprule
     & Model &  E.coli-1  & E.coli-2 &  Yeast-1  & Yeast-2  & Yeast-3 \\
    \midrule
    \multirow{6}{1.5cm}{Non-linear}& MLP      & $0.644$   & $0.568$   & $0.585$   & $0.506$   & $0.528$\\  
    & LSTM     & $0.629$   & $0.609$   & $0.579$   & $0.519$   & $0.555$\\  
    & TCDF     & $0.614$   & $0.647$   & $0.581$   & $0.556$   & $\textbf{0.557}$\\  
    & SRU      & $0.657$   & $\textbf{0.666}$   & $0.617$   & $\textbf{0.575}$   & $0.55$\\  
    & eSRU     & $\textbf{0.66}$    & $0.629$   & $\textbf{0.627}$   & $0.557$   & $0.55$\\ 
    & PCMCI  & $0.594$    & $0.545$   & $0.498$   & $0.491$   & $0.508$\\ 
    & NTS-NOTEARS  & $0.592$ & $0.471$  & $0.551$   & $0.551$   & $0.507$\\    
    & tsFCI & $0.5$   & $0.5$   & $0.5$   & $0.5$   & $0.5$\\ 
    \hline \\
    \multirow{5}{1cm}{Linear} & \textbf{\mobius (Ours)} & $0.547$   & $0.525$   & $0.551$   & $0.508$   & $\textbf{0.513}$\\  
    & SparseRC            & $0.543$   & $0.516$   & $\textbf{0.554}$   & $0.507$   & $0.512$\\  
    & VARLiNGAM           & $0.545$   & $0.519$   & $0.516$   & $0.509$   & $0.502$\\  
    & Directed VARLiNGAM  & $0.504$ & $0.501$  & $0.514$   & $0.501$   & $0.510$\\  
    & DYNOTEARS           & $\textbf{0.590}$ & $\textbf{0.547}$  & $0.527$   & $\textbf{0.526}$   & $0.510$\\    
    \bottomrule
    \end{tabular}
    \label{tab:dream3}
\end{table}

In Table~\ref{tab:dream3} we report the AUROC performance of our method compared to baselines. There, Component-wise MLP and LSTM are from \citep{tank2021neuralGranger} and SRU and eSRU from \citep{khanna2019eSRU}. while the rest of the methods are present in the main paper. The results of the first 5 rows are taken from \citep{khanna2019eSRU} and DYNOTEARS from \citep{gong2022rhino}. The methods are partitioned into non-linear and linear for a fair comparison.

Our method is competitive to other linear-model baselines but worse than those assuming a nonlinear model. Apparently, one of the two assumptions, either the sparse SVAR input assumption or linearity of the data generation does not hold in this dataset and our method might not be the most appropriate.

\subsection{S\&P 500 real experiment}
\label{appendix:exp:real}

In Figs.~\ref{fig:appendix:real1} and \ref{fig:appendix:real2} we show the performance of SparseRC, \varlingam, TCDF and PCMCI on the S\&P 500 stock market index. As also mentioned in the main text, SparseRC approximates a DAG similar to \mobius. 
This is due to the few structural shock assumption that both methods use.
\begin{itemize}
    \item \textbf{\varlingam} seems to identify significant edges for any random stock combination, thus producing a poor result. 
    Also, the approximated structural shocks $\widehat{\mS}$ are less expressive than ours in the sense that out of the $4507$ discovered structural shocks only $33.7\%$ of them align with the data changes.
    \item \textbf{TCDF} produces a very sparse DAG with not enough information. 
    \item \textbf{PCMCI} outputs a zero graph for time lag $0$ and a not well-structured graph for time lag $1$. 
    As a consequence, we don't see a meaningful pattern in the structural shocks. 
    \item \textbf{DYNOTEARS} had as output an empty graph and thus its performance is not reported. Regarding its hyperparameters, we minimized the weight threshold up to $0$ (all weights included as edges) and we tried both $λ_w = λ_a = 0.01$ and $k=2$, which were the optimal from our synthetic experiments and $λ_w = λ_a = 0.1$ which is the reported best in the S\&P 100 experiment in~\citep{pamfil2020dynotears}.
    \item \dlingam, \clingam, tsFCI and NTS-NOTEARS had time-out in this experiment.
\end{itemize}
\vfill
\newpage

\begin{figure}[H]
    \begin{subfigure}{0.45\linewidth}
        \centering
        \includegraphics[width=\linewidth]{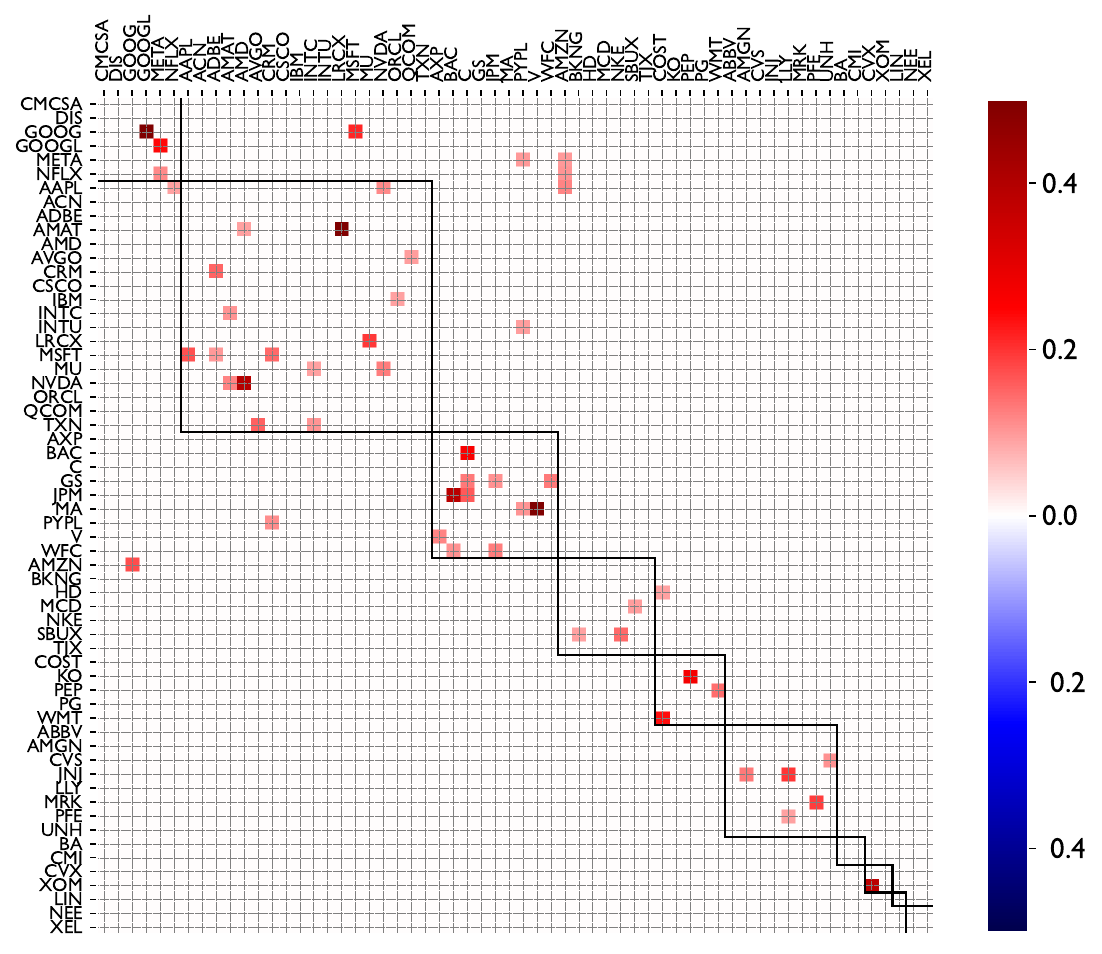}
        \caption{SparseRC estimate for $\widehat{\mB}_0$}
        \label{fig:stocks_lag0:appendix:sparserc}
    \end{subfigure}
    \begin{subfigure}{0.45\linewidth}
        \centering
        \includegraphics[width=1.25\linewidth]{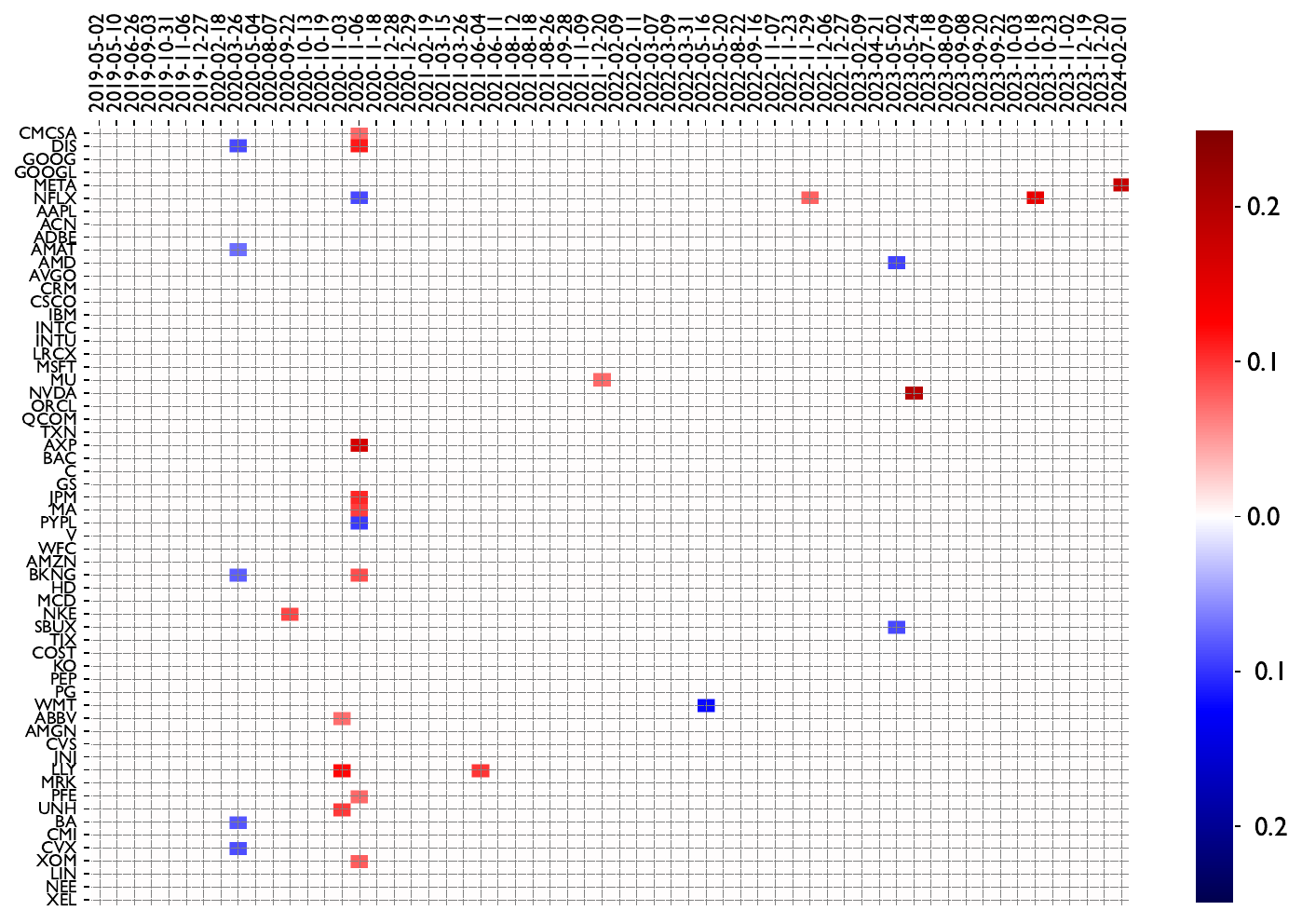}
        \caption{SparseRC estimate for $\widehat{\mS}$}
        \label{fig:stocks_rootcauses:appendix:sparserc}
    \end{subfigure}

    \begin{subfigure}{0.45\linewidth}
        \centering
        \includegraphics[width=\linewidth]{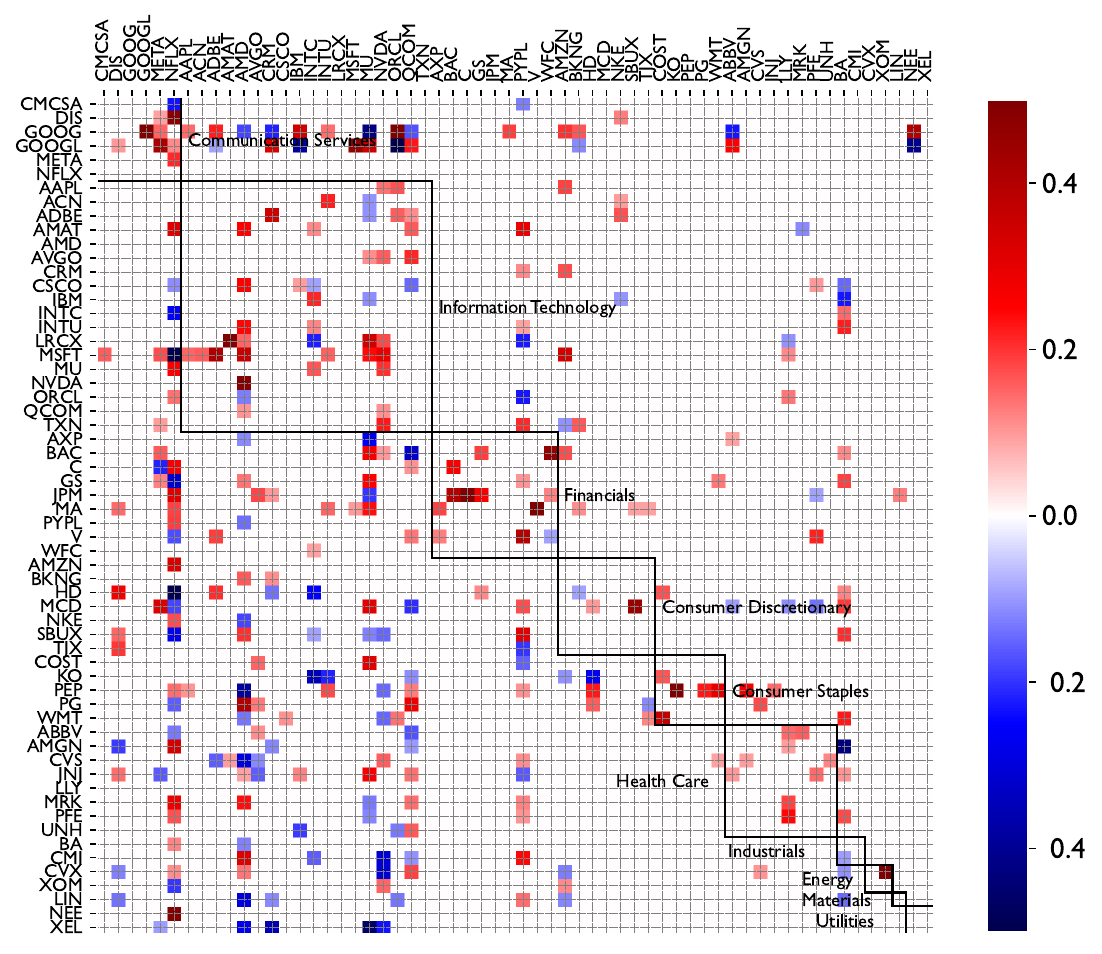}
        \caption{\varlingam estimate for $\widehat{\mB}_0$}
        \label{fig:stocks_lag0:appendix:varlingam}
    \end{subfigure}
    \begin{subfigure}{0.45\linewidth}
        \centering
        \includegraphics[width=1.25\linewidth]{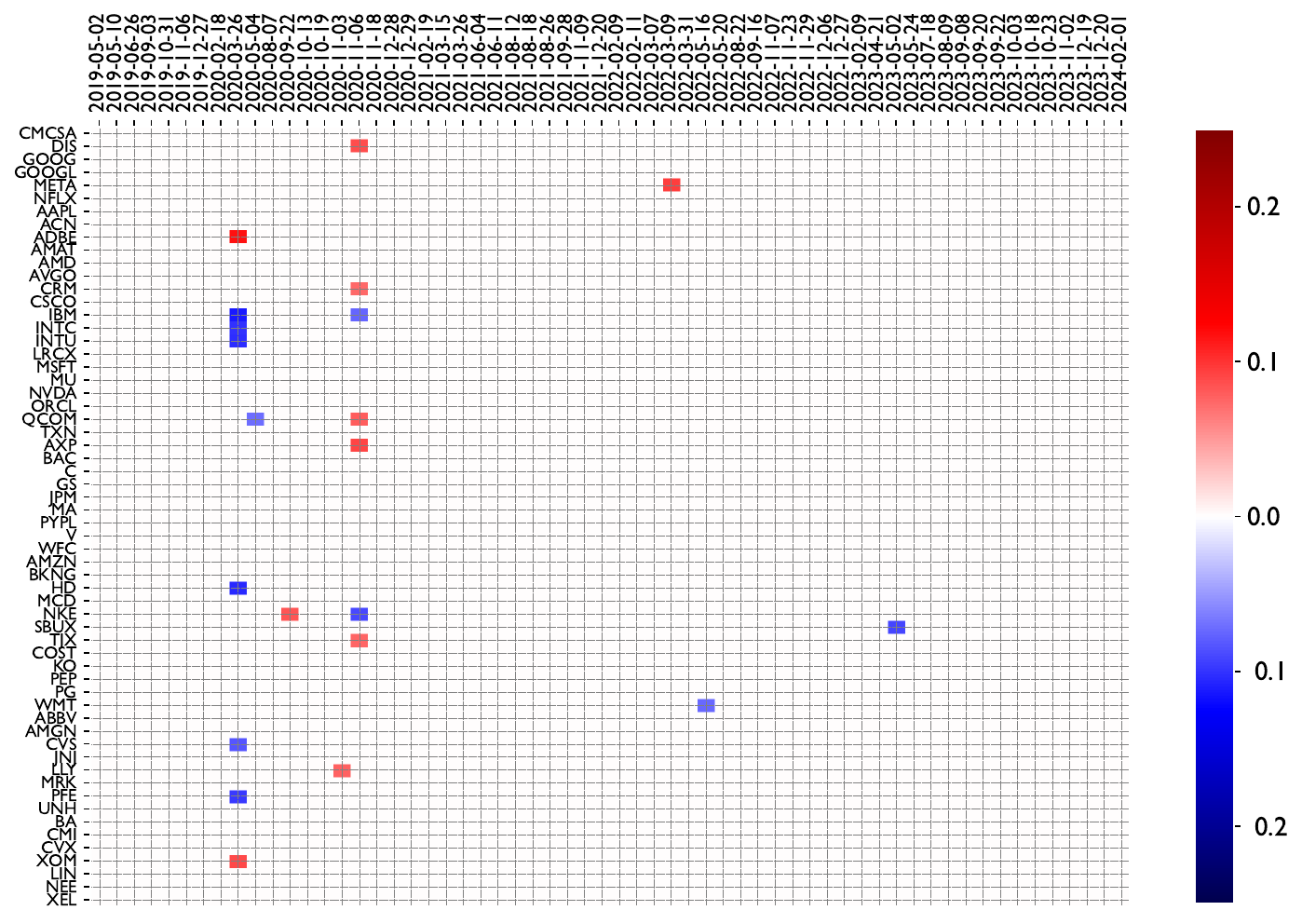}
        \caption{\varlingam estimate for $\widehat{\mS}$}
        \label{fig:stocks_rootcauses:appendix:varlingam}
    \end{subfigure}

    \caption{Evaluating baselines on the real experiment with S\&P 500 stock market index. (a) Instantaneous relations between the $45$ highest weighted stocks within S\&P 500 and (b) the discovered structural shocks for $60$ dates.}
    \label{fig:appendix:real1}
\end{figure}

\begin{figure}[H]
    \begin{subfigure}{0.45\linewidth}
        \centering
        \includegraphics[width=\linewidth]{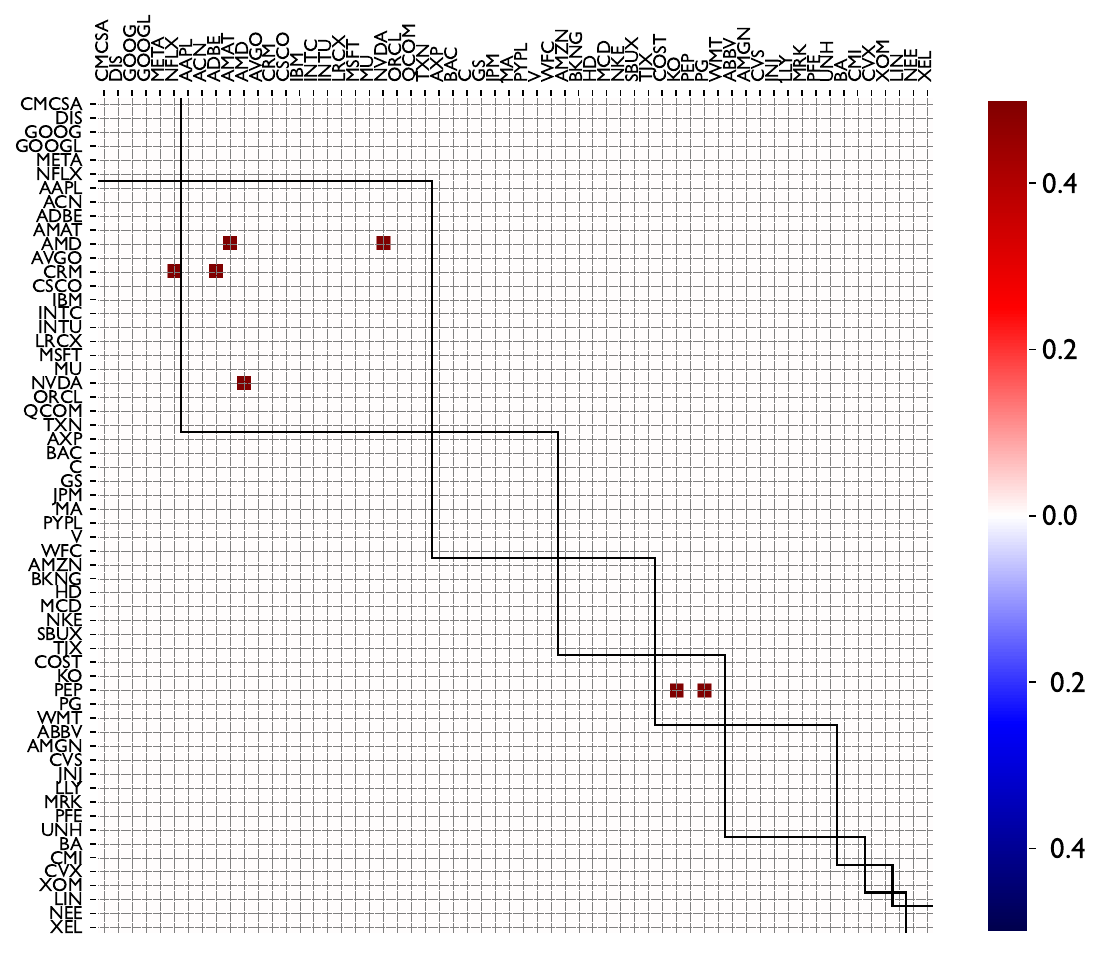}
        \caption{TCDF estimate for $\widehat{\mB}_0$}
        \label{fig:stocks_lag0:appendix:TCDF}
    \end{subfigure}
    \begin{subfigure}{0.45\linewidth}
        \centering
        \includegraphics[width=1.25\linewidth]{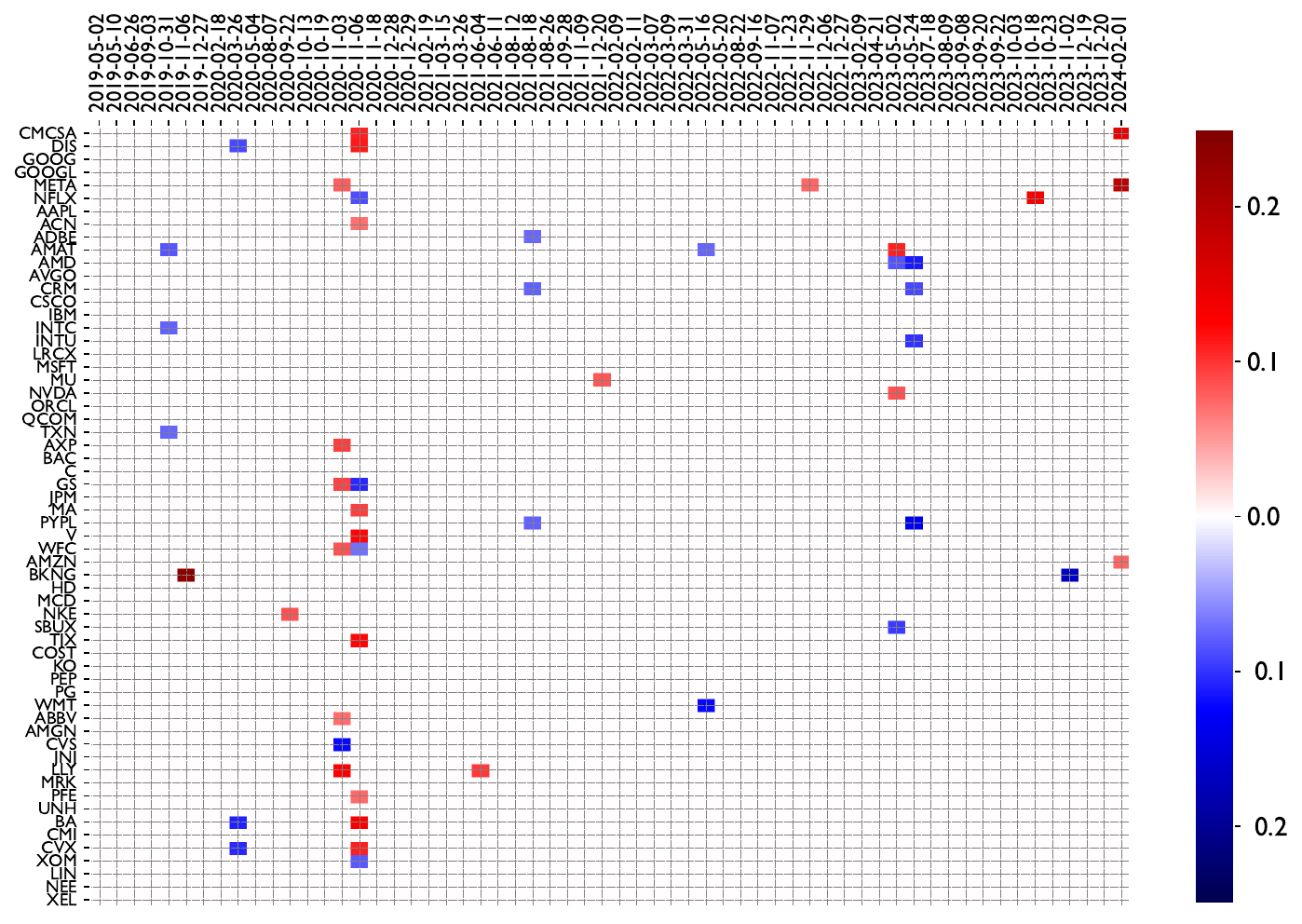}
        \caption{TCDF estimate for $\widehat{\mS}$}
        \label{fig:stocks_rootcauses:appendix:TCDF}
    \end{subfigure}

    \begin{subfigure}{0.45\linewidth}
        \centering
        \includegraphics[width=\linewidth]{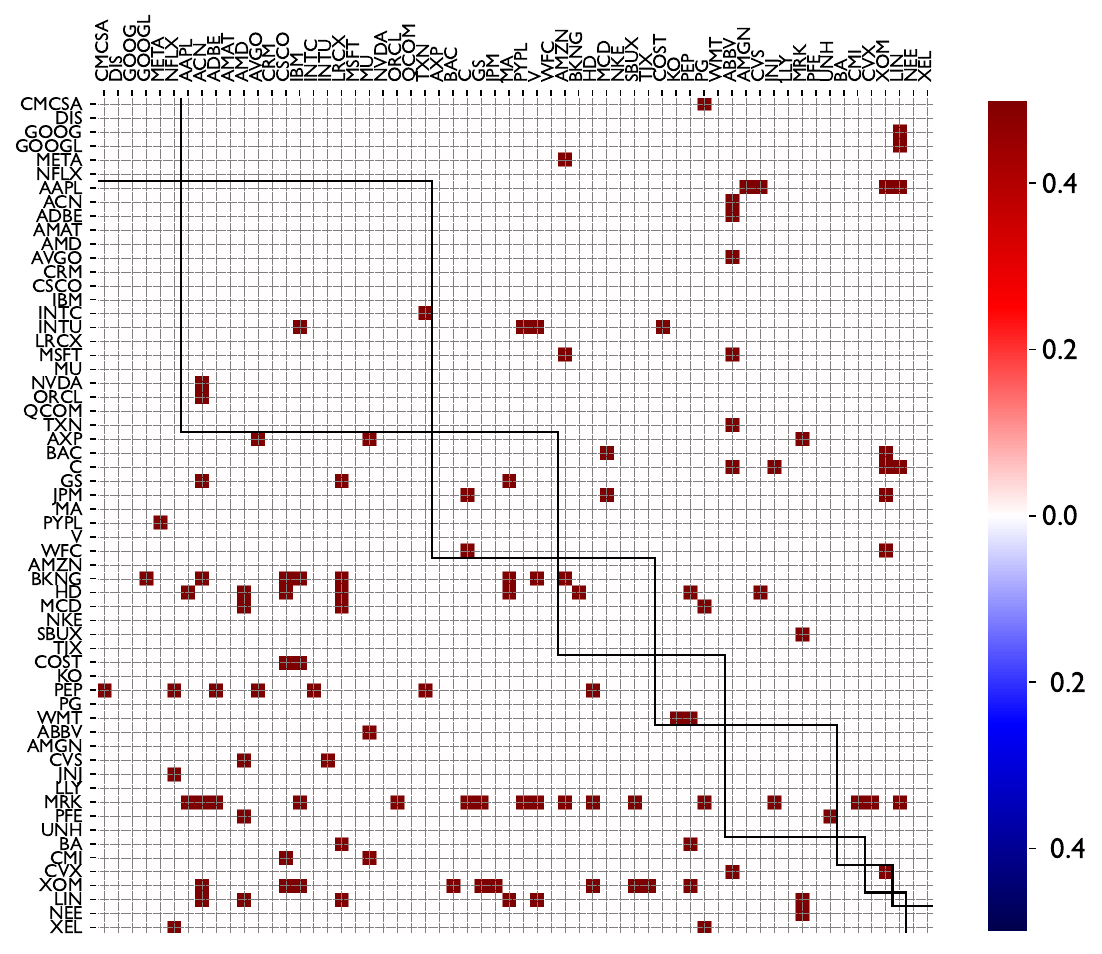}
        \caption{PCMCI estimate for $\widehat{\mB}_1$}
        \label{fig:stocks_lag0:appendix:pcmci}
    \end{subfigure}
    \begin{subfigure}{0.45\linewidth}
        \centering
        \includegraphics[width=1.25\linewidth]{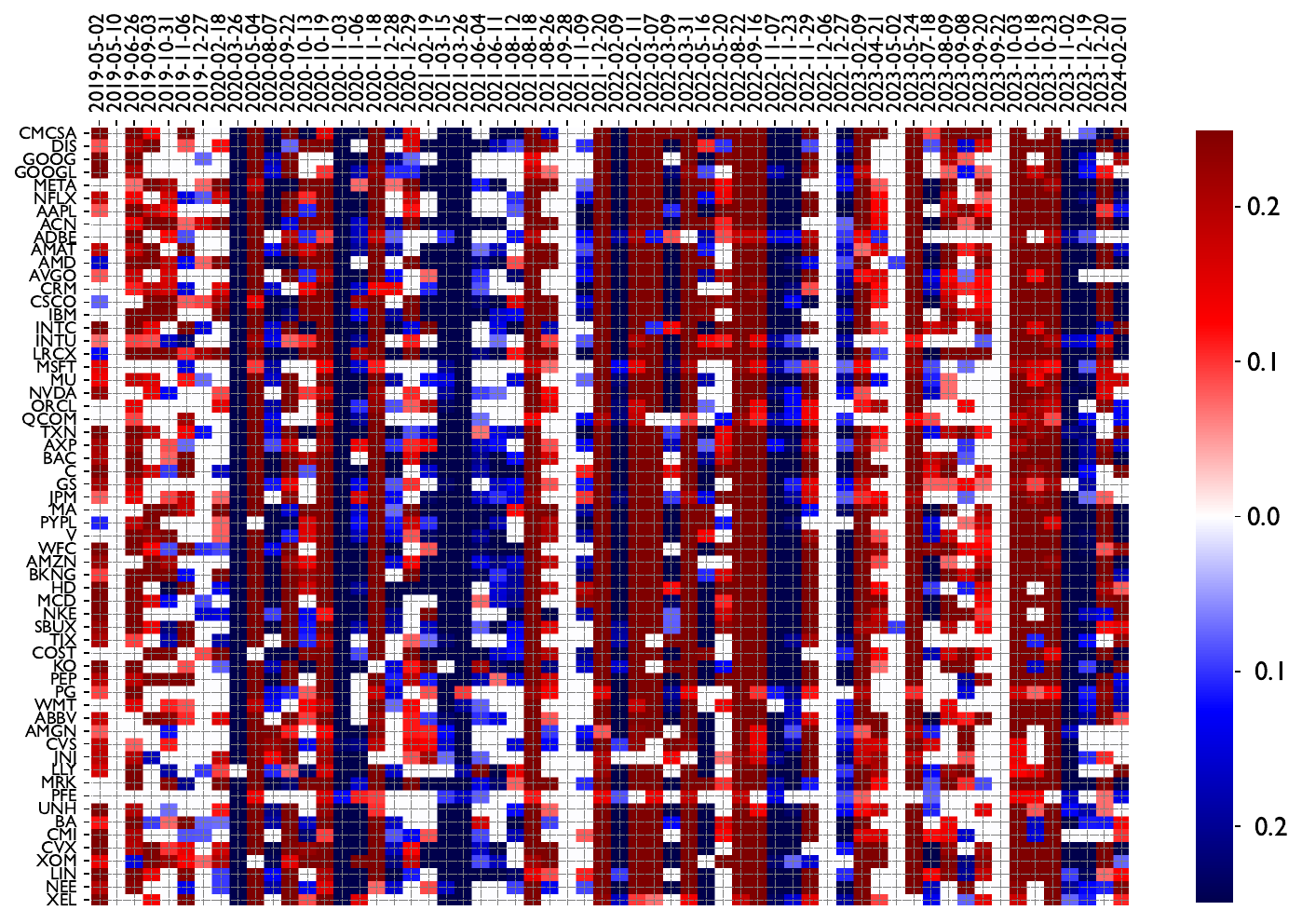}
        \caption{PCMCI estimate for $\widehat{\mS}$}
        \label{fig:stocks_rootcauses:appendix:pcmci}
    \end{subfigure}
    \caption{Evaluating PCMCI on the real experiment with S\&P 500 stock market index. (a) Relations between the $45$ highest weighted stocks within S\&P 500 with time lag $1$ and (b) the discovered structural shocks for $60$ dates.}
    \label{fig:appendix:real2}
\end{figure}

\subsection{Hyperparameter search}
\label{appendix:exp:hyperparameter}

To find the most suitable hyperparameter selection for each method in our synthetic and simulated experiments we perform a grid search and choose the parameter combination that achieves the best SHD performance. 

\subsubsection{Synthetic experiments with Laplacian input} 
\label{appendix:subsubsec:hyper_laplace}
For convenience we perform the grid search on small synthetic experimental settings ($N=1$ sample, $T=1000$ time steps, $d=20$ nodes) where all methods have reasonable execution time. 
Note that for all methods we set their parameters regarding the number of lags correctly, to equal the ground truth lag (default $k=2$). 
Any non-relevant hyperparameter that is not mentioned is set to its default value.
The hyperparameter search gave the following optimal hyperparameters for each method:


\paragraph{\mobius} We set $λ_1=0.0005,\, λ_2=0.5$ the coefficients for the $L^1$ and acyclicity regularizer, respectively and  $\omega=0.09$.
We let \mobius run for $10000$ epochs, although usually it terminates earlier as we have an early stopping activated when for $40$ consecutive epochs the loss didn't decrease.

\paragraph{SparseRC} We set $λ_1=0.001,\, λ_2=1,\, λ_3= 0.001$ the coefficients for the $L^1$, acyclicity and block-Toeplitz regularizers, respectively and $\omega=0.09$.
We similarly let SparseRC run for $10000$ epochs, although usually it terminates earlier using early stopping as with \mobius.

\paragraph{\varlingam} We may choose between ICA or Direct LiNGAM. 
In our experiments, we consider both cases (\varlingam and \dlingam). 
The weight threshold is set to $0.09$ both for \varlingam and \dlingam but for \clingam is set to $0.05$.

\paragraph{DYNOTEARS} The resulting values are $λ_w = λ_a = 0.01$ and $\omega=0.01$

\paragraph{NTS-NOTEARS} The resulting values are $λ_1 =0.002,\, λ_2 = 0.01$ and $\omega=0.01$
The $h_{tol}$ and the dimensions of the neural network were left to default.

\paragraph{tsFCI} Significance level is set to $0.1$ and $\omega=0.01$.
Note that the output of tsFCI is a partial ancestral graph (PAG), which we therefore need to interpret as a DAG. 
For this scope we follow the rules of DYNOTEARS~\citep{pamfil2020dynotears}, meaning that whenever there is ambiguity in the directionality of the discovered edge we assume that tsFCI made the correct choice (this favors and over-states the performance of tsFCI). 
In particular, we translate the edge between nodes $i$ and $j$ in the following ways (i) if $i\rightarrow$ we keep it, (ii) if $i\leftrightarrow j$ in the PAG we discard it, (iii) either $i\circ \rightarrow j$ or $i \circ -\circ j$ we assume tsFCI made the correct choice, by looking at the ground truth graph.

\paragraph{PCMCI} 
The ParCorr conditional independence test was chosen. 
We do so because this test is suitable for linear additive noise models. 
Parameters are set as $pc_a =0.1,\,a_{level} = 0.01$ and $\omega=0.01$.
The output can sometimes be ambiguous ($\circ-\circ$) because the algorithm can only find the graph up to the Markov equivalence class, or there be conflicts ($x-x$) in the conditional independence tests. 
In the former case, we assume that PCMCI made the correct choice and in the latter we disregard the edge.

\paragraph{TCDF} 
Here the kernel size and the dilation coefficient are set as the number of lags $+1$ ($k+1 = 3$). 
The other parameters are $significance=1$ and $epochs=1000$ and $\omega=0.01$.

\subsubsection{Synthetic experiments with Bernoulli-uniform input}
\label{appendix:subsubsec:hyper_bernoulli}
Similarly for the Laplacian input, we perform hyperparameter search for $N=1$ sample, $T=1000$ time steps and $d=20$ nodes. The hyperparameter search gave the following optimal hyperparameters for each method:


\paragraph{\mobius} We set $λ_1=0.0001,\, λ_2=0.1$  and $\omega = 0.09$.
We let \mobius run for $10000$ epochs.

\paragraph{SparseRC} We set $λ_1=0.001,\, λ_2=1,\, λ_3= 0.001$ and $\omega = 0.09$.
We similarly let SparseRC run for $10000$.

\paragraph{\varlingam} 
In our experiments, we consider both cases (\varlingam and \dlingam). 
The weight threshold is set to $0.09$ both for \varlingam and \dlingam but for \clingam is set to $0.05$.

\paragraph{DYNOTEARS} The resulting values are $λ_w = λ_a = 0.01$ and $\omega=0.09$.

\paragraph{NTS-NOTEARS} The resulting values are $λ_1 =0.002,\, λ_2 = 0.01$ and $\omega=0.09$.
The $h_{tol}$ and the dimensions of the neural network were left to default.

\paragraph{tsFCI} Significance level is set to $0.1$ and $\omega=0.09$.

\paragraph{PCMCI} 
Parameters are set as $pc_a =0.1,\,a_{level} = 0.01$ and $\omega=0.09$.

\paragraph{TCDF} 
Here the kernel size and the dilation coefficient are set as the number of lags $+1$ ($k+1 = 3$). 
The other parameters are $significance=1$ and $epochs=1000$ and $\omega=0.09$.

\subsubsection{Simulated financial data} 
\label{appendix:subsubsec:hyper_simulated}
Here we perform the grid search on the first available dataset of the simulated data (out of the $16$ available) and choose the hyperparameters offering the best SHD performance.  Here, we search for the most compatible weight threshold $\omega$ as the distribution of the ground truth weights is not known from the data generation. For all methods we set the number of maximum time lags at $3$, which is the maximal ground truth lag.  Any non-relevant hyperparameter that is not mentioned is set to its default value.
The hyperparameter search gave the following optimal hyperparameters for each method:


\paragraph{\mobius} We set $λ_1=0.01,\, λ_2=1,\, \omega=0.5$.
We let \mobius run for $10000$ epochs at maximum.

\paragraph{SparseRC} We set $λ_1=0.001,\, λ_2=1,\, λ_3= 0.1,\, \omega=0.3$.
We similarly let SparseRC run for $10000$ epochs at maximum.

\paragraph{\varlingam} The weight threshold is set to $\omega = 0.5$ for \varlingam and $\omega = 0.6$ for \dlingam and \clingam.

\paragraph{DYNOTEARS} The resulting values are $λ_w =0.05,\, λ_a = 0.01,\, \omega =0.3$.

\paragraph{NTS-NOTEARS} The resulting values are $λ_1 =0.001,\, λ_2 = 1,\,\omega = 0.1$. 
The $h_{tol}$ and the dimensions of the neural network were left to default.

\paragraph{tsFCI} Significance level is set to $0.001$ and $\omega = 0.1$
As previously we favor tsFCI in case of ambiguity, using the ground truth.

\paragraph{PCMCI} 
The ParCorr conditional independence test was chosen and parameters are set as $pc_a =0.1,\,a_{level} = 0.01,\, \omega = 0.1$. 
In case of ambiguity, we assume PCMCI made the correct choice.

\paragraph{TCDF} The kernel size and the dilation coefficient are set as number of lags $+1$ ($k+1 = 4$). 
The other parameters are $significance=0.8,\, epochs=1000,\,\omega=0.2$.

\subsubsection{DREAM3 dataset} 
\label{appendix:subsubsec:hyper_dream3}
Here we perform the grid search on the first available dataset of the data (out of the $5$ available) and choose the hyperparameters offering the best AUROC performance. We get the following results.

\paragraph{\mobius} $λ_1=0.001,\, λ_2=10,\, \omega=0.2$.
We let \mobius run for $10000$ epochs at maximum.

\paragraph{SparseRC} $λ_1=0.01,\, λ_2=0.1,\, λ_3= 0.1,\, \omega=0.2$.
We similarly let SparseRC run for $10000$ epochs at maximum.

\paragraph{\varlingam} The weight threshold is set to $\omega = 0.2$ for \varlingam, \dlingam and \clingam

\paragraph{DYNOTEARS} The resulting values are $λ_w =0.05,\, λ_a = 0.01,\, \omega =0.3$.

\paragraph{NTS-NOTEARS} The resulting values are $λ_1 =0.001,\, λ_2 = 0.01,\,\omega = 0.2$. 
The $h_{tol}$ and the dimensions of the neural network were left to default.

\paragraph{tsFCI} Significance level is set to $0.001$ and $\omega = 0.1$

\paragraph{PCMCI}  $pc_a =0.1,\,a_{level} = 0.01,\, \omega = 0.1$. 

\paragraph{TCDF} The kernel size and the dilation coefficient are set as number of lags $+1$ ($k+1 = 4$). 
The other parameters are $significance=0.8,\, epochs=1000,\,\omega=0.2$.

\subsection{Compute resources}
\label{appendix:exp:compute}

Our experiments were run on a single laptop machine (Dell Alienware x17 R2) with 8 core CPU with 32GB RAM and an NVIDIA GeForce RTX 3080 GPU. 
The execution of the synthetic experiments for the $5$ repetitions amounts to approximately 1 week of full run. 
Of course, initially there were some failed experiments, and after debugging the experiments were executed for only $1$ repetition to determine where each method has a time-out. 
We thus chose the time-out to $10000$ to try to make our experiments with as little cost as possible. 

\subsection{Code resources}
\label{appendix:exp:code_resources}


For the implementation of the methods in our experiments we use the following publicly available repositories or websites. All github repositories are licensed under the Apache 2.0 or MIT license, except tigramite and TCDF which are under the GPL-3.0 license.

\paragraph{SparseRC} SparseRC code \href{https://github.com/pmisiakos/SparseRC/}{https://github.com/pmisiakos/SparseRC/}. (MIT license)

\paragraph{\varlingam} We use the official \href{https://lingam.readthedocs.io/en/latest/index.html}{LiNGAM} repo which we clone from github: \href{https://github.com/cdt15/lingam}{https://github.com/cdt15/lingam}. (MIT license)

\paragraph{\clingam} \citet{akinwande2024acceleratedlingam} provide the following github repo: \href{https://github.com/aknvictor/culingam}{https://github.com/aknvictor/culingam}. (MIT license)

\paragraph{DYNOTEARS} Code is available from the CausalNex library of QuantumBlack. The code is at \href{https://github.com/mckinsey/causalnex/blob/develop/causalnex/structure/dynotears.py}{https://github.com/mckinsey/causalnex/blob/develop/causalnex/structure/dynotears.py} (Apache 2.0 license)

\paragraph{NTS-NOTEARS} We use the github code \href{https://github.com/xiangyu-sun-789/NTS-NOTEARS}{https://github.com/xiangyu-sun-789/NTS-NOTEARS} provided by~\citet{sun2023ntsnotears}. (Apache 2.0 license)

\paragraph{tsFCI} We use the R implementation from Doris Entner \href{https://sites.google.com/site/dorisentner/publications/tsfci}{website} which in turn utilizes the \href{tetrad}{https://www.cmu.edu/dietrich/philosophy/tetrad/}. Tetrad is licensed under the GNU General Public License v2.0. We also used the repository \href{https://github.com/ckassaad/causal_discovery_for_time_series}{https://github.com/ckassaad/causal\_discovery\_for\_time\_series} corresponding to the causal time series survey~\citep{assaad2022survey} (no license available). 

\paragraph{PCMCI} We use the \href{https://github.com/jakobrunge/tigramite/blob/master/tigramite/pcmci.py}{PCMCI implementation} from~\citep{runge2019PCMCI} within the 
\href{https://github.com/jakobrunge/tigramite}{tigramite} package. (GNU General Public License v3.0)

\paragraph{TCDF} We use the repository \href{https://github.com/M-Nauta/TCDF}{https://github.com/M-Nauta/TCDF} from~\citet{nauta2019TCDF}. (GNU General Public License v3.0)

\paragraph{eSRU} We use the repository \href{https://github.com/iancovert/Neural-GC}{https://github.com/iancovert/Neural-GC} from~\citet{khanna2019eSRU}. (MIT License)

\subsection{Data resources}
\label{appendix:exp:data_resources}
\paragraph{Simulated financial time series} We take the data from \href{http://www.skleinberg.org/data.html}{http://www.skleinberg.org/data.html} licensed under CC BY-NC 3.0

\paragraph{S\&P 500 stock returns} The data are downloaded using \textit{yahoofinancials} python library.


\begin{thebibliography}{79}
\providecommand{\natexlab}[1]{#1}
\providecommand{\url}[1]{\texttt{#1}}
\expandafter\ifx\csname urlstyle\endcsname\relax
  \providecommand{\doi}[1]{doi: #1}\else
  \providecommand{\doi}{doi: \begingroup \urlstyle{rm}\Url}\fi

\bibitem[Akinwande and Kolter(2024)]{akinwande2024acceleratedlingam}
Victor Akinwande and J~Zico Kolter.
\newblock {AcceleratedLiNGAM: Learning Causal DAGs at the speed of GPUs}.
\newblock \emph{arXiv preprint arXiv:2403.03772}, 2024.

\bibitem[Assaad et~al.(2022{\natexlab{a}})Assaad, Devijver, and Gaussier]{assaad2022PC-GCE}
Charles~K Assaad, Emilie Devijver, and Eric Gaussier.
\newblock {Discovery of extended summary graphs in time series}.
\newblock In \emph{Uncertainty in Artificial Intelligence}, pages 96--106. PMLR, 2022{\natexlab{a}}.

\bibitem[Assaad et~al.(2022{\natexlab{b}})Assaad, Devijver, and Gaussier]{assaad2022survey}
Charles~K Assaad, Emilie Devijver, and Eric Gaussier.
\newblock {Survey and Evaluation of Causal Discovery Methods for Time Series}.
\newblock \emph{Journal of Artificial Intelligence Research}, 73:\penalty0 767--819, 2022{\natexlab{b}}.

\bibitem[Assaad et~al.(2021)Assaad, Devijver, Gaussier, and Ait-Bachir]{assaad2021NBCB}
Karim Assaad, Emilie Devijver, Eric Gaussier, and Ali Ait-Bachir.
\newblock {A Mixed Noise and Constraint-Based Approach to Causal Inference in Time Series}.
\newblock In \emph{Machine Learning and Knowledge Discovery in Databases. Research Track: European Conference, ECML PKDD 2021, Bilbao, Spain, September 13--17, 2021, Proceedings, Part I 21}, pages 453--468. Springer, 2021.

\bibitem[Babacan et~al.(2009)Babacan, Molina, and Katsaggelos]{babacan2009bayesianComprSensing}
S~Derin Babacan, Rafael Molina, and Aggelos~K Katsaggelos.
\newblock Bayesian compressive sensing using laplace priors.
\newblock \emph{IEEE Transactions on image processing}, 19\penalty0 (1):\penalty0 53--63, 2009.

\bibitem[Bassett~Jr and Koenker(1978)]{bassett1978asymptotic_theoryLAD}
Gilbert Bassett~Jr and Roger Koenker.
\newblock Asymptotic theory of least absolute error regression.
\newblock \emph{Journal of the American Statistical Association}, 73\penalty0 (363):\penalty0 618--622, 1978.

\bibitem[Bellot et~al.(2022)Bellot, Branson, and van~der Schaar]{bellot2021neuralgraphmodelling}
Alexis Bellot, Kim Branson, and Mihaela van~der Schaar.
\newblock Neural graphical modelling in continuous-time: consistency guarantees and algorithms.
\newblock In \emph{International Conference on Learning Representations}, 2022.

\bibitem[Bussmann et~al.(2021)Bussmann, Nys, and Latr{\'e}]{bussmann2021NAVAR}
Bart Bussmann, Jannes Nys, and Steven Latr{\'e}.
\newblock {Neural Additive Vector Autoregression Models for Causal Discovery in Time Series}.
\newblock In \emph{Discovery Science: 24th International Conference, DS 2021, Halifax, NS, Canada, October 11--13, 2021, Proceedings 24}, pages 446--460. Springer, 2021.

\bibitem[Castillo et~al.(2015)Castillo, Schmidt-Hieber, and Van~der Vaart]{castillo2015bayesiansparseregression}
Isma{\"e}l Castillo, Johannes Schmidt-Hieber, and Aad Van~der Vaart.
\newblock Bayesian linear regression with sparse priors.
\newblock \emph{The Annals of Statistics}, pages 1986--2018, 2015.

\bibitem[Chai et~al.(2019)Chai, Du, Liu, and Lee]{chai2019GeneralGaussianDistRegression}
Li~Chai, Jun Du, Qing-Feng Liu, and Chin-Hui Lee.
\newblock Using generalized gaussian distributions to improve regression error modeling for deep learning-based speech enhancement.
\newblock \emph{IEEE/ACM Transactions on Audio, Speech, and Language Processing}, 27\penalty0 (12):\penalty0 1919--1931, 2019.

\bibitem[Cheng et~al.(2024)Cheng, Li, Xiao, Li, Suo, He, and Dai]{cheng2024cuts+}
Yuxiao Cheng, Lianglong Li, Tingxiong Xiao, Zongren Li, Jinli Suo, Kunlun He, and Qionghai Dai.
\newblock {CUTS+: High-dimensional Causal Discovery from Irregular Time-series}.
\newblock In \emph{Proceedings of the AAAI Conference on Artificial Intelligence}, volume~38, pages 11525--11533, 2024.

\bibitem[Chickering et~al.(2004)Chickering, Heckerman, and Meek]{chickering2004nphard}
Max Chickering, David Heckerman, and Chris Meek.
\newblock {Large-Sample Learning of Bayesian Networks is NP-Hard}.
\newblock \emph{Journal of Machine Learning Research}, 5:\penalty0 1287--1330, 2004.

\bibitem[Eltoft et~al.(2006)Eltoft, Kim, and Lee]{eltoft2006multivariatelaplacedist}
Torbj{\o}rn Eltoft, Taesu Kim, and Te-Won Lee.
\newblock On the multivariate laplace distribution.
\newblock \emph{IEEE Signal Processing Letters}, 13\penalty0 (5):\penalty0 300--303, 2006.

\bibitem[Entner and Hoyer(2010)]{entner2010tsFCI}
Doris Entner and Patrik~O Hoyer.
\newblock {On Causal Discovery from Time Series Data using FCI}.
\newblock \emph{Probabilistic graphical models}, pages 121--128, 2010.

\bibitem[Fama(1970)]{fama1970efficient}
Eugene~F Fama.
\newblock {Efficient Capital Markets: A Review of Theory and Empirical Work}.
\newblock \emph{Journal of finance}, 25\penalty0 (2):\penalty0 383--417, 1970.

\bibitem[Fiorentini and Sentana(2023)]{fiorentini2023pseudoMLE_SVAR}
Gabriele Fiorentini and Enrique Sentana.
\newblock Discrete mixtures of normals pseudo maximum likelihood estimators of structural vector autoregressions.
\newblock \emph{Journal of Econometrics}, 235\penalty0 (2):\penalty0 643--665, 2023.

\bibitem[Gao et~al.(2022)Gao, Bhattacharjya, Nelson, Liu, and Yu]{gao2022idyno}
Tian Gao, Debarun Bhattacharjya, Elliot Nelson, Miao Liu, and Yue Yu.
\newblock {IDYNO: Learning Nonparametric DAGs from Interventional Dynamic Data}.
\newblock In \emph{International Conference on Machine Learning}, pages 6988--7001. PMLR, 2022.

\bibitem[Gerhardus and Runge(2020)]{gerhardus2020lpcmci}
Andreas Gerhardus and Jakob Runge.
\newblock {High-recall causal discovery for autocorrelated time series with latent confounders}.
\newblock \emph{Advances in Neural Information Processing Systems}, 33:\penalty0 12615--12625, 2020.

\bibitem[Gong et~al.(2023)Gong, Yao, Zhang, Li, Bi, Du, and Wang]{causal2023temporaloverview}
Chang Gong, Di~Yao, Chuzhe Zhang, Wenbin Li, Jingping Bi, Lun Du, and Jin Wang.
\newblock {Causal Discovery from Temporal Data: An Overview and New Perspectives}.
\newblock KDD '23, page 5803–5804. Association for Computing Machinery, 2023.

\bibitem[Gong et~al.(2015)Gong, Zhang, Schoelkopf, Tao, and Geiger]{gong2015subsampledNGEM}
Mingming Gong, Kun Zhang, Bernhard Schoelkopf, Dacheng Tao, and Philipp Geiger.
\newblock {Discovering Temporal Causal Relations from Subsampled Data }.
\newblock In \emph{International Conference on Machine Learning}, pages 1898--1906. PMLR, 2015.

\bibitem[Gong et~al.(2022)Gong, Jennings, Zhang, and Pawlowski]{gong2022rhino}
Wenbo Gong, Joel Jennings, Cheng Zhang, and Nick Pawlowski.
\newblock Rhino: Deep causal temporal relationship learning with history-dependent noise.
\newblock \emph{arXiv preprint arXiv:2210.14706}, 2022.

\bibitem[Guan and Dy(2009)]{guan2009sparsePCA}
Yue Guan and Jennifer Dy.
\newblock Sparse probabilistic principal component analysis.
\newblock In \emph{Artificial Intelligence and Statistics}, pages 185--192. PMLR, 2009.

\bibitem[Hasan et~al.(2023)Hasan, Hossain, and Gani]{hasan2023causalsurvey}
Uzma Hasan, Emam Hossain, and Md~Osman Gani.
\newblock {A Survey on Causal Discovery Methods for I.I.D. and Time Series Data }.
\newblock \emph{Transactions on Machine Learning Research}, 2023.

\bibitem[He and Sun(2024)]{he2024LAD_sparse_dynamical_systems}
Xin He and ZhongKui Sun.
\newblock Sparse identification of dynamical systems by reweighted l1-regularized least absolute deviation regression.
\newblock \emph{Communications in Nonlinear Science and Numerical Simulation}, 131:\penalty0 107813, 2024.

\bibitem[Horn and Johnson(2012)]{horn2012matrixanalysis}
Roger~A. Horn and Charles~R. Johnson.
\newblock \emph{Matrix analysis}.
\newblock Cambridge university press, 2012.

\bibitem[Hyv{\"a}rinen et~al.(2010)Hyv{\"a}rinen, Zhang, Shimizu, and Hoyer]{hyvarinen2010varlingam}
Aapo Hyv{\"a}rinen, Kun Zhang, Shohei Shimizu, and Patrik~O Hoyer.
\newblock {Estimation of a Structural Vector Autoregression Model Using Non-Gaussianity}.
\newblock \emph{Journal of Machine Learning Research}, 11\penalty0 (5), 2010.

\bibitem[Jiang et~al.(2023)Jiang, Du, Yang, and Deng]{jiang2023RLAD_dynamical_systems}
Feng Jiang, Lin Du, Fan Yang, and Zi-Chen Deng.
\newblock Regularized least absolute deviation-based sparse identification of dynamical systems.
\newblock \emph{Chaos: An Interdisciplinary Journal of Nonlinear Science}, 33\penalty0 (1), 2023.

\bibitem[Jiang and Shimizu(2023)]{varlingam2023linkagesfinance}
Yi~Jiang and Shohei Shimizu.
\newblock {Linkages among the Foreign Exchange, Stock, and Bond Markets in Japan and the United States}.
\newblock In \emph{Causal Analysis Workshop Series}, pages 1--19. PMLR, 2023.

\bibitem[Jing et~al.(2015)Jing, Wang, and Yang]{jing2015sparsematrixfactLaplace}
Liping Jing, Peng Wang, and Liu Yang.
\newblock Sparse probabilistic matrix factorization by laplace distribution for collaborative filtering.
\newblock In \emph{Twenty-Fourth International Joint Conference on Artificial Intelligence}, 2015.

\bibitem[Kalisch and B{\"u}hlman(2007)]{kalisch2007highDimDAGs}
Markus Kalisch and Peter B{\"u}hlman.
\newblock Estimating high-dimensional directed acyclic graphs with the pc-algorithm.
\newblock \emph{Journal of Machine Learning Research}, 8\penalty0 (3), 2007.

\bibitem[Khanna and Tan(2019)]{khanna2019eSRU}
Saurabh Khanna and Vincent~YF Tan.
\newblock {Economy Statistical Recurrent Units For Inferring Nonlinear Granger Causality}.
\newblock In \emph{International Conference on Learning Representations}, 2019.

\bibitem[Kilian(2013)]{kilian2013SVAR}
Lutz Kilian.
\newblock {Structural Vector Autoregressions}.
\newblock In \emph{Handbook of research methods and applications in empirical macroeconomics}, pages 515--554. Edward Elgar Publishing, 2013.

\bibitem[Kim and Anderson(2012)]{kim2012temporal}
Hyoungshick Kim and Ross Anderson.
\newblock Temporal node centrality in complex networks.
\newblock \emph{Physical Review E}, 85\penalty0 (2):\penalty0 026107, 2012.

\bibitem[Kingma and Ba(2014)]{kingma2014adam}
Diederik~P Kingma and Jimmy Ba.
\newblock Adam: A method for stochastic optimization.
\newblock \emph{arXiv preprint arXiv:1412.6980}, 2014.

\bibitem[Kleinberg(2013)]{kleinberg2013finance}
Samantha Kleinberg.
\newblock \emph{{Causality, Probability, and Time}}.
\newblock Cambridge University Press, 2013.

\bibitem[Kumar and Singh(2015)]{kumar2015regression_model_LAD}
Pranesh Kumar and Jai~Narain Singh.
\newblock Regression model estimation using least absolute deviations, least squares deviations and minimax absolute deviations criteria.
\newblock \emph{IJCSEE}, 3\penalty0 (4):\penalty0 2320--4028, 2015.

\bibitem[Lachapelle et~al.(2019)Lachapelle, Brouillard, Deleu, and Lacoste-Julien]{lachapelle2019granDAG}
S{\'e}bastien Lachapelle, Philippe Brouillard, Tristan Deleu, and Simon Lacoste-Julien.
\newblock Gradient-based neural dag learning.
\newblock In \emph{International Conference on Learning Representations}, 2019.

\bibitem[Lanne et~al.(2017)Lanne, Meitz, and Saikkonen]{lanne2017MLE_estimation_SVAR}
Markku Lanne, Mika Meitz, and Pentti Saikkonen.
\newblock Identification and estimation of non-gaussian structural vector autoregressions.
\newblock \emph{Journal of Econometrics}, 196\penalty0 (2):\penalty0 288--304, 2017.

\bibitem[Li and Arce(2004)]{li2004fastMLE_LAD}
Yinbo Li and Gonzalo~R Arce.
\newblock A fast maximum likelihood estimation approach to lad regression.
\newblock In \emph{2004 IEEE International Conference on Acoustics, Speech, and Signal Processing}, volume~2, pages ii--889. IEEE, 2004.

\bibitem[L{\"o}we et~al.(2022)L{\"o}we, Madras, Zemel, and Welling]{lowe2022amortized}
Sindy L{\"o}we, David Madras, Richard Zemel, and Max Welling.
\newblock {Amortized Causal Discovery: Learning to Infer Causal Graphs from Time-Series Data}.
\newblock In \emph{Conference on Causal Learning and Reasoning}, pages 509--525. PMLR, 2022.

\bibitem[L{\"u}tkepohl(2005)]{lutkepohl2005new}
Helmut L{\"u}tkepohl.
\newblock \emph{{New Introduction to Multiple Time Series Analysis}}.
\newblock Springer Science \& Business Media, 2005.

\bibitem[Maekawa and Nakanishi(2023)]{maekawa2023pseudo_log_likelihood_nonGauss}
Koichi Maekawa and Tadashi Nakanishi.
\newblock Estimation of non-gaussian svar models: a pseudo-log-likelihood function approach.
\newblock \emph{Journal of Statistical Computation and Simulation}, 93\penalty0 (11):\penalty0 1830--1850, 2023.

\bibitem[Malinsky and Spirtes(2018)]{malinsky2018SVAR-FCI}
Daniel Malinsky and Peter Spirtes.
\newblock {Causal Structure Learning from Multivariate Time Series in Settings with Unmeasured Confounding}.
\newblock In \emph{Proceedings of 2018 ACM SIGKDD workshop on causal discovery}, pages 23--47. PMLR, 2018.

\bibitem[Marbach et~al.(2009)Marbach, Schaffter, Mattiussi, and Floreano]{marbach2009dream3}
Daniel Marbach, Thomas Schaffter, Claudio Mattiussi, and Dario Floreano.
\newblock Generating realistic in silico gene networks for performance assessment of reverse engineering methods.
\newblock \emph{Journal of computational biology}, 16\penalty0 (2):\penalty0 229--239, 2009.

\bibitem[Marcinkevi{\v{c}}s and Vogt(2020)]{marcinkevivcs2020GVAR}
Ri{\v{c}}ards Marcinkevi{\v{c}}s and Julia~E Vogt.
\newblock {Interpretable Models for Granger Causality Using Self-explaining Neural Networks}.
\newblock In \emph{International Conference on Learning Representations}, 2020.

\bibitem[Misiakos et~al.(2023)Misiakos, Wendler, and P{\"u}schel]{misiakos2024fewrootcauses}
Panagiotis Misiakos, Chris Wendler, and Markus P{\"u}schel.
\newblock {Learning DAGs from Data with Few Root Causes}.
\newblock \emph{Advances in Neural Information Processing Systems}, 36, 2023.

\bibitem[Misiakos et~al.(2024)Misiakos, Mihal, and P{\"u}schel]{misiakos2024icassp}
Panagiotis Misiakos, Vedran Mihal, and Markus P{\"u}schel.
\newblock {Learning Signals and Graphs from Time-Series Graph Data with Few Causes}.
\newblock In \emph{ICASSP 2024-2024 IEEE International Conference on Acoustics, Speech and Signal Processing (ICASSP)}, pages 9681--9685, 2024.

\bibitem[Narula et~al.(1999)Narula, Saldiva, Andre, Elian, Ferreira, and Capelozzi]{narula1999minimumAbsErrorRegression}
Subhash~C Narula, Paulo~HN Saldiva, Carmen~DS Andre, Silvia~N Elian, Aurea~Favero Ferreira, and Vera Capelozzi.
\newblock The minimum sum of absolute errors regression: a robust alternative to the least squares regression.
\newblock \emph{Statistics in medicine}, 18\penalty0 (11):\penalty0 1401--1417, 1999.

\bibitem[Nauta et~al.(2019)Nauta, Bucur, and Seifert]{nauta2019TCDF}
Meike Nauta, Doina Bucur, and Christin Seifert.
\newblock {Causal Discovery with Attention-Based Convolutional Neural Networks}.
\newblock \emph{Machine Learning and Knowledge Extraction}, 1\penalty0 (1):\penalty0 19, 2019.

\bibitem[Newey and McFadden(1994)]{newey1994MLEconsistency}
Whitney~K Newey and Daniel McFadden.
\newblock Large sample estimation and hypothesis testing.
\newblock \emph{Handbook of econometrics}, 4:\penalty0 2111--2245, 1994.

\bibitem[Ng et~al.(2020)Ng, Ghassami, and Zhang]{ng2020GOLEM}
Ignavier Ng, AmirEmad Ghassami, and Kun Zhang.
\newblock {On the Role of Sparsity and DAG Constraints for Learning Linear DAGs}.
\newblock \emph{Advances in Neural Information Processing Systems}, 33:\penalty0 17943--17954, 2020.

\bibitem[Pamfil et~al.(2020)Pamfil, Sriwattanaworachai, Desai, Pilgerstorfer, Georgatzis, Beaumont, and Aragam]{pamfil2020dynotears}
Roxana Pamfil, Nisara Sriwattanaworachai, Shaan Desai, Philip Pilgerstorfer, Konstantinos Georgatzis, Paul Beaumont, and Bryon Aragam.
\newblock {DYNOTEARS: Structure Learning from Time-Series Data}.
\newblock In \emph{International Conference on Artificial Intelligence and Statistics}, pages 1595--1605. PMLR, 2020.

\bibitem[Park(2020)]{park2020conditional}
Gunwoong Park.
\newblock {Identifiability of Additive Noise Models Using Conditional Variances}.
\newblock \emph{J. Mach. Learn. Res.}, 21\penalty0 (75):\penalty0 1--34, 2020.

\bibitem[Peters and B{\"u}hlmann(2014)]{peters2014identifiability}
Jonas Peters and Peter B{\"u}hlmann.
\newblock {Identifiability of Gaussian structural equation models with equal error variances}.
\newblock \emph{Biometrika}, 101\penalty0 (1):\penalty0 219--228, 2014.

\bibitem[Peters and B{\"u}hlmann(2015)]{peters201SID}
Jonas Peters and Peter B{\"u}hlmann.
\newblock Structural intervention distance for evaluating causal graphs.
\newblock \emph{Neural computation}, 27\penalty0 (3):\penalty0 771--799, 2015.

\bibitem[Peters et~al.(2013)Peters, Janzing, and Sch{\"o}lkopf]{peters2013timino}
Jonas Peters, Dominik Janzing, and Bernhard Sch{\"o}lkopf.
\newblock {Causal Inference on Time Series using Structural Equation Models}.
\newblock \emph{Advances in neural information processing systems}, 26, 2013.

\bibitem[Peters et~al.(2017)Peters, Janzing, and Sch{\"o}lkopf]{elementsCausalInference}
Jonas Peters, Dominik Janzing, and Bernhard Sch{\"o}lkopf.
\newblock \emph{{Elements of causal inference: foundations and learning algorithms}}.
\newblock The MIT Press, 2017.

\bibitem[Pollard(1991)]{pollard1991asymptoticsLAD}
David Pollard.
\newblock Asymptotics for least absolute deviation regression estimators.
\newblock \emph{Econometric Theory}, 7\penalty0 (2):\penalty0 186--199, 1991.

\bibitem[Prill et~al.(2010)Prill, Marbach, Saez-Rodriguez, Sorger, Alexopoulos, Xue, Clarke, Altan-Bonnet, and Stolovitzky]{prill2010dream3}
Robert~J Prill, Daniel Marbach, Julio Saez-Rodriguez, Peter~K Sorger, Leonidas~G Alexopoulos, Xiaowei Xue, Neil~D Clarke, Gregoire Altan-Bonnet, and Gustavo Stolovitzky.
\newblock Towards a rigorous assessment of systems biology models: the dream3 challenges.
\newblock \emph{PloS one}, 5\penalty0 (2):\penalty0 e9202, 2010.

\bibitem[Reuters(2023)]{mehta2023nvidia}
Reuters.
\newblock Nvidia shares soar nearly 30\% as sales forecast jumps and ai booms.
\newblock \url{https://www.reuters.com/technology/nvidia-forecasts-second-quarter-revenue-above-estimates-2023-05-24/}, 2023.
\newblock Accessed: 2024-05-21.

\bibitem[Reuters(2024)]{paul2024facebook}
Reuters.
\newblock Facebook parent meta declares first dividend, shares soar.
\newblock \url{https://www.reuters.com/technology/facebook-parent-meta-declares-first-ever-dividend-2024-02-01/}, 2024.
\newblock Accessed: 2024-05-21.

\bibitem[Runge(2020)]{runge2020pcmci+}
Jakob Runge.
\newblock {Discovering contemporaneous and lagged causal relations in autocorrelated nonlinear time series datasets}.
\newblock In \emph{Conference on Uncertainty in Artificial Intelligence}, pages 1388--1397. PMLR, 2020.

\bibitem[Runge et~al.(2019)Runge, Nowack, Kretschmer, Flaxman, and Sejdinovic]{runge2019PCMCI}
Jakob Runge, Peer Nowack, Marlene Kretschmer, Seth Flaxman, and Dino Sejdinovic.
\newblock {Detecting and quantifying causal associations in large nonlinear time series datasets}.
\newblock \emph{Science advances}, 5\penalty0 (11):\penalty0 eaau4996, 2019.

\bibitem[Saikkonen(2001)]{saikkonen2001stability}
Pentti Saikkonen.
\newblock \emph{{Stability results for nonlinear vector autoregressions with an application to a nonlinear error correction model}}.
\newblock Humboldt-Universit{\"a}t zu Berlin, Wirtschaftswissenschaftliche Fakult{\"a}t, 2001.

\bibitem[Seifert et~al.(2023)Seifert, Wendler, and Püschel]{bastiJournalpaper}
Bastian Seifert, Chris Wendler, and Markus Püschel.
\newblock {Causal Fourier Analysis on Directed Acyclic Graphs and Posets}.
\newblock \emph{IEEE Trans.~Signal Process.}, 71:\penalty0 3805--3820, 2023.
\newblock \doi{10.1109/TSP.2023.3324988}.

\bibitem[Shimizu et~al.(2006)Shimizu, Hoyer, Hyvärinen, and Kerminen]{shimizu2006lingam}
Shohei Shimizu, Patrik~O. Hoyer, Aapo Hyvärinen, and Antti Kerminen.
\newblock {A Linear Non-Gaussian Acyclic Model for Causal Discovery}.
\newblock \emph{Journal of Machine Learning Research}, 7\penalty0 (72):\penalty0 2003--2030, 2006.
\newblock URL \url{http://jmlr.org/papers/v7/shimizu06a.html}.

\bibitem[Shimizu et~al.(2011)Shimizu, Inazumi, Sogawa, Hyvarinen, Kawahara, Washio, Hoyer, Bollen, and Hoyer]{shimizu2011directlingam}
Shohei Shimizu, Takanori Inazumi, Yasuhiro Sogawa, Aapo Hyvarinen, Yoshinobu Kawahara, Takashi Washio, Patrik~O Hoyer, Kenneth Bollen, and Patrik Hoyer.
\newblock {DirectLiNGAM: A direct method for learning a linear non-Gaussian structural equation model}.
\newblock \emph{Journal of Machine Learning Research-JMLR}, 12\penalty0 (Apr):\penalty0 1225--1248, 2011.

\bibitem[Sims(1980)]{sims1980comparison}
Christopher~A Sims.
\newblock {Comparison of Interwar and Postwar Business Cycles: Monetarism Reconsidered}, 1980.

\bibitem[Skogestad and Postlethwaite(2005)]{skogestad2005multivariablecontrol}
Sigurd Skogestad and Ian Postlethwaite.
\newblock \emph{{Multivariable Feedback Control: Analysis and Design}}.
\newblock john Wiley \& sons, 2005.

\bibitem[Smith et~al.(2011)Smith, Miller, Salimi-Khorshidi, Webster, Beckmann, Nichols, Ramsey, and Woolrich]{smith2011FMRI}
Stephen~M Smith, Karla~L Miller, Gholamreza Salimi-Khorshidi, Matthew Webster, Christian~F Beckmann, Thomas~E Nichols, Joseph~D Ramsey, and Mark~W Woolrich.
\newblock {Network modelling methods for FMRI}.
\newblock \emph{Neuroimage}, 54\penalty0 (2):\penalty0 875--891, 2011.

\bibitem[Sugihara et~al.(2012)Sugihara, May, Ye, Hsieh, Deyle, Fogarty, and Munch]{sugihara2012CCM}
George Sugihara, Robert May, Hao Ye, Chih-hao Hsieh, Ethan Deyle, Michael Fogarty, and Stephan Munch.
\newblock {Detecting Causality in Complex Ecosystems}.
\newblock \emph{science}, 338\penalty0 (6106):\penalty0 496--500, 2012.

\bibitem[Sun et~al.(2023)Sun, Schulte, Liu, and Poupart]{sun2023ntsnotears}
Xiangyu Sun, Oliver Schulte, Guiliang Liu, and Pascal Poupart.
\newblock {NTS-NOTEARS: Learning Nonparametric DBNs With Prior Knowledge}.
\newblock In \emph{International Conference on Artificial Intelligence and Statistics}, pages 1942--1964. PMLR, 2023.

\bibitem[Sutherland(2009)]{sutherland2009topologicalspaces}
Wilson~A Sutherland.
\newblock \emph{Introduction to metric and topological spaces}.
\newblock Oxford University Press, 2009.

\bibitem[Tank et~al.(2021)Tank, Covert, Foti, Shojaie, and Fox]{tank2021neuralGranger}
Alex Tank, Ian Covert, Nicholas Foti, Ali Shojaie, and Emily~B Fox.
\newblock {Neural Granger Causality}.
\newblock \emph{IEEE Transactions on Pattern Analysis and Machine Intelligence}, 44\penalty0 (8):\penalty0 4267--4279, 2021.

\bibitem[Tibshirani(1996)]{tibshirani1996LASSOregression}
Robert Tibshirani.
\newblock Regression shrinkage and selection via the lasso.
\newblock \emph{Journal of the Royal Statistical Society Series B: Statistical Methodology}, 58\penalty0 (1):\penalty0 267--288, 1996.

\bibitem[Vowels et~al.(2021)Vowels, Camgoz, and Bowden]{dyalikedags}
Matthew~J Vowels, Necati~Cihan Camgoz, and Richard Bowden.
\newblock {D’ya like DAGs? A survey on structure learning and causal discovery}.
\newblock \emph{ACM Computing Surveys (CSUR)}, 2021.

\bibitem[Xu et~al.(2019)Xu, Huang, and Yoo]{xu2019SCGL}
Chenxiao Xu, Hao Huang, and Shinjae Yoo.
\newblock {Scalable Causal Graph Learning through a Deep Neural Network}.
\newblock In \emph{Proceedings of the 28th ACM international conference on information and knowledge management}, pages 1853--1862, 2019.

\bibitem[Yang et~al.(2022)Yang, Wang, Wang, and Lai]{yang2022heatUS}
Xueli Yang, Zhi-Hua Wang, Chenghao Wang, and Ying-Cheng Lai.
\newblock {Detecting the causal influence of thermal environments among climate regions in the United States}.
\newblock \emph{Journal of Environmental Management}, 322:\penalty0 116001, 2022.

\bibitem[Zheng et~al.(2018)Zheng, Aragam, Ravikumar, and Xing]{zheng2018notears}
Xun Zheng, Bryon Aragam, Pradeep~K Ravikumar, and Eric~P Xing.
\newblock {DAGs with NO TEARS: Continuous Optimization for Structure Learning}.
\newblock \emph{Advances in Neural Information Processing Systems}, 31, 2018.

\end{thebibliography}
\end{document}